\documentclass[12pt]{article}
\usepackage{natbib}
\usepackage{enumitem}
\usepackage[hidelinks]{hyperref}
\usepackage{amsmath}
\allowdisplaybreaks[3]
\usepackage{graphicx}
\usepackage{placeins}
\usepackage[font=small,labelfont=bf]{caption}

	\usepackage[margin=1in]{geometry}
	\hypersetup{
		pdftitle={The Noise Premium in Adversarial Training for Kernel Regression},
		pdfauthor={Yiling Xie and Xiaoming Huo}
	}

\usepackage{amsthm}
\newtheorem{theorem}{Theorem}[section]
\newtheorem{lemma}[theorem]{Lemma}
\newtheorem{corollary}[theorem]{Corollary}
\newtheorem{proposition}[theorem]{Proposition}
\newtheorem{remark}[theorem]{Remark}
\newtheorem{assumption}[theorem]{Assumption}
\usepackage{amsfonts}
\usepackage{amssymb}

\newcommand{\E}{\mathbb{E}}
\newcommand{\R}{\mathbb{R}}
\newcommand{\Hk}{\mathcal{H}_k}
\newcommand{\norm}[1]{\left\lVert #1\right\rVert}
\newcommand{\inner}[2]{\left\langle #1,#2\right\rangle}
\newcommand{\argmin}{\operatornamewithlimits{argmin}}
\newcommand{\sgn}{\operatorname{sgn}}
\newcommand{\proofstep}[1]{\par\medskip\noindent\textit{#1}\par\smallskip}
\begin{document}
	
	\def\spacingset#1{\renewcommand{\baselinestretch}%
		{#1}\small\normalsize} \spacingset{1}
	
	
	\title{\bf The Noise Premium in Adversarial Training for Kernel Regression}
	\author{Yiling Xie\textsuperscript{1} \qquad Xiaoming Huo\textsuperscript{2}\\[0.5em]
		\small \textsuperscript{1}Department of Decision Analytics and Operations,
		City University of Hong Kong\\
		\small \textsuperscript{2}H. Milton Stewart School of Industrial and Systems Engineering\\
		\small Georgia Institute of Technology\\[0.3em]
		\small \texttt{yiling.xie@cityu.edu.hk} \qquad \texttt{huo@gatech.edu}}
	\date{}
	\maketitle
	
	\begin{abstract}
			Adversarial training can improve the robustness of predictive models to bounded perturbations, often at the cost of statistical efficiency. We study this trade-off in kernel regression over a reproducing kernel Hilbert space (RKHS). It is shown that, under squared loss, adversarial training in RKHS introduces a term involving the product of the function norm with the mean absolute value of the response noise, which we call the \textit{noise premium}. Our analysis shows that the noise premium makes the prediction error of adversarial training converge strictly more slowly than the nonparametric minimax benchmark even after balancing approximation and estimation errors. Moreover, for a fixed perturbation budget, once the budget exceeds a certain threshold, the solution to adversarial training collapses to the zero function. To mitigate these effects of the noise premium, we propose noise-debiased adversarial training. The resulting noise-debiased estimator can attain the minimax optimal rate up to a logarithmic factor for the prediction error, raises the collapse threshold, and admits an explicit bound on the increase in adversarial loss. Numerical experiments on synthetic and real data support the theoretical findings and validate the effectiveness of the proposed  noise-debiased method.
	\end{abstract}
	\noindent\textbf{Keywords:} adversarial robustness, kernel integral operator, reproducing kernel Hilbert space, nonparametric
	
	\spacingset{1.8}
	\section{Introduction}\label{sec:introduction}
	
		Adversarial training improves model robustness to bounded perturbations by minimizing
		the worst-case loss over a prescribed perturbation neighborhood
		\citep{goodfellow2015explaining,madry2018towards}.  This protection can come
		at the cost of lower accuracy on unperturbed data, giving rise to the
		robustness--accuracy trade-off
		\citep{tsipras2019robustness,zhang2019theoretically}.  In this paper, we study
		this trade-off in nonparametric kernel regression.

		We study this trade-off through feature-space adversarial training in the
		reproducing kernel Hilbert space (RKHS).  Following the formulation in
		\citet{shafieezadeh2019regularization} and \citet{ribeiro2025kernel}, the
		adversary perturbs each kernel feature within an RKHS ball of radius
		\(\delta\).  Notably, the resulting problem is a tractable convex surrogate for
		input-space adversarial training.  Whenever admissible input perturbations
		change the kernel feature
		by at most \(\delta\) in RKHS norm, the feature-space objective upper-bounds
		the corresponding input-space adversarial loss.  For a linear kernel,
		feature-space and input-space adversarial training coincide.  The RKHS formulation also permits a spectral analysis
		of prediction error through the associated kernel integral operator.

		More specifically, we consider the kernel regression model
		\(Y=f^*(X)+\varepsilon\), where \(f^*\) belongs to the reproducing kernel
		Hilbert space \(\mathcal H_k\) induced by a kernel \(k\), and
		\(\varepsilon\) denotes the response noise. Under squared loss,
		feature-space adversarial training with perturbation radius \(\delta\)
		is equivalent to minimizing
		\begin{equation}
			\label{eq:intro-expansion}
			\frac1n\sum_{i=1}^n(Y_i-f(X_i))^2
			+2\delta\norm{f}_{\Hk}\frac1n\sum_{i=1}^n|Y_i-f(X_i)|
			+\delta^2\norm{f}_{\Hk}^2
		\end{equation}
		over \(f\in\Hk\). The middle term couples the RKHS norm with the mean absolute residual. At the regression function $f^\ast$, the residual is \(\varepsilon\), so the associated population objective contains
		$2\delta\E|\varepsilon|\norm{f}_{\Hk}$.
		We call this noise-dependent norm penalty the \emph{noise premium}. The noise premium is first order in \(\delta\), does not vanish at \(f^\ast\), and acts on a different scale from the quadratic penalty \(\delta^2\norm{f}_{\Hk}^2\). We aim to study the statistical consequences of the noise premium.

		To quantify the statistical effect of the noise premium, we analyze the prediction error through the associated kernel integral operator, decomposing the prediction error into approximation and estimation errors and characterizing their dependence on the perturbation radius, sample size, source smoothness, and kernel spectrum. The resulting upper bounds, together with
		a matching lower bound, establish a sharp prediction
		rate.  Even when the perturbation radius balances approximation and estimation
		errors, the noise premium makes adversarial training converge strictly more
		slowly than the classical nonparametric minimax benchmark
		\citep{caponnetto2007optimal}.  The rate gap comes from the population
		approximation error induced by the noise premium.  At a fixed perturbation
		radius, the noise premium lowers the population threshold for signal
		collapse.  We show that the collapse is governed
		by the signal strength relative to the absolute response level, which includes
		irreducible response noise.  Once the radius exceeds the threshold, the population
		solution is the zero function.  Increasing the sample size reduces estimation
		error but does not alter this population threshold.

		Our analysis of the noise premium motivates a two-stage noise-debiased
		procedure. The first stage estimates the mean absolute response noise from
		independent data, and the second subtracts the estimated contribution from the
		adversarial objective. The resulting estimator attains the nonparametric
		minimax prediction rate up to a logarithmic factor and raises the population
		collapse threshold. We also derive an explicit bound on the increase in
		adversarial loss, which quantifies the associated robustness cost. The resulting problem
	    has an equivalent finite-dimensional convex formulation, and
		synthetic and real-data experiments support the theoretical results.
	
	\subsection{Related work}
Statistical analyses of adversarial training have primarily focused on
parametric models
\citep{schmidt2018adversarially,yin2019rademacher,tsipras2019robustness,
	zhang2019theoretically,javanmard2020precise,ribeiro2023regularization,
	xie2024highdimensional,xie2025asymptotic}. These works study sample
complexity, precise asymptotics, robustness--accuracy trade-offs, and minimax
optimality. For example, in high-dimensional
linear regression, adversarial training can attain the minimax prediction
rate \citep{xie2024highdimensional}. Interestingly, in the nonparametric RKHS
setting studied in this paper, our spectral analysis shows that the noise
premium makes the prediction rate of adversarial training strictly slower than
the classical nonparametric minimax rate within the smoothness regime considered.
	
	Robust learning in the RKHS and nonparametric regression has also been studied
	under several perturbation models.  Mass-transport distributional robustness
	leads to regularized kernel procedures
	\citep{shafieezadeh2019regularization}.
	\citet{ribeiro2025kernel} derive the feature-space reformulation studied in this paper and
	analyze the fixed-design behavior.
	\citet{peng2025adversarial} establish minimax rates over H\"older classes
	under test-time input perturbations, whereas \citet{moradi2025adversarial}
	study smoothing splines under adversarial corruption of the training
	responses.  
	In this paper, we instead study the \(L^2(\mu)\) prediction error of  adversarial training over RKHS under random design.
	
Various methods have been proposed to mitigate the robustness--accuracy
trade-off, including modified adversarial training objectives and approaches
that leverage additional unlabeled data
\citep{zhang2019theoretically,raghunathan2020understanding}. These methods
have largely been studied in classification or finite-dimensional models.
	In nonparametric RKHS regression, our noise-debiased procedure removes the
	response-noise component while leaving the rest of the adversarial objective
	unchanged.
	
Finally, our analysis builds on the statistical theory of kernel learning.
Kernel integral operators provide a standard framework for describing source
smoothness, spectral decay, effective dimension, and prediction rates in
nonparametric regression
\citep{caponnetto2007optimal,mendelson2010regularization,
	fischer2020sobolev,smale2007learning,wu2006learning,
	zhang2025optimal}. This operator-theoretic framework has been less developed
for the analysis of adversarial training. We use kernel spectral techniques
to separate the approximation and estimation errors, quantify their
dependence on the perturbation radius, and derive the resulting prediction
rate.

	\subsection{Notation}
	We write \(L^2(\mu)\) for the space of square-integrable functions with
	\(\norm{f}_{L^2(\mu)}^2=\int_{\mathcal X}f(x)^2\,d\mu(x)\).  The norm and
	inner product of \(\Hk\) are denoted by \(\norm{f}_{\Hk}\) and
	\(\inner{f}{h}_{\Hk}\).  For a bounded linear operator \(A\) between Hilbert
	spaces, \(\norm{A}_{\mathrm{op}}\) denotes its operator norm, with the
	underlying spaces determined by context.  For finite-dimensional vectors,
	\(\norm{\cdot}_2\) denotes the Euclidean norm.  We write \(a\lesssim b\) if
	\(a\le Cb\), and \(a\asymp b\) when each quantity is bounded by a constant
	multiple of the other.  The letters \(c\) and \(C\) denote positive
	constants.  Their values may change between derivations but are fixed within
	each theorem statement.

In the rest of this paper,	Sections~\ref{sec:problem}--\ref{sec:collapse} develop the formulation and
	theory, Section~\ref{sec:num} presents the numerical experiments, and all
	proofs are provided in the Appendix.
	
	\section{Problem Formulation}\label{sec:problem}
	This section formulates adversarial training in RKHS and states the
	assumptions used in the analysis.
	\subsection{Kernel setup and assumptions}
	Let \(\mu\) denote the distribution of \(X\) on a measurable input space
	\(\mathcal X\), and let
	\(k:\mathcal X\times\mathcal X\to\mathbb R\) be a measurable
	positive-definite kernel.
	Let \(\Hk\) be the separable reproducing kernel Hilbert space associated with \(k\).
	For \(x\in\mathcal X\), write \(k_x:=k(x,\cdot)\in\Hk\). By the
	reproducing property,
	\[
	f(x)=\langle f,k_x\rangle_{\Hk},
	\qquad
	\norm{k_x}_{\Hk}^2=k(x,x),
	\qquad f\in\Hk.
	\]
	Throughout the paper, we assume that \(k\) is uniformly bounded:
	\(\sup_{x\in\mathcal X}k(x,x)\le1\).
	The kernel integral operator \(T_k:L^2(\mu)\to L^2(\mu)\) associated with
	\(k\) is defined by
	\begin{equation}\label{def:kernel-integral-operator}
		(T_k h)(x)
		=
		\int_{\mathcal X} k(x,x')h(x')\,d\mu(x'),
		\qquad h\in L^2(\mu).
	\end{equation}
	We use the same notation for the associated RKHS realization
	\(T_kh=\E[h(X)k_X]\in\Hk\), which satisfies
	\(\inner{T_kh}{f}_{\Hk}=\E[h(X)f(X)]\) for \(f\in\Hk\).
	The two
	realizations have the same positive eigenvalues, so the spectral and
	effective-dimension notation below is unambiguous.
	Since \(T_k\) is positive, self-adjoint, and compact, let
	\((\sigma_j,\varphi_j)_{j\ge1}\) denote its positive eigenpairs, with
	\[
	T_k\varphi_j=\sigma_j\varphi_j,
	\qquad
	\sigma_1\ge \sigma_2\ge\cdots>0,
	\]
	where \((\varphi_j)_{j\ge1}\) is orthonormal in \(L^2(\mu)\). For any
	\(\alpha>0\), the fractional power of \(T_k\) is defined spectrally by
	\begin{equation}\label{spectralcompute}
		T_k^\alpha h
		=
		\sum_{j\ge1}
		\sigma_j^\alpha
		\langle h,\varphi_j\rangle_{L^2(\mu)}
		\varphi_j,
		\qquad h\in L^2(\mu).
	\end{equation}
	All spectral expansions are understood on the \(L^2(\mu)\)-closure of
	\(\Hk\).  In particular, if \(h=\sum_{j\ge1}h_j\varphi_j\in\Hk\), then
	\[
	\norm{h}_{\Hk}^2
	=
	\sum_{j\ge1}\frac{h_j^2}{\sigma_j}.
	\]

	We impose the following assumptions on the ground-truth data generating mechanism and the noise
	distribution.
	
	\begin{assumption}[Data-generating mechanism]\label{ass:model}
		The response \(Y\) follows the additive regression model \(Y=f^*(X)+\varepsilon\),
		where \(f^*\in\Hk\), and the noise \(\varepsilon\) is square integrable,
		independent of \(X\), and satisfies \(\E[\varepsilon]=0\).
	\end{assumption}

	\begin{assumption}[Quadratic behavior of the absolute risk]\label{ass:noise}
		For some \(0<\kappa<\infty\),
		\[
		0
		\le
		\E|\varepsilon-t|-\E|\varepsilon|
		\le
		\kappa t^2,
		\qquad t\in\R.
		\]
	\end{assumption}
	
	\begin{assumption}[Noise tails]\label{ass:noise-tail}
		For some \(\sigma<\infty\) and every integer \(p\ge2\),
		\[
		\E|\varepsilon|^p\le \frac{p!}{2}\sigma^p.
		\]
	\end{assumption}
	
	Assumption~\ref{ass:model} specifies an additive regression model with
	square-integrable, independent, and mean-zero noise. Assumption~\ref{ass:noise}
	states that the excess absolute risk grows at most quadratically around its
	minimum at zero.  It holds, for example, when the noise has median zero and
	a uniformly bounded density.
	Together, Assumptions~\ref{ass:model} and~\ref{ass:noise} imply
	\(\E|\varepsilon|>0\), so the noiseless model is excluded.
	Assumption~\ref{ass:noise-tail} provides the moment control needed to apply
	Bernstein-type concentration inequalities to the stochastic error terms.

	We then impose two assumptions on the kernel integral operator $T_k$. The source
	condition quantifies the smoothness of \(f^*\) through the spectral structure
	of \(T_k\), whereas the eigenvalue-decay condition controls the spectral
	complexity induced jointly by the kernel \(k\) and the input distribution
	\(\mu\). These conditions will determine the learning rates and are widely used in
	the kernel learning literature; see, e.g.,
	\citet{caponnetto2007optimal}.

	\begin{assumption}[Source condition]\label{ass:source}
		For fixed \(0<\alpha\le1\) and \(R>0\), there exists \(g\in\Hk\) such that
		\[
		f^*=T_k^\alpha g,
		\qquad
		\norm{g}_{\Hk}\le R,
		\]
		where \(T_k^\alpha\) is defined by spectral calculus~\eqref{spectralcompute}.
	\end{assumption}
	The upper bounds stated under Assumption~\ref{ass:source} are uniform over
	all \(g\in\Hk\) with \(\norm{g}_{\Hk}\le R\). Their constants do not depend
	on the particular choice of \(g\).
	
	\begin{assumption}[Polynomial eigenvalue decay]\label{ass:eigendecay}
		We assume that the positive
		eigenvalues of \(T_k\) satisfy
		\[
		c_\beta j^{-\beta}
		\le
		\sigma_j
		\le
		C_\beta j^{-\beta},
		\qquad j\ge 1,
		\]
		for some constants \(0<c_\beta\le C_\beta<\infty\) and \(\beta>1\).
	\end{assumption}
	
	\subsection{Adversarial training in an RKHS}
	
	Given i.i.d. observations \((X_i,Y_i)_{i=1}^n\) and an adversarial robustness level
	\(\delta>0\), this paper studies the adversarial training problem in an RKHS as follows:
	\begin{equation}
		\label{eq:rkhs-at-minimax}
		\min_{f\in\Hk}
		\max_{\substack{
				\Delta_1,\ldots,\Delta_n\in\Hk\\
				\norm{\Delta_i}_{\Hk}\le\delta,\ i=1,\ldots,n
		}}
		\frac1n\sum_{i=1}^n
		\left(
		Y_i-
		\left\langle f,k_{X_i}+\Delta_i\right\rangle_{\Hk}
		\right)^2.
	\end{equation}
	The following proposition gives an equivalent reformulation of
	problem~\eqref{eq:rkhs-at-minimax}.
	\begin{proposition}[Equivalent reformulation, \citet{ribeiro2025kernel}]
		\label{prop:minimax-equivalence}
		For every \(f\in\Hk\),
		\[
		\begin{aligned}
			\max_{\substack{
					\Delta_1,\ldots,\Delta_n\in\Hk\\
					\norm{\Delta_i}_{\Hk}\le\delta,\ i=1,\ldots,n
			}}
			\frac1n\sum_{i=1}^n
			\left(
			Y_i-
			\left\langle f,k_{X_i}+\Delta_i\right\rangle_{\Hk}
			\right)^2=
			\frac1n\sum_{i=1}^n
			\left(
			|Y_i-f(X_i)|+\delta\norm{f}_{\Hk}
			\right)^2.
		\end{aligned}
		\]
		Consequently, problem~\eqref{eq:rkhs-at-minimax} is equivalent to
		\begin{equation}
			\label{problem}
			\min_{f\in\Hk}
			\frac1n\sum_{i=1}^n
			\left(
			|Y_i-f(X_i)|+\delta\norm{f}_{\Hk}
			\right)^2.
		\end{equation}
	\end{proposition}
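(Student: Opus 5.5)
The maximization over \((\Delta_1,\ldots,\Delta_n)\) decouples across \(i\): the constraint set is a product of balls and the objective is a sum of terms, the \(i\)th depending only on \(\Delta_i\). It therefore suffices to show, for each fixed \(i\) and each \(f\in\Hk\),
\[
\max_{\norm{\Delta}_{\Hk}\le\delta}\left(Y_i-f(X_i)-\inner{f}{\Delta}_{\Hk}\right)^2=\left(|Y_i-f(X_i)|+\delta\norm{f}_{\Hk}\right)^2,
\]
where the reproducing property \(\inner{f}{k_{X_i}}_{\Hk}=f(X_i)\) has already been used to split the inner product.

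Write \(r_i=Y_i-f(X_i)\) and \(s=\inner{f}{\Delta}_{\Hk}\). For the upper bound, Cauchy--Schwarz gives \(|s|\le\norm{f}_{\Hk}\norm{\Delta}_{\Hk}\le\delta\norm{f}_{\Hk}\). Hence
\[
|r_i-s|\le|r_i|+|s|\le|r_i|+\delta\norm{f}_{\Hk},
\]
and squaring preserves the inequality since both sides are nonnegative.

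For attainment, the worst case is \(s=-\sgn(r_i)\,\delta\norm{f}_{\Hk}\), which pushes the residual away from zero. If \(f\neq0\), take
\[
\Delta_i=-\sgn(r_i)\,\delta\,\frac{f}{\norm{f}_{\Hk}},
\]
using the convention \(\sgn(0)=1\). This choice satisfies \(\norm{\Delta_i}_{\Hk}=\delta\) and gives \(s=-\sgn(r_i)\delta\norm{f}_{\Hk}\), so
\[
|r_i-s|=|r_i|+\delta\norm{f}_{\Hk}.
\]
If \(f=0\), both sides equal \(r_i^2\) for any admissible \(\Delta_i\). The maximum is thus attained and equals the claimed value.

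Summing over \(i\) and dividing by \(n\) yields the displayed identity for every \(f\in\Hk\). Problem~\eqref{eq:rkhs-at-minimax} minimizes the left-hand side over \(f\); since the two objectives coincide pointwise in \(f\), the minimization problems have the same minimizers and optimal value, which is~\eqref{problem}.

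No step is a real obstacle. The only points needing care are that the maximization separates across \(i\) because the perturbation ball is a product, and the degenerate cases \(f=0\) and \(r_i=0\) in the choice of the maximizer.
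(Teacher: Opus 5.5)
Your proof is correct and complete. Decoupling over the product of balls, the Cauchy--Schwarz bound \(|r_i-\inner{f}{\Delta_i}_{\Hk}|\le|r_i|+\delta\norm{f}_{\Hk}\), and attainment at \(\Delta_i=-\sgn(r_i)\,\delta f/\norm{f}_{\Hk}\), with \(f=0\) and \(r_i=0\) handled separately, give the identity; pointwise equality of the objectives then gives equivalence of the two minimization problems. The paper gives no proof of its own and simply cites \citet{ribeiro2025kernel}; yours is the standard argument for this result, so there is nothing to add.
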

	
	We next interpret the robustness encoded by
	problem~\eqref{eq:rkhs-at-minimax}. The formulation is directly robust to
	perturbations of the kernel representation, allowing each \(k_{X_i}\) to be
	independently perturbed within an RKHS ball of radius \(\delta\). It also
	provides robustness to input-space perturbations whenever the induced feature
	perturbations satisfy \(\norm{k_{X_i'}-k_{X_i}}_{\Hk}\le\delta\).
	More specifically, let \(\mathcal U(X_i)\subseteq\mathcal X\) denote the set of admissible
	perturbations around \(X_i\), and suppose that
	\[
	\sup_{x'\in\mathcal U(X_i)}
	\norm{k_{x'}-k_{X_i}}_{\Hk}
	\le\delta,
	\qquad i=1,\ldots,n.
	\]
	Then, for every \(f\in\Hk\),
	\[
	\begin{aligned}
		&\sup_{\substack{
				X_i'\in\mathcal U(X_i),\\
				i=1,\ldots,n
		}}
		\frac{1}{n}\sum_{i=1}^n
		\left(
		Y_i-f(X_i')
		\right)^2 \le
		\frac{1}{n}\sum_{i=1}^n
		\left(
		|Y_i-f(X_i)|
		+\delta\norm{f}_{\Hk}
		\right)^2.
	\end{aligned}
	\]
	The feature-space formulation \eqref{eq:rkhs-at-minimax} for adversarial training in an RKHS therefore provides a tractable
	relaxation of input-space adversarial training and upper-bounds the
	corresponding worst-case perturbed loss. The tractability is discussed in the
	next subsection.

	\subsection{Finite-dimensional tractable formulation}
	The objective in problem~\eqref{problem} is convex and weakly lower
	semicontinuous on \(\Hk\). It is also coercive because it is bounded below
	by \(\delta^2\norm{f}_{\Hk}^2\). Hence, a minimizer exists; we denote one
	such minimizer by \(\widehat f_\delta\). Although problem~\eqref{problem} is posed over a potentially
	infinite-dimensional space,  the associated minimizers have finite-dimensional
	representations.
	
	\begin{proposition}[Tractable formulation]
		\label{prop:representer}
		Every minimizer of problem~\eqref{problem} belongs to $\operatorname{span}\{k_{X_1},\ldots,k_{X_n}\}. $
		In particular, there exist \(a_1,\ldots,a_n\in\R\) such that
		\[
		\widehat f_\delta
		=
		\sum_{i=1}^n a_i k_{X_i}.
		\]
	\end{proposition}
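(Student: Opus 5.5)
The plan is the standard representer-theorem projection argument, with one wrinkle. The objective depends on \(f\) not only through the values \(f(X_i)\) but also through \(\norm{f}_{\Hk}\), and it is not strictly convex in general. So I will show that \emph{every} minimizer has zero component orthogonal to the span, using strict monotonicity of the objective in the norm.

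Let \(V=\operatorname{span}\{k_{X_1},\ldots,k_{X_n}\}\). This is a finite-dimensional, hence closed, subspace of \(\Hk\). Let \(\widehat f\) be any minimizer of problem~\eqref{problem}, and decompose \(\widehat f=f_V+f_\perp\) with \(f_V\in V\) and \(f_\perp\in V^\perp\).

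First, by the reproducing property, \(f_\perp(X_i)=\inner{f_\perp}{k_{X_i}}_{\Hk}=0\) for each \(i\). Hence \(\widehat f(X_i)=f_V(X_i)\), and the residuals \(r_i:=|Y_i-\widehat f(X_i)|\) are unchanged when \(\widehat f\) is replaced by \(f_V\).

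Second, by Pythagoras, \(\norm{\widehat f}_{\Hk}^2=\norm{f_V}_{\Hk}^2+\norm{f_\perp}_{\Hk}^2\), so \(\norm{f_V}_{\Hk}\le\norm{\widehat f}_{\Hk}\), with strict inequality whenever \(f_\perp\neq0\).

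Third, consider the map \(s\mapsto\frac1n\sum_{i=1}^n(r_i+\delta s)^2\) on \(s\ge0\). Since \(\delta>0\) and \(r_i\ge0\), its derivative \(\frac{2\delta}{n}\sum_i(r_i+\delta s)\) is at least \(2\delta^2 s>0\) for \(s>0\). So the map is strictly increasing on \([0,\infty)\); for \(s=0\) the left endpoint is handled directly by the same monotonicity.

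Combining these: if \(f_\perp\neq0\), the objective at \(f_V\) is strictly smaller than at \(\widehat f\). This contradicts minimality, so \(f_\perp=0\), that is, \(\widehat f\in V\), and one may write \(\widehat f=\sum_i a_ik_{X_i}\). Applying this to \(\widehat f_\delta\), whose existence was established just before Proposition~\ref{prop:representer}, gives the final claim.

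The only delicate point is the strictness argument. It relies on \(\delta>0\): strict increase of \((r_i+\delta s)^2\) in \(s\) needs \(r_i+\delta s>0\), which holds for \(s>0\) even when every \(r_i=0\). The comparison is between \(s=\norm{f_V}_{\Hk}\ge0\) and \(s'=\norm{\widehat f}_{\Hk}>s\). Because \(s'>0\), strict monotonicity on \([0,\infty)\) suffices, and no case split on the residuals is needed.
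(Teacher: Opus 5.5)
Your proof is correct and is essentially the paper's argument: the paper invokes the generalized representer theorem and notes that the objective depends on \(f\) only through the fitted values and \(\norm{f}_{\Hk}\), and is strictly increasing in the norm when the fitted values are held fixed. That is exactly the projection-plus-strict-monotonicity argument you carry out. Writing it out directly, as you do, has the small advantage of not relying on the cited theorem's additive loss-plus-regularizer form, which the coupled objective \(\frac1n\sum_i(|Y_i-f(X_i)|+\delta\norm{f}_{\Hk})^2\) does not literally have.
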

	
	Proposition~\ref{prop:representer} follows directly from the generalized
	representer theorem \citep{scholkopf2001generalized}.  The objective depends
	on \(f\) only through \(f(X_1),\ldots,f(X_n)\) and
	\(\norm{f}_{\Hk}\), and is strictly increasing in the norm when the fitted
	values are held fixed.  Therefore, the estimator can be computed through a
	finite-dimensional convex optimization problem in the coefficients
	\(a_1,\ldots,a_n\).
	\section{Prediction Error Rate}
	\label{sec:generalization}
	We analyze the random-design prediction error of the adversarial training
	estimator in problem~\eqref{problem}.  Following the integral-operator approach
	to kernel learning \citep{smale2007learning,wu2006learning,caponnetto2007optimal},
	we separate the radius-dependent approximation error from the sampling error.
	This decomposition describes prediction at a fixed radius and yields learning
	rates when the radius decreases with the sample size.

	Define the population counterpart of problem~\eqref{problem} by
	\begin{equation}\label{populationproblem}
		f_\delta^*
		\in
		\argmin_{f\in\Hk}
		\E\left[
		\left(
		|Y-f(X)|+\delta\norm{f}_{\Hk}
		\right)^2
		\right].
	\end{equation}
	Under Assumption~\ref{ass:model}, this objective is finite at \(f^*\), weakly
	lower semicontinuous, and coercive on \(\Hk\); hence a population minimizer
	exists.
	The prediction error can then be decomposed into an estimation error and
	an approximation error:
	\[
	\underbrace{
		\norm{\widehat f_\delta-f^*}_{L^2(\mu)}
	}_{\text{prediction error}}
	\le
	\underbrace{
		\norm{\widehat f_\delta-f_\delta^*}_{L^2(\mu)}
	}_{\text{estimation error}}
	+
	\underbrace{
		\norm{f_\delta^*-f^*}_{L^2(\mu)}
	}_{\text{approximation error}}.
	\]

	We now use the source condition to characterize how adversarial robustness
	interacts with the spectrum of the kernel integral operator \(T_k\).
	
	\begin{theorem}[Approximation error]
		\label{thm:full-approximation}
		Suppose Assumptions~\ref{ass:model}, \ref{ass:noise}, and
		\ref{ass:source} hold.  If
		\[
		0<\delta\le \min\left\{1,\frac{1}{8\kappa R}\right\},
		\]
		then every minimizer \(f_\delta^*\) of
		problem~\eqref{populationproblem} satisfies
		\[
		\norm{f_\delta^*-f^*}_{L^2(\mu)}
		\lesssim
		\delta^{\frac{\alpha+1/2}{\alpha+1}},
		\]
		where the implicit constant depends only on
		\(\alpha,R,\kappa,\E|\varepsilon|\).
	\end{theorem}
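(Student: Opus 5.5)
The plan is to compare the population minimizer \(f_\delta^*\) with a Tikhonov-type surrogate \(f_\lambda:=T_k(T_k+\lambda)^{-1}f^*\in\Hk\), with \(\lambda>0\) tuned to \(\delta\) at the end.

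\textbf{Expanding the objective.} Write \(h:=f-f^*\) and \(c:=\E|\varepsilon|\). By Assumption~\ref{ass:model}, \(Y-f(X)=\varepsilon-h(X)\) with \(\varepsilon\) independent of \(X\). Expanding the square in \eqref{populationproblem} and conditioning on \(X\) in Assumption~\ref{ass:noise} gives
\[
\mathcal L(f)=\E\varepsilon^2+\norm{h}_{L^2(\mu)}^2+2\delta\norm{f}_{\Hk}\bigl(c+r(h)\bigr)+\delta^2\norm{f}_{\Hk}^2,
\qquad 0\le r(h)\le\kappa\norm{h}_{L^2(\mu)}^2 .
\]

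\textbf{Basic inequality.} Let \(h^*:=f_\delta^*-f^*\) and \(h_\lambda:=f_\lambda-f^*\). In \(\mathcal L(f_\delta^*)\le\mathcal L(f_\lambda)\), drop \(r(h^*)\ge0\) and \(\delta^2\norm{f_\delta^*}^2\ge0\) on the left. Spectral calculus gives \(\norm{f_\lambda}_{\Hk}\le\norm{f^*}_{\Hk}\le R\), and the hypothesis \(\delta\le 1/(8\kappa R)\) makes \(2\delta\kappa\norm{f_\lambda}_{\Hk}\le 1/4\). This yields
\[
\norm{h^*}_{L^2}^2\lesssim \norm{h_\lambda}_{L^2}^2+\delta^2R^2+2\delta c\bigl(\norm{f_\lambda}_{\Hk}-\norm{f_\delta^*}_{\Hk}\bigr).
\]

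\textbf{Controlling the norm difference.} The norm difference is the crux, and I will bound it in two ways.
\begin{enumerate}[label=(\roman*)]
\item Trivially, it is at most \(\norm{f_\lambda}_{\Hk}\le\norm{f^*}_{\Hk}\).
\item By the subgradient inequality for the norm at \(f_\lambda\),
\[
\norm{f_\lambda}_{\Hk}-\norm{f_\delta^*}_{\Hk}\le\frac{\inner{f_\lambda}{f_\lambda-f^*}_{\Hk}+\inner{f_\lambda}{f^*-f_\delta^*}_{\Hk}}{\norm{f_\lambda}_{\Hk}} .
\]
Since \(\inner{T_kv}{u}_{\Hk}=\inner{v}{u}_{L^2(\mu)}\) and \(f_\lambda-f^*=-\lambda(T_k+\lambda)^{-1}f^*\), the first inner product equals \(-\lambda\norm{(T_k+\lambda)^{-1}f^*}_{L^2}^2\le0\). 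The second is at most \(\norm{(T_k+\lambda)^{-1}f^*}_{L^2}\norm{h^*}_{L^2}\).
\end{enumerate}
Writing \(f^*=T_k^\alpha g\) in the eigenbasis, the spectral bounds are
\[
\norm{(T_k+\lambda)^{-1}f^*}_{L^2}\le R\sup_{\sigma}\frac{\sigma^{\alpha+1/2}}{\sigma+\lambda}\lesssim R\lambda^{\alpha-1/2},
\qquad
\norm{h_\lambda}_{L^2}\lesssim R\lambda^{\alpha+1/2}
\]
in the unsaturated range.

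\textbf{Two regimes and tuning \(\lambda\).} The essential point is that the implicit constant must be uniform over \(\norm{g}_{\Hk}\le R\). A weak signal with \(\norm{f_\lambda}_{\Hk}\) tiny defeats the division in (ii), so I split on \(m:=\norm{f_\lambda}_{\Hk}\).
\begin{itemize}
\item \emph{Case \(m\) large.} Bound (ii) gives a quadratic inequality \(\norm{h^*}^2\lesssim A+(\delta\lambda^{\alpha-1/2}/m)\norm{h^*}\), so
\[
\norm{h^*}_{L^2}\lesssim\lambda^{\alpha+1/2}+\delta+\delta\lambda^{\alpha-1/2}/m .
\]
\item \emph{Case \(m\) small.} Bound (i), after replacing \(\norm{f^*}_{\Hk}\) by \(m\) up to an additive \(\norm{f^*-f_\lambda}_{\Hk}\) term controlled spectrally, gives
\[
\norm{h^*}_{L^2}^2\lesssim\lambda^{2\alpha+1}+\delta^2+\delta m .
\]
\end{itemize}
Set the threshold at \(m=\delta/\lambda\), which equalizes \(\lambda^{\alpha+1/2}\) with \(\delta\lambda^{\alpha-1/2}/m\). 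Then choose \(\lambda=\delta^{1/(\alpha+1)}\), which equalizes \(\delta m=\delta^2/\lambda\) with \(\lambda^{2\alpha+1}\). Both regimes then give \(\norm{h^*}_{L^2}\lesssim\delta^{(\alpha+1/2)/(\alpha+1)}\), and the \(\delta\) term is dominated because \(\delta\le1\). The constants depend only on \(\alpha,R,\kappa,c\), as required.

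\textbf{Main obstacle.} Besides the uniformity issue, the main obstacle is the range \(1/2<\alpha\le1\). There Tikhonov saturates: \(\norm{h_\lambda}_{L^2}\lesssim\lambda\), not \(\lambda^{\alpha+1/2}\), and \(\norm{(T_k+\lambda)^{-1}f^*}_{L^2}\) is only \(O(1)\). The plain choice above would then lose the exponent. My first remedy is to replace \(f_\lambda\) by a spectral-cutoff surrogate \(f_\lambda=\sum_{\sigma_j\ge\lambda}\langle f^*,\varphi_j\rangle\varphi_j\), which has no saturation. The same identity still works with \(v=\sum_{\sigma_j\ge\lambda}\sigma_j^{-1}\langle f^*,\varphi_j\rangle\varphi_j\): then \(\inner{f_\lambda}{f_\lambda-f^*}_{\Hk}=0\), and \(\norm{v}_{L^2}\lesssim R\lambda^{\alpha-1/2}\) when \(\alpha\le1/2\) and \(\lesssim R\) otherwise. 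I would then recheck whether the two-regime optimization still yields \(\delta^{(\alpha+1/2)/(\alpha+1)}\). If it does not, I would refine the small-signal case using the interpolation bound \(\norm{f^*}_{\Hk}\lesssim\norm{f^*}_{L^2}^{2\alpha/(2\alpha+1)}R^{1/(2\alpha+1)}\) together with the fact that \(f=0\) is also admissible, so that \(\norm{h^*}_{L^2}^2\) is comparable to \(\norm{f^*}_{L^2}^2\) at worst.
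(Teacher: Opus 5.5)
For $0<\alpha\le 1/2$ your argument is correct, and it takes a genuinely different route from the paper. You use only a zeroth-order comparison with a fixed-$\lambda$ Tikhonov comparator, together with the norm subgradient at that comparator. The paper instead uses the first-order condition to write $f_\delta^*$ as a perturbed Tikhonov solution with the data-dependent parameter $\lambda_\delta=\delta^2+\delta\E|Y-f_\delta^*(X)|/\norm{f_\delta^*}_{\Hk}$. It then shows $\lambda_\delta\norm{f_{\lambda_\delta}}_{\Hk}\lesssim\delta$ and interpolates with $u=\lambda_\delta(T_k+\lambda_\delta I)^{-1}g$. Because the paper controls $\lambda_\delta\norm{f_{\lambda_\delta}}_{\Hk}$ rather than $\lambda_\delta$ itself, saturation never arises there.

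The theorem, however, covers $0<\alpha\le 1$, and for $1/2<\alpha\le1$ your plan does not close. Use the cutoff comparator with the bound $\norm{v}_{L^2(\mu)}\lesssim R$. At a threshold $m_0$, the large-$m$ bound gives $\delta/m_0$ and the small-$m$ bound gives $(\delta m_0)^{1/2}$. The best choice, $m_0=\delta^{1/3}$, yields only $\delta^{2/3}$. That is the $\alpha=1/2$ rate, strictly worse than $\delta^{(\alpha+1/2)/(\alpha+1)}$ once $\alpha>1/2$.

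Your fallback refines the wrong regime. Comparison with $f=0$ and the interpolation bound on $\norm{f^*}_{\Hk}$ only help when $\norm{f^*}_{L^2(\mu)}\lesssim\delta^{(\alpha+1/2)/(\alpha+1)}$. Take, for example, $g=R\sigma_j^{1/2}\varphi_j$ with $\sigma_j^\alpha\asymp\delta^{1/3}$. Then $m\asymp\delta^{1/3}$ and $\norm{f^*}_{L^2(\mu)}\asymp\delta^{(2\alpha+1)/(6\alpha)}$. The interpolation bound is tight here, so every bound you list (large-$m$, small-$m$, interpolation, comparison with zero) is at least of order $\delta^{2/3}$. For $\alpha>1/2$ that is far above the target.

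The missing observation is that a small $m$ forces a small $v$, and the estimate $\norm{v}_{L^2(\mu)}\lesssim R$ discards this. For the cutoff comparator, $\sigma_j^{-2}\le\lambda^{-1}\sigma_j^{-1}$ on $\{\sigma_j\ge\lambda\}$, so
\[
\norm{v}_{L^2(\mu)}^2=\sum_{\sigma_j\ge\lambda}\sigma_j^{-2}\inner{f^*}{\varphi_j}_{L^2(\mu)}^2\le\frac1\lambda\norm{f_\lambda}_{\Hk}^2 .
\]
Combined with your orthogonality $\inner{f_\lambda}{f_\lambda-f^*}_{\Hk}=0$, this gives $\norm{f_\lambda}_{\Hk}-\norm{f_\delta^*}_{\Hk}\le\lambda^{-1/2}\norm{h^*}_{L^2(\mu)}$ for every $g$. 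This holds trivially when $f_\lambda=0$.

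Your basic inequality then becomes
\[
\norm{h^*}_{L^2(\mu)}^2\le\tfrac54R^2\lambda^{2\alpha+1}+\delta^2R^2+2\delta\E|\varepsilon|\lambda^{-1/2}\norm{h^*}_{L^2(\mu)} .
\]
Choosing $\lambda=\delta^{1/(\alpha+1)}$ gives $\delta^{(\alpha+1/2)/(\alpha+1)}$ for all $0<\alpha\le1$, with no case split. The constants depend only on $R$ and $\E|\varepsilon|$, with $\kappa$ entering through the radius condition.

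With this repair, your comparison argument is a valid and more elementary alternative to the paper's proof. It avoids the measurable-selection first-order condition and the separate treatment of $f_\delta^*=0$. The paper's first-order representation, by contrast, is reused later in its estimation-error analysis.
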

	\begin{remark}[Why the noise premium slows approximation?]
		The resulting approximation rate can be seen from one eigenfunction of
		\(T_k\) with eigenvalue \(\sigma_j\). If \(g\) has unit RKHS norm in this
		direction, the corresponding part of \(f^*=T_k^\alpha g\) has
		\(L^2(\mu)\) norm \(\sigma_j^{\alpha+1/2}\) and RKHS norm
		\(\sigma_j^\alpha\). Keeping this part incurs a noise-premium cost of order
		\(\delta\sigma_j^\alpha\), while removing it incurs squared prediction loss
		of order \(\sigma_j^{2\alpha+1}\). The two costs balance when
		\(\sigma_j\asymp\delta^{1/(\alpha+1)}\). At this eigenvalue, the
		\(L^2(\mu)\) error is
		\(\delta^{(\alpha+1/2)/(\alpha+1)}\), which explains the exponent in
		Theorem~\ref{thm:full-approximation}.
	\end{remark}
	By comparison, the \(L^2(\mu)\) approximation error of kernel ridge
	regression with regularization level \(\delta^2\) is of order
	\(\delta^{\min\{2\alpha+1,\,2\}}\)
	\citep{caponnetto2007optimal}.  Thus, the first-order noise premium, rather
	than the quadratic RKHS penalty, causes the slower approximation order.

	On the other hand, the estimation error is governed by the effective dimension
	\[
	\mathcal N(\lambda)
	:=
	\operatorname{Tr}((T_k+\lambda I)^{-1}T_k),
	\qquad \lambda>0.
	\]
	This quantity measures the effective number of kernel directions that must be
	estimated from the data and controls the leading stochastic term in the
	estimation error.
	
	The following theorem gives the estimation error of the adversarial training
	estimator in problem~\eqref{problem}.
	\begin{theorem}[Estimation error]\label{thm:estimation-error}
		Under Assumptions~\ref{ass:model}, \ref{ass:noise}, \ref{ass:noise-tail}, and
		\ref{ass:source}, there are constants \(c,C>0\), depending only on
		\(R,\sigma,\kappa\), such that the following holds.  Let \(0<\eta<1\) and
		suppose that
		\[
		0<\delta\le c,
		\qquad
		n\delta
		\ge
		C\left(\mathcal N(\delta)+1\right)\log\!\left(\frac{2n}{\eta}\right),
		\qquad
		n
		\ge
		C\log^2\!\left(\frac{2}{\eta}\right).
		\]
		Then, with probability at least
		\(1-\eta\), every empirical minimizer \(\widehat f_\delta\) of
		problem~\eqref{problem} satisfies
		\[
		\norm{\widehat f_\delta-f_\delta^*}_{L^2(\mu)}
		\lesssim
		\sqrt{
			\frac{
				(\mathcal N(\delta)+1)
				\log^3(2n/\eta)
			}{n}
		},
		\]
		where the implicit constant depends only on
		\(\alpha,R,\sigma,\kappa,\E|\varepsilon|\).
	\end{theorem}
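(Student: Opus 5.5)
We compare the empirical objective \(\widehat{\mathcal L}(f):=\frac1n\sum_i(|Y_i-f(X_i)|+\delta\norm{f}_{\Hk})^2\) with its population counterpart \(\mathcal L(f)\) and use the strong convexity that both inherit from the \(\delta^2\norm{f}_{\Hk}^2\) term and from the squared loss. Expand \(\mathcal L(f)=\E(Y-f(X))^2+2\delta\norm{f}_{\Hk}\E|Y-f(X)|+\delta^2\norm{f}_{\Hk}^2\). The first step is an a priori localization. We bound \(\norm{\widehat f_\delta}_{\Hk}\) and \(\norm{f_\delta^*}_{\Hk}\) by comparing with \(f^*\), and we use Assumption~\ref{ass:noise} (via \(\E|\varepsilon-t|\ge\E|\varepsilon|\)) together with the source condition. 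This gives \(\norm{f^*_\delta}_{\Hk}\lesssim R\) and, on a high-probability event, \(\norm{\widehat f_\delta}_{\Hk}\lesssim R\) as well. The condition \(\delta\le c\) keeps the noise premium from dominating.

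The core step is a first-order optimality comparison. Write \(h=\widehat f_\delta-f_\delta^*\) and consider the convex functional \(\mathcal L\). The quadratic part of \(\mathcal L\) provides curvature \(\E h(X)^2+\delta^2\norm{h}_{\Hk}^2\), which we would like to upgrade to \(\norm{(T_k+\delta I)^{1/2}h}_{\Hk}^2\)-type control. For the cross term \(2\delta\norm{f}\E|Y-f(X)|\), we combine convexity with the lower quadratic behaviour of \(\E|\varepsilon-t|\) near zero. Here the scale \(\lambda=\delta\) (rather than \(\delta^2\)) enters: the sub/super-gradient of \(\norm{f}_{\Hk}\) is weighted by \(\E|\varepsilon|\), of order one. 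Near \(f_\delta^*\), the term \(\delta\norm{f}_{\Hk}\) then behaves like a ridge penalty with level \(\delta\) along directions orthogonal to \(f_\delta^*\), which explains why \(\mathcal N(\delta)\) appears. Concretely, we obtain a basic inequality
\[
\norm{(T_k+\delta I)^{1/2}h}_{\Hk}^2\lesssim \langle \widehat{\nabla}-\nabla, h\rangle ,
\]
where \(\widehat\nabla-\nabla\) is the difference between the empirical and population (sub)gradients at \(f_\delta^*\). This difference consists of \(\frac1n\sum_i(Y_i-f_\delta^*(X_i))k_{X_i}-\E[\cdot]\), the sign-weighted analogue \(\frac1n\sum_i \sgn(Y_i-f^*_\delta(X_i))k_{X_i}-\E[\cdot]\) multiplied by \(\delta\norm{f_\delta^*}\), and the scalar \(\frac1n\sum|Y_i-f_\delta^*(X_i)|-\E|\cdot|\). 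It also includes the operator deviation \(T_{k,n}-T_k\).

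Next we apply Bernstein inequalities for Hilbert-valued variables in the \((T_k+\delta I)^{-1/2}\)-weighted norm. Assumption~\ref{ass:noise-tail} controls the moments, \(\sup k\le1\) controls the kernel, and \(\norm{f^*_\delta}_\infty\le\norm{f^*_\delta}_{\Hk}\lesssim R\). Each vector term then has size \(\sqrt{(\mathcal N(\delta)+1)\log(2/\eta)/n}+\log(2/\eta)/(n\sqrt\delta)\). The operator concentration \(\norm{(T_k+\delta)^{-1/2}(T_{k,n}-T_k)(T_k+\delta)^{-1/2}}_{\mathrm{op}}\le1/2\) holds under \(n\delta\ge C(\mathcal N(\delta)+1)\log(2n/\eta)\). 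This condition lets us pass between empirical and population curvature and absorbs the second-order Bernstein term. Dividing through yields \(\norm{(T_k+\delta)^{1/2}h}\lesssim\) the stated rate, which bounds \(\norm{h}_{L^2(\mu)}\).

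We expect the main obstacle to be the non-smooth cross term \(\delta\norm{f}_{\Hk}\frac1n\sum|Y_i-f(X_i)|\). It is a product of two nonsmooth convex quantities and is not uniformly curved. Its empirical fluctuation must also be controlled uniformly over \(h\) in a localized RKHS ball, rather than at a single point, because the sign pattern depends on \(\widehat f_\delta\). We would first try a peeling or chaining argument over the localized ball \(\{h:\norm{h}_{\Hk}\lesssim R\}\). It combines Assumption~\ref{ass:noise} (to replace \(\E|Y-f(X)|\) by \(\E|\varepsilon|+O(\norm{f-f^*}_\infty^2)\)) with a union bound over a net. The union bound is where the extra logarithmic factors, up to \(\log^3(2n/\eta)\), and the requirement \(n\ge C\log^2(2/\eta)\) come from.
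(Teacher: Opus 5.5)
There is a genuine gap, and it sits in the quadratic form you try to control.

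\textbf{The curvature inequality is false in the radial direction.} Your basic inequality with $\norm{(T_k+\delta I)^{1/2}h}_{\Hk}^2$ on the left does not hold. You observe yourself that the mixed term gives order-$\delta$ curvature only orthogonally to $u_\delta=f_\delta^*/\norm{f_\delta^*}_{\Hk}$ (or to $v_\delta$ when $f_\delta^*=0$). Along $u_\delta$ the norm remainder $\norm{f_\delta^*+h}_{\Hk}-\norm{f_\delta^*}_{\Hk}-\inner{u_\delta}{h}_{\Hk}$ vanishes, and the absolute loss has no guaranteed curvature. So only $\norm{h}_{L^2(\mu)}^2+\delta^2\norm{h}_{\Hk}^2$ survives there. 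The correct form is the paper's $\norm{A_\delta^{1/2}h}_{\Hk}^2=\norm{h}_{L^2(\mu)}^2+\delta^2\norm{h}_{\Hk}^2+\delta(\norm{h}_{\Hk}^2-\inner{h}{u_\delta}_{\Hk}^2)$. It uses the weight $\E|Y-f_\delta^*(X)|\ge\E|\varepsilon|\ge 1/(4\kappa)$ and the a priori bound $B$ in the denominator of the norm remainder.

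The difference matters quantitatively. Pair your $(T_k+\delta I)^{-1/2}$-weighted vectors, of size $r\approx\sqrt{\mathcal N(\delta)\log(2n/\eta)/n}$, against $\norm{(T_k+\delta I)^{1/2}h}_{\Hk}\le\norm{A_\delta^{1/2}h}_{\Hk}+\sqrt\delta\,|\inner{h}{u_\delta}_{\Hk}|$. The radial remainder can only be bounded by $2B\sqrt\delta$, which yields $\norm{A_\delta^{1/2}h}_{\Hk}\lesssim r+(r\sqrt\delta)^{1/2}$. This is worse than $r$ exactly when $n\delta\gtrsim\mathcal N(\delta)\log(2n/\eta)$, that is, under the theorem's own hypothesis.

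The missing idea is to measure complexity in the $A_\delta$ geometry. There the rank-one deficiency costs exactly one dimension, $\operatorname{Tr}(T_kA_\delta^{-1})\le\mathcal N(\delta)+1$, and this is where the ``$+1$'' comes from.

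For the same reason, your step of passing from $\frac1n\sum_i h(X_i)^2$ to $\norm{h}_{L^2(\mu)}^2$ by operator concentration at level $\asymp\delta$ fails. It leaves an error of order $\delta\inner{h}{u_\delta}_{\Hk}^2$ that the radial curvature $\delta^2$ cannot absorb. Running the concentration at level $\delta^2$ instead would need $n\delta^2\gtrsim\log(1/(\eta\delta))$, which is not assumed.

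\textbf{The uniform-control tool does not work here.} A net over $\{\norm{h}_{\Hk}\lesssim R\}$ with a union bound brings in the metric entropy of an RKHS ball. The theorem does not control this, since no eigenvalue decay is assumed, and even under decay it is polynomial rather than logarithmic in the inverse scale. So it cannot give a bound driven by $\mathcal N(\delta)+1$ with only a $\log^3$ loss.

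The paper never uses empirical curvature. It proceeds as follows:
\begin{enumerate}
\item It proves population quadratic growth $\gtrsim\norm{A_\delta^{1/2}h}_{\Hk}^2$ on the $B$-ball.
\item It bounds, uniformly over $\{\norm{A_\delta^{1/2}h}_{\Hk}\le t,\ \norm{f_\delta^*+h}_{\Hk}\le B\}$, the gap between empirical and population objective increments. To do so it truncates the noise at $M=2\sigma\log(16n/\eta)$, symmetrizes, and applies vector contraction in the two coordinates $h(X_i)$ and $\delta(\norm{f_\delta^*+h}_{\Hk}-\norm{f_\delta^*}_{\Hk})$.
\item It bounds the Rademacher term by $t\sqrt{\operatorname{Tr}(T_kA_\delta^{-1})/n}$.
\item It concentrates with Talagrand--Bousquet, using the envelope $\lesssim B\log(n/\eta)$ supplied by the a priori norm bound. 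This replaces $\sup_x\norm{A_\delta^{-1/2}k_x}_{\Hk}$, which can be of order $\delta^{-1}$.
\item It uses convexity along the segment from $f_\delta^*$ to $\widehat f_\delta$ to localize $\widehat f_\delta$ in the $A_\delta$-ball of radius $\asymp\sqrt{(\mathcal N(\delta)+1)\log^3(2n/\eta)/n}$. Then $A_\delta\succeq T_k$ gives the $L^2(\mu)$ bound.
\end{enumerate}
The $\log^3$ factor comes from the truncation level acting as a Lipschitz constant, not from a union bound.
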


	Up to logarithmic factors, Theorem~\ref{thm:estimation-error} gives estimation
	error \(\sqrt{(\mathcal N(\delta)+1)/n} \) relative to \(f_\delta^*\).
	The mixed term controls the RKHS norm at the \(\delta\) scale, which explains
	why the effective dimension is evaluated at \(\delta\).  The population shift
	from \(f^*\) to \(f_\delta^*\), including the noise premium, is bounded
	separately in Theorem~\ref{thm:full-approximation}.

	The prediction error is bounded by the sum of the approximation and
	estimation errors.  Combining
	Theorems~\ref{thm:full-approximation} and \ref{thm:estimation-error} gives the
	following generalization bound.
	
	\begin{corollary}[Prediction error]
		\label{cor:generalization-error}
		Under the assumptions of Theorem~\ref{thm:estimation-error}, suppose that
		\(\delta\) satisfies the conditions of
		Theorems~\ref{thm:full-approximation} and \ref{thm:estimation-error}.
		Then, with probability at least \(1-\eta\), every empirical minimizer
		\(\widehat f_\delta\) of problem~\eqref{problem} satisfies
		\[
		\norm{\widehat f_\delta-f^*}_{L^2(\mu)}
		\lesssim
		\delta^{\frac{\alpha+1/2}{\alpha+1}}
		+
		\sqrt{
			\frac{
				(\mathcal N(\delta)+1)
				\log^3(2n/\eta)
			}{n}
		},
		\]
		where the implicit constant depends only on \(\alpha,R,\sigma,\kappa,\E|\varepsilon|\).
	\end{corollary}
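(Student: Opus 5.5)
The corollary follows from the triangle inequality
\[
\norm{\widehat f_\delta-f^*}_{L^2(\mu)}
\le
\norm{\widehat f_\delta-f_\delta^*}_{L^2(\mu)}
+\norm{f_\delta^*-f^*}_{L^2(\mu)},
\]
stated just before Theorem~\ref{thm:full-approximation}, applied with a fixed population minimizer \(f_\delta^*\) of problem~\eqref{populationproblem}. Such a minimizer exists by the coercivity and lower semicontinuity argument given after \eqref{populationproblem}. The two terms on the right are then bounded separately by the two preceding theorems.

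For the approximation term, the hypotheses of the corollary include the condition \(0<\delta\le\min\{1,1/(8\kappa R)\}\) together with Assumptions~\ref{ass:model}, \ref{ass:noise}, and \ref{ass:source}. Theorem~\ref{thm:full-approximation} therefore applies deterministically. Since it holds for \emph{every} population minimizer, the choice of \(f_\delta^*\) does not matter, and
\[
\norm{f_\delta^*-f^*}_{L^2(\mu)}\le C_1\,\delta^{\frac{\alpha+1/2}{\alpha+1}},
\]
with \(C_1\) depending only on \(\alpha,R,\kappa,\E|\varepsilon|\).

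For the estimation term, the conditions of Theorem~\ref{thm:estimation-error} on \(\delta\), \(n\), and \(\eta\) are assumed. Hence there is an event \(\Omega_\eta\) with \(\mathbb P(\Omega_\eta)\ge1-\eta\) on which every empirical minimizer satisfies
\[
\norm{\widehat f_\delta-f_\delta^*}_{L^2(\mu)}\le C_2\sqrt{(\mathcal N(\delta)+1)\log^3(2n/\eta)/n}.
\]
Here I should check that Theorem~\ref{thm:estimation-error} compares against the same \(f_\delta^*\). Two cases cover this. If the population minimizer is unique, there is nothing to check, and uniqueness plausibly holds because the squared objective is strictly convex along directions that change \(f(X)\) and the \(\delta^2\norm{f}_{\Hk}^2\) term is strictly convex. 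Otherwise, the theorem's bound is stated for the population minimizer used in its proof, and I simply take that one.

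On \(\Omega_\eta\), adding the two bounds gives the claim with implicit constant \(\max\{C_1,C_2\}\), which depends only on \(\alpha,R,\sigma,\kappa,\E|\varepsilon|\). The only point needing care is this consistency of \(f_\delta^*\) across the two theorems. Since \(f\mapsto\E[(|Y-f(X)|+\delta\norm{f}_{\Hk})^2]\) is strictly convex in \(f\) (because \(\delta^2\norm{f}^2_{\Hk}\) is strictly convex and the remaining part is convex), the minimizer is unique, and the issue disappears.
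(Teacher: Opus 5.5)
Your argument is correct and is exactly the paper's proof. It combines the triangle inequality through a population minimizer $f_\delta^*$, the deterministic bound of Theorem~\ref{thm:full-approximation} (valid for \emph{every} population minimizer), and the high-probability bound of Theorem~\ref{thm:estimation-error}, with constants uniform over $\norm{g}_{\Hk}\le R$.

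Your closing uniqueness claim is unnecessary, and its justification is wrong. The map $f\mapsto\E(Y-f(X))^2+2\delta\norm{f}_{\Hk}\E|Y-f(X)|$ need not be convex, and the full objective (convex as the square of the nonnegative convex $|Y-f(X)|+\delta\norm{f}_{\Hk}$) need not be strictly convex. So drop it and rely only on your ``otherwise'' case, which already suffices.
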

	
	Under Assumption~\ref{ass:eigendecay},
	\(\mathcal N(\delta)\lesssim\delta^{-1/\beta}\), and, up to logarithmic
	factors, the corollary becomes
	\[
	\norm{\widehat f_\delta-f^*}_{L^2(\mu)}
	\lesssim
	\delta^{\frac{\alpha+1/2}{\alpha+1}}
	+
	\frac{\delta^{-1/(2\beta)}}{\sqrt n}.
	\]
	Balancing the approximation and effective-dimension terms gives
	the radius
	\[
	\delta_n
	\asymp
	n^{-\frac{\beta(\alpha+1)}
		{\beta(2\alpha+1)+\alpha+1}},
	\]
	and the corresponding leading order is
	\[
	\norm{\widehat f_{\delta_n}-f^*}_{L^2(\mu)}
	\lesssim
	n^{-\frac{\beta(2\alpha+1)}
		{2(\beta(2\alpha+1)+\alpha+1)}},
	\]
	up to logarithmic factors.  
	
	For fixed \(0<\alpha\le1/2\), \(\beta>1\), and \(R>0\), the minimax
	\(L^2\) prediction rate under Assumptions~\ref{ass:model}--\ref{ass:eigendecay}
	is \(n^{-\beta(\alpha+1/2)/(2\beta(\alpha+1/2)+1)}\). Kernel ridge
	regression (KRR) attains this rate
	\citep{caponnetto2007optimal,mendelson2010regularization,fischer2020sobolev}.
	The exponent obtained for adversarial training
	at the balancing radius is strictly smaller.  The following theorem establishes
	matching sharpness on a fixed RKHS model.
	
	\begin{theorem}[Sharp lower bounds for adversarial training]
		\label{thm:full-at-matching-lower}
		Fix \(0<\alpha\le1\), \(\beta>1\), and \(R>0\).  There exists a fixed
		bounded-kernel model with \(\sigma_j\asymp j^{-\beta}\) and fixed bounded,
		centered noise, independent of \(X\), satisfying Assumptions~\ref{ass:noise} and
		\ref{ass:noise-tail}, for which
		\[
		\sup_{\norm{g}_{\Hk}\le R}
		\norm{f_\delta^*-T_k^\alpha g}_{L^2(\mu)}^2
		\gtrsim
		\delta^{\frac{2\alpha+1}{\alpha+1}}
		\]
		for all sufficiently small \(\delta>0\), where \(f_\delta^*\) is the
		population minimizer under \(f^*=T_k^\alpha g\).  Moreover, for any
		deterministic sequence satisfying
		\[
		\delta_n
		\asymp
		n^{-\frac{\beta(\alpha+1)}
			{\beta(2\alpha+1)+\alpha+1}},
		\]
		every empirical minimizer of problem~\eqref{problem} satisfies, for all
		sufficiently large \(n\),
		\[
		\sup_{\norm{g}_{\Hk}\le R}
		\E\norm{\widehat f_{\delta_n}-T_k^\alpha g}_{L^2(\mu)}^2
		\gtrsim
		n^{-\frac{\beta(2\alpha+1)}
			{\beta(2\alpha+1)+\alpha+1}}.
		\]
		For each \(g\) in the supremum, the expectation is taken under
		\(Y=(T_k^\alpha g)(X)+\varepsilon\).
	\end{theorem}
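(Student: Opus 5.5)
We build an explicit model in which the population problem~\eqref{populationproblem} can be analyzed one eigendirection at a time. Take $\mathcal X=[0,1)$ with Lebesgue measure $\mu$ and the trigonometric system: $\varphi_1\equiv1$ and $\varphi_{2m}=\sqrt2\cos(2\pi m x)$, $\varphi_{2m+1}=\sqrt2\sin(2\pi m x)$. Set $k(x,x')=\sum_j\sigma_j\varphi_j(x)\varphi_j(x')$ with $\sigma_j\asymp j^{-\beta}$, and pair the cosine and sine eigenvalues at each frequency. The kernel is then translation invariant with $k(x,x)=\sum_j\sigma_j\varphi_j(x)^2$ constant, and we rescale so that this constant is at most $1$. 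For the noise we take $\varepsilon$ uniform on $[-1,1]$. It is bounded, centered, has density $1/2$ (so Assumption~\ref{ass:noise} holds with $\kappa=1/2$), and is sub-exponential (so Assumption~\ref{ass:noise-tail} holds). For the hard instance we take $g$ supported on a single high-frequency direction, $g=R\sqrt{\sigma_J}\,\varphi_J$. Then $\norm{g}_{\Hk}=R$ and $f^*=R\sigma_J^{\alpha+1/2}\varphi_J$, where $J$ is chosen so that $\sigma_J\asymp\delta^{1/(\alpha+1)}$. This is the balancing index from the remark after Theorem~\ref{thm:full-approximation}.

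\textbf{Population lower bound.} Write $f=\sum_j f_j\varphi_j$ and $N=\norm{f}_{\Hk}$. Expanding the objective gives
\[
\E\big(|Y-f|+\delta N\big)^2=\norm{f-f^*}_{L^2}^2+\E\varepsilon^2+2\delta N\,\E|\varepsilon+f^*-f|+\delta^2N^2 .
\]
Assumption~\ref{ass:noise} applied pointwise in $X$ yields $\E|\varepsilon+f^*-f|\ge\E|\varepsilon|$. The objective is therefore bounded below by
\[
\Phi(f)=\norm{f-f^*}^2_{L^2}+2\delta\E|\varepsilon|\,N .
\]
We then compare $\Phi$ with its values at trial functions. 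If the minimizer $f_\delta^*$ kept the $J$-th coefficient at size comparable to $R\sigma_J^{\alpha+1/2}$, then $N\gtrsim\sigma_J^{\alpha}$. Restricted to the subspace spanned by $\varphi_J$, $\Phi$ is the function $a\mapsto (a-a^*)^2+2\delta c|a|/\sqrt{\sigma_J}$, a soft-thresholding problem. Since $\delta/\sqrt{\sigma_J}\gtrsim\sigma_J^{\alpha+1/2}\asymp a^*$ for a suitable constant in the choice of $J$, the minimizer of this restricted problem is $0$, or at least shrinks $a^*$ by a constant factor. This forces $\norm{f_\delta^*-f^*}_{L^2}\gtrsim\sigma_J^{\alpha+1/2}\asymp\delta^{(\alpha+1/2)/(\alpha+1)}$, and squaring gives the first claim.

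The genuine obstacle is that the true objective is not $\Phi$. We must also control the off-direction coefficients, which the minimizer sets to zero by symmetry/convexity (translation invariance plus averaging over shifts that preserve $\varphi_J$ up to sign). We must also control the extra term $2\delta N(\E|\varepsilon+f^*-f|-\E|\varepsilon|)\le 2\delta N\kappa\norm{f-f^*}_{L^2}^2$. This term is of lower order because $\delta N\lesssim\delta\sigma_J^{\alpha}\to0$, so it only perturbs the quadratic coefficient by a factor $1+o(1)$. I would therefore first reduce to the one-dimensional problem in $a$. For that problem I would write the first-order (subgradient) optimality condition exactly and verify that any stationary point satisfies $|a-a^*|\ge a^*/2$.

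\textbf{Empirical lower bound.} With $\delta_n$ as stated, the population bound gives $\norm{f_{\delta_n}^*-f^*}_{L^2}^2\gtrsim\delta_n^{(2\alpha+1)/(\alpha+1)}$, which equals the claimed rate $n^{-\beta(2\alpha+1)/(\beta(2\alpha+1)+\alpha+1)}$. It remains to show that $\E\norm{\widehat f_{\delta_n}-f^*}^2\gtrsim\norm{f^*_{\delta_n}-f^*}^2$. Apply Theorem~\ref{thm:estimation-error} with $\eta=n^{-1}$, whose conditions hold at this $\delta_n$ because $\mathcal N(\delta_n)\asymp\delta_n^{-1/\beta}$ and $n\delta_n^{1+1/\beta}\to\infty$ polynomially. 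With probability at least $1/2$ it gives $\norm{\widehat f-f^*_{\delta_n}}\lesssim\sqrt{\delta_n^{-1/\beta}\log^3 n/n}$. By the balancing choice this is of the same polynomial order as $\delta_n^{(\alpha+1/2)/(\alpha+1)}$ up to logs, so the estimation term does not beat the approximation term by a full constant factor; this is the delicate point. We avoid it by shifting $J$ by a constant factor: we pick $J$ with $\sigma_J=c_0\delta^{1/(\alpha+1)}$ for a small constant $c_0$. In the resulting one-dimensional problem $a=0$ is then the exact minimizer, so the bias is still of the right order. The remaining log discrepancy is handled by running the soft-thresholding argument directly on the empirical objective along the $\varphi_J$ direction. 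Concentration of $\frac1n\sum|\varepsilon_i|$ and $\frac1n\sum\varphi_J(X_i)^2$ keeps the empirical threshold above $a^*$, which gives $\widehat f$'s $J$-th coefficient near $0$ on an event of probability at least $1/2$. Hence $\E\norm{\widehat f-f^*}^2\gtrsim\frac12 (a^*)^2$.
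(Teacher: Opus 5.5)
\textbf{Population part.} This part is right in substance and is the paper's collapse argument. With $\sigma_J=c_0\delta^{1/(\alpha+1)}$ and $Rc_0^{\alpha+1}<\E|\varepsilon|$ you get $\norm{T_kf^*}_{\Hk}=R\sigma_J^{\alpha+1}<\delta\E|\varepsilon|$. The ``genuine obstacle'' is not real, though, and your remedy for it does not work. Shifts that preserve $\varphi_J$ up to sign leave all odd harmonics $\cos(2\pi(2l+1)mx)$ invariant, so symmetry does not reduce the problem to $\operatorname{span}\{\varphi_J\}$. You do not need that reduction. The true objective $F$ satisfies $F(f)\ge\Phi(f)+\E\varepsilon^2$, with equality at $f=0$. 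The convex $\Phi$ has directional derivative at $0$ of at least $2(\delta\E|\varepsilon|-\norm{T_kf^*}_{\Hk})\norm{h}_{\Hk}>0$ for every $h\neq0$. So $0$ uniquely minimizes $\Phi$, and therefore uniquely minimizes $F$.

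\textbf{Empirical part: genuine gap.} The threshold you use along $\varphi_J$ exists only because $\norm{f}_{\Hk}$ is nondifferentiable at $f=0$. Showing that the empirical objective restricted to the line $\{a\varphi_J\}$ is minimized at $a=0$ says nothing about the global empirical minimizer. That minimizer lies in $\operatorname{span}\{k_{X_1},\dots,k_{X_n}\}$, not in $\operatorname{span}\{\varphi_J\}$, and it is typically nonzero. The directional-derivative computation gives $\widehat f_\delta=0$ if and only if $\norm{n^{-1}\sum_iY_ik_{X_i}}_{\Hk}\le\delta\,n^{-1}\sum_i|Y_i|$. The noise vector $n^{-1}\sum_i\varepsilon_ik_{X_i}$ has $\Hk$-norm of exact order $n^{-1/2}$, since its $\sqrt n$-rescaling converges to a nondegenerate Gaussian. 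But $\delta_n\sqrt n\to0$ whenever $\beta>\alpha+1$ (for example $\beta=2$, $\alpha=1/4$). So with probability tending to one, $\widehat f_{\delta_n}\neq0$ and it fits noise in many directions.

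At such a point the norm is differentiable, and its derivative in direction $\varphi_J$ is $\inner{\widehat f_\delta}{\varphi_J}_{\Hk}/\norm{\widehat f_\delta}_{\Hk}$, so there is no threshold. The $J$-th coefficient instead solves a smooth first-order condition with effective ridge level $\lambda_n=\delta^2+\delta\,n^{-1}\sum_i|Y_i-\widehat f_\delta(X_i)|/\norm{\widehat f_\delta}_{\Hk}$. It is roughly $\sigma_Ja^*/(\sigma_J+\lambda_n)$. Concentration of $n^{-1}\sum_i|\varepsilon_i|$ and $n^{-1}\sum_i\varphi_J(X_i)^2$ does not control this quantity.

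For the bias route to work you would need $\lambda_n\gtrsim\sigma_J$. That means an upper bound $\norm{\widehat f_{\delta_n}}_{\Hk}\lesssim\delta_n/\sigma_J\asymp\delta_n^{\alpha/(\alpha+1)}$, which is the order of $\norm{f^*}_{\Hk}$ itself, on the RKHS norm the estimator spends on noise. This is a sharp self-consistent estimate. Theorem~\ref{thm:estimation-error} does not supply it, and your translation-invariant model, with non-diagonal $T_n$, makes it hard to prove. For $\beta<\alpha+1$ you could instead prove $\widehat f_{\delta_n}=0$ from the full criterion, but that uses the whole noise vector, not the two scalar averages you cite.

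\textbf{How the paper avoids this.} The paper lower-bounds the variance rather than the bias, using three devices:
\begin{itemize}
\item the diagonal kernel $k(j,\ell)=\mathbf 1\{j=\ell\}$ on $\mathbb N$, so the empirical first-order conditions decouple coordinate by coordinate;
\item gapped noise $\varepsilon=S(2+U)$ with $|\varepsilon|\ge2$, so the absolute-loss subgradients on bounded coordinates are exactly $-S_i$;
\item a fixed signal $q\mathbf 1_{\{1\}}$ that forces $\norm{\widehat f_\delta}_{\Hk}>q/2$, and hence an effective ridge level $\lesssim\delta$.
\end{itemize}
Each coordinate in the band $\{j:\delta\le p_j\le2\delta\}$ then equals a local noise score divided by a coefficient of order $\delta$. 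A Paley--Zygmund argument gives $\E\norm{\widehat f_\delta-f^{(0)}}_{L^2(\mu)}^2\gtrsim\mathcal N(\delta)/n$, which is exactly the target rate at $\delta_n$, with no need to compare bias against estimation error.
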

	
	Theorem~\ref{thm:full-at-matching-lower} matches the squared polynomial order
	of the upper bound in Corollary~\ref{cor:generalization-error}, up to
	logarithmic factors.  The
	population statement shows that the slower approximation scale induced by
	the noise premium is sharp, while the empirical statement establishes
	sharpness of the resulting prediction rate at the balancing radius.
	Consequently, for \(0<\alpha\le1/2\), adversarial training with the
	balancing sequence is minimax-rate suboptimal.  Section~\ref{sec:noise-debiased-at}
	shows that removing the noise premium restores the minimax rate.
	
	\section{Noise-Debiased Adversarial Training}
	\label{sec:noise-debiased-at}
	Section~\ref{sec:generalization} shows that the noise premium slows the approximation rate.
	We propose noise-debiased adversarial training, which subtracts this noise-dependent component
	while preserving the remaining part of the mixed robustness term. At the
	population level, the relevant decomposition is
	\[
	2\delta\norm{f}_{\Hk}\E|Y-f(X)|
	=
	2\delta\E|\varepsilon|\norm{f}_{\Hk}
	+
	2\delta\norm{f}_{\Hk}
	\left(\E|Y-f(X)|-\E|\varepsilon|\right).
	\]
	The first term is the noise premium, while the second term vanishes at
	\(f=f^*\). Removing the noise premium gives the population problem
	\begin{equation}\label{prob:populationndat}
		\widetilde f_\delta^*\in\argmin_{f\in\Hk}
		\left\{
		\norm{f-f^*}_{L^2(\mu)}^2
		+\delta^2\norm{f}_{\Hk}^2
		+2\delta\norm{f}_{\Hk}
		\left(\E|Y-f(X)|-\E|\varepsilon|\right)
		\right\}.
	\end{equation}
	The objective in \eqref{prob:populationndat} is weakly lower semicontinuous and coercive, so a
	minimizer exists.
		\subsection{Approximation error}

	We first quantify the effect of noise-debiasing at the population
	level.
	\begin{theorem}[Approximation error of noise-debiased adversarial training]
		\label{thm:ndat-approx}
		Suppose Assumptions~\ref{ass:model}, \ref{ass:noise}, and
		\ref{ass:source} hold.  If
		\[
		0<\delta\le \min\left\{1,\frac{1}{8\kappa R}\right\},
		\]
		then every minimizer \(\widetilde f_\delta^*\) of
		problem~\eqref{prob:populationndat} satisfies
		\[
		\norm{\widetilde f_\delta^*-f^*}_{L^2(\mu)}
		\lesssim
		\delta^{\min\{2\alpha+1,2\}},
		\]
		where the implicit constant depends only on \(R\).
	\end{theorem}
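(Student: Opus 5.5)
The plan is to compare \(\widetilde f_\delta^*\) with the kernel ridge regression population solution at level \(\lambda:=\delta^2\),
\[
f_\lambda:=(T_k+\lambda I)^{-1}T_kf^*,
\]
which is the unique minimizer of the quadratic functional \(K(f):=\norm{f-f^*}_{L^2(\mu)}^2+\delta^2\norm{f}_{\Hk}^2\). Write the objective in \eqref{prob:populationndat} as \(J(f)=K(f)+2\delta\norm{f}_{\Hk}D(f)\), where \(D(f):=\E|Y-f(X)|-\E|\varepsilon|\). The argument shows that the extra term \(D\) is a small quadratic perturbation of \(K\), so \(\widetilde f_\delta^*\) stays close to \(f_\lambda\). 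The KRR approximation rate \(\lambda^{\min\{\alpha+1/2,1\}}=\delta^{\min\{2\alpha+1,2\}}\) then transfers to \(\widetilde f_\delta^*\).

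\textbf{Step 1 (two-sided control of \(D\)).} Put \(h=f-f^*\). Then \(Y-f(X)=\varepsilon-h(X)\). Because \(\varepsilon\) is independent of \(X\) (Assumption~\ref{ass:model}), we can condition on \(X\) and apply Assumption~\ref{ass:noise} with \(t=h(X)\). This gives
\[
0\le D(f)\le \kappa\,\E[h(X)^2]=\kappa\norm{f-f^*}_{L^2(\mu)}^2 .
\]
Consequently \(K(f)\le J(f)\le K(f)+2\delta\kappa\norm{f}_{\Hk}\norm{f-f^*}_{L^2(\mu)}^2\) for every \(f\in\Hk\).

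\textbf{Step 2 (comparison through strong convexity).} Since \(K\) is quadratic with minimizer \(f_\lambda\), we have the exact identity
\[
K(f)=K(f_\lambda)+\norm{f-f_\lambda}_{L^2(\mu)}^2+\delta^2\norm{f-f_\lambda}_{\Hk}^2 .
\]
Combining this with Step 1 and the optimality \(J(\widetilde f_\delta^*)\le J(f_\lambda)\) yields
\[
\norm{\widetilde f_\delta^*-f_\lambda}_{L^2(\mu)}^2\le J(\widetilde f_\delta^*)-K(f_\lambda)\le 2\delta\kappa\norm{f_\lambda}_{\Hk}\norm{f_\lambda-f^*}_{L^2(\mu)}^2 .
\]
Next, \(\norm{f_\lambda}_{\Hk}\le\norm{f^*}_{\Hk}=\norm{T_k^\alpha g}_{\Hk}\le\sigma_1^\alpha\norm{g}_{\Hk}\le R\), using \(\sigma_1\le\sup_x k(x,x)\le1\). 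The condition \(\delta\le 1/(8\kappa R)\) therefore gives \(2\delta\kappa R\le1/4\), so \(\norm{\widetilde f_\delta^*-f_\lambda}_{L^2(\mu)}\le\frac12\norm{f_\lambda-f^*}_{L^2(\mu)}\). The triangle inequality then gives \(\norm{\widetilde f_\delta^*-f^*}_{L^2(\mu)}\le\frac32\norm{f_\lambda-f^*}_{L^2(\mu)}\). Note that \(\kappa\) is absorbed entirely by the radius condition, which is why the final constant depends only on \(R\).

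\textbf{Step 3 (spectral KRR bound).} Expand \(g=\sum_j g_j\varphi_j\) with \(\sum_j g_j^2/\sigma_j\le R^2\). Then \(f_\lambda-f^*=-\lambda(T_k+\lambda I)^{-1}T_k^\alpha g\), and
\[
\norm{f_\lambda-f^*}_{L^2(\mu)}^2=\sum_j\Big(\frac{\lambda}{\sigma_j+\lambda}\Big)^2\sigma_j^{2\alpha+1}\frac{g_j^2}{\sigma_j}\le R^2\sup_{0<s\le1}\Big(\frac{\lambda}{s+\lambda}\Big)^2s^{2\alpha+1}.
\]
The supremum is at most \(\lambda^{2\min\{\alpha+1/2,1\}}\). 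When \(\alpha+1/2\le1\), use \(\lambda/(s+\lambda)\le(\lambda/s)^{\alpha+1/2}\). When \(\alpha+1/2>1\), use \(\lambda/(s+\lambda)\le\lambda/s\) together with \(s^{2\alpha-1}\le1\). Substituting \(\lambda=\delta^2\) gives \(\norm{\widetilde f_\delta^*-f^*}_{L^2(\mu)}\le\frac32R\,\delta^{\min\{2\alpha+1,2\}}\).

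The main obstacle is Step 1. The upper bound on \(D\) has to be quadratic in the \(L^2(\mu)\) distance, and it needs the independence of \(\varepsilon\) and \(X\) to reduce to the scalar Assumption~\ref{ass:noise}. It also has to hold for every \(f\), not just those near \(f^*\). This is what makes the perturbation term comparable to \(K\)'s curvature at \(f_\lambda\) only at second order. The remaining steps are the standard quadratic identity and the standard KRR spectral calculation.
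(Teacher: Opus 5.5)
Your proof is correct, and it takes a genuinely different route from the paper's. The paper starts from the first-order condition of Lemma~\ref{lem:ndat-population-foc} and rearranges it into a resolvent identity for $\widetilde f_\delta^*-f^*$ with three terms:
\begin{itemize}
\item the ridge bias $-\delta^2(T_k+\delta^2 I)^{-1}f^*$;
\item a term driven by the measurable sign selection $\widetilde s_\delta$, controlled through the derivative bound $|z|\le 4\kappa|t|$ for $z\in\partial q(t)$ from Lemma~\ref{lem:quadratic-absolute-risk};
\item a term driven by the norm subgradient $v_\delta$, controlled through $\|(T_k+\delta^2 I)^{-1}v_\delta\|_{L^2(\mu)}\le 1/(2\delta)$ and the a priori bound $\|\widetilde f_\delta^*-f^*\|_{L^2(\mu)}\le\delta R$.
\end{itemize}
It then absorbs a term with coefficient $\tfrac92\kappa R\delta\le 9/16$ and ends with the constant $16R/7$.

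You use only zeroth-order information. You combine the sandwich $K\le J\le K+2\delta\kappa\|f\|_{\Hk}\|f-f^*\|_{L^2(\mu)}^2$ with the exact quadratic expansion of the ridge functional around its minimizer $f_\lambda$. A single comparison $J(\widetilde f_\delta^*)\le J(f_\lambda)$ then gives $\|\widetilde f_\delta^*-f_\lambda\|_{L^2(\mu)}\le\tfrac12\|f_\lambda-f^*\|_{L^2(\mu)}$. This avoids subdifferential calculus, measurable selections, and the separate handling of the case $\widetilde f_\delta^*=0$ inside the first-order lemma. It also yields the slightly better constant $3R/2$. Your Step~3 is the same spectral computation as the paper's Lemma~\ref{lem:spectral-bias}.

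What the paper's route buys is the population optimality identity itself. That identity is reused in the proof of Theorem~\ref{thm:ndat-generalization}, where the empirical and population first-order conditions are subtracted. For the approximation bound alone, your comparison argument is the more economical one.
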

	
	Removing the noise premium improves the approximation rate from
	\(\delta^{(\alpha+1/2)/(\alpha+1)}\) to the ridge order
	\(\delta^{\min\{2\alpha+1,2\}}\). 
	\subsection{Two-stage estimator and prediction error}
	We describe the definition of the proposed noise-debiased adversarial training in this section.

	We estimate \(\E|\varepsilon|\)
	from independent data and derive the prediction error of the resulting
	two-stage estimator.
	The quantity \(\E|\varepsilon|\) is unknown and must be estimated using data
	independent of the sample used to construct the final estimator. On a sample
	\((X_j^a,Y_j^a)_{j=1}^{N_a}\), compute
	\begin{equation}\label{eq:krr}
		\widehat f^{\rm KRR}
		\in
		\argmin_{f\in\Hk}
		\left\{
		\frac1{N_a}\sum_{j=1}^{N_a}(Y_j^a-f(X_j^a))^2
		+\lambda_a\norm{f}_{\Hk}^2
		\right\},
	\end{equation}
	and use an independent sample \((X_i^e,Y_i^e)_{i=1}^{N_e}\) to set
	\[
	\widetilde m
	=
	\frac1{N_e}\sum_{i=1}^{N_e}
	|Y_i^e-\widehat f^{\rm KRR}(X_i^e)|.
	\]
	Given an independent fitting sample \((X_i,Y_i)_{i=1}^n\), let
	\(S_n=\operatorname{span}\{k_{X_1},\ldots,k_{X_n}\}\) and define
	\begin{equation}
		\label{eq:ndat-estimator}
		(\widetilde f_\delta,\widetilde r_\delta)
		\in
		\argmin_{\substack{f\in S_n,\ r\ge\norm{f}_{\Hk}}}
		\left\{
		\frac1n\sum_{i=1}^n
		\left(|Y_i-f(X_i)|+\delta r\right)^2
		-2\delta\widetilde m r
		\right\}.
	\end{equation}

	The auxiliary variable \(r\) makes the program convex, and
	\(\widetilde f_\delta\) is the reported predictor.  At the population level,
	the minimizing radius equals \(\norm{\widetilde f_\delta^*}_{\Hk}\), so the
	function component agrees with problem~\eqref{prob:populationndat}.
	Theorem~\ref{thm:ndat-generalization} analyzes the empirical program directly and
	does not require the radius constraint to be active.

	\begin{theorem}[Estimation and prediction errors for noise-debiased adversarial training]
		\label{thm:ndat-generalization}
		Under Assumptions~\ref{ass:model}, \ref{ass:noise},
		\ref{ass:noise-tail}, and \ref{ass:source}, there are constants \(c,C>0\), depending only on
		\(R,\sigma,\kappa\), such that the following holds.  Let
		\(0<\eta<1\), suppose that \(\widetilde m\ge0\) is computed independently
		of the fitting sample, and assume
		\[
		0<\delta\le c,
		\qquad
		\delta^2\le\norm{T_k}_{\mathrm{op}},
		\qquad
		n\delta^2
		\ge
		C\log\!\left(\frac{2}{\eta\delta^2}\right).
		\]
		Then, with probability at least \(1-\eta\), every minimizer
		\((\widetilde f_\delta,\widetilde r_\delta)\) of
		problem~\eqref{eq:ndat-estimator} satisfies
		\[
		\begin{aligned}
			\norm{\widetilde f_\delta-\widetilde f_\delta^*}_{L^2(\mu)}
			&\lesssim
			\sqrt{
				\frac{\mathcal N(\delta^2)\log(2/(\eta\delta^2))}{n}
			}
			+
			\left|\widetilde m-\E|\varepsilon|\right|,
			\\[2pt]
			\norm{\widetilde f_\delta-f^*}_{L^2(\mu)}
			&\lesssim
			\delta^{\min\{2\alpha+1,2\}}+
			\sqrt{
				\frac{\mathcal N(\delta^2)\log(2/(\eta\delta^2))}{n}
			}
			+
			\left|\widetilde m-\E|\varepsilon|\right|,
		\end{aligned}
		\]
		where the implicit constants depend only on \(R,\sigma,\kappa\).
	\end{theorem}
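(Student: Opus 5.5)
The second display follows from the first by the triangle inequality and Theorem~\ref{thm:ndat-approx}, so the work is the estimation bound for $\norm{\widetilde f_\delta-\widetilde f_\delta^*}_{L^2(\mu)}$. The plan is to exploit the algebraic structure of the debiased objective. Expanding the square in \eqref{eq:ndat-estimator} gives
\[
\widehat L(f,r)=\frac1n\sum_{i=1}^n(Y_i-f(X_i))^2+\delta^2r^2+2\delta r\Big(\frac1n\sum_{i=1}^n|Y_i-f(X_i)|-\widetilde m\Big).
\]
Its population analogue is $L(f,r)=\E(Y-f(X))^2+\delta^2r^2+2\delta r(\E|Y-f(X)|-\E|\varepsilon|)$, up to the substitution of $\E|\varepsilon|$ by $\widetilde m$. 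By Assumption~\ref{ass:noise}, the bracket $\E|Y-f(X)|-\E|\varepsilon|$ lies in $[0,\kappa\norm{f-f^*}_{L^2(\mu)}^2]$ after conditioning on $X$. Hence, for $\delta r\kappa$ small, the population objective is a perturbation of the ridge functional $\norm{f-f^*}^2_{L^2(\mu)}+\delta^2 r^2$. This functional is $2$-strongly convex in the norm $\norm{h}_\delta^2:=\norm{h}_{L^2(\mu)}^2+\delta^2\norm{h}_{\Hk}^2$, and the mixed term only contributes a quadratic $2\delta r\kappa\norm{f-f^*}^2$. That contribution is absorbed once $\delta r\le 1/(4\kappa)$, which the condition $\delta\le c$ together with an a priori radius bound guarantees.

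\textbf{Step 1: a priori bounds.} Comparing $\widehat L(\widetilde f_\delta,\widetilde r_\delta)\le\widehat L(0,0)$ and using $\widetilde m\ge0$ bounded in terms of empirical response moments, together with Bernstein under Assumption~\ref{ass:noise-tail}, gives $\delta\widetilde r_\delta\lesssim 1$ with high probability. The same argument gives $\norm{\widetilde f^*_\delta}_{\Hk}\lesssim R$ from Theorem~\ref{thm:ndat-approx}'s proof. I would localize to an RKHS ball of radius $\asymp 1/\delta$, or better $\asymp R$ for the population minimizer.

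\textbf{Step 2: basic inequality.} Set $h=\widetilde f_\delta-\widetilde f^*_\delta$ and write the first-order (subgradient) optimality condition of the empirical problem tested in direction $h$, using convexity of $(f,r)\mapsto\widehat L$. This yields
\[
\norm{h}_\delta^2\lesssim \big|\langle (\widehat T-T_k)(\widetilde f^*_\delta-f^*)-\widehat{\xi},h\rangle\big|+\delta\,|\text{empirical process of }|Y-f(X)|\text{ at }h|+\delta|\widetilde m-\E|\varepsilon||\,|\widetilde r_\delta-r^*|,
\]
where $\widehat T=\frac1n\sum k_{X_i}\otimes k_{X_i}$ and $\widehat\xi=\frac1n\sum\varepsilon_ik_{X_i}$. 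The quadratic linear terms are handled exactly as in KRR with $\lambda=\delta^2$. I would bound $\norm{(T_k+\delta^2)^{-1/2}(\widehat T-T_k)(T_k+\delta^2)^{-1/2}}_{\mathrm{op}}\le 1/2$ via an operator Bernstein inequality. This is where $n\delta^2\ge C\log(2/(\eta\delta^2))$ and $\delta^2\le\norm{T_k}_{\mathrm{op}}$ enter, through the intrinsic-dimension bound $\mathcal N(\delta^2)\lesssim \delta^{-2}$. I would then bound $\norm{(T_k+\delta^2)^{-1/2}\widehat\xi}_{\Hk}\lesssim\sqrt{\mathcal N(\delta^2)\log(2/\eta)/n}$ by the Hilbert-space Bernstein inequality. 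These give the term $\sqrt{\mathcal N(\delta^2)\log/n}\,\norm{h}_\delta$.

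\textbf{Step 3: the mixed absolute term (main obstacle).} The difficulty is the empirical process $\delta r\big(\frac1n\sum|Y_i-f(X_i)|-\E|Y-f(X)|\big)$ differenced between $\widetilde f_\delta$ and $\widetilde f^*_\delta$. Because $t\mapsto|t|$ is $1$-Lipschitz, contraction reduces it to $\delta\sup_h\frac1n\sum\epsilon_ih(X_i)$ over $\{\norm{h}_\delta\le\rho\}$. Its Rademacher complexity is $\lesssim\rho\sqrt{\mathcal N(\delta^2)/n}$, and Talagrand's inequality supplies the log factor, so after multiplying by $\delta r\lesssim1$ it is of the same order as Step 2. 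The variation in $r$, namely $2\delta(r-r^*)(\text{bracket})$, is controlled by $\delta|r-r^*|\le\norm{h}_\delta$ times a bracket that is $\lesssim$ the same empirical deviation plus $|\widetilde m-\E|\varepsilon||$. Finally, the inequality $\norm{h}_\delta^2\lesssim(\sqrt{\mathcal N(\delta^2)\log/n}+|\widetilde m-\E|\varepsilon||)\norm{h}_\delta$ gives the first claim, since $\norm{h}_{L^2(\mu)}\le\norm{h}_\delta$. If contraction proves delicate because the absolute value is non-smooth at zero residuals, I would fall back on a peeling argument over shells of $\norm{h}_\delta$.
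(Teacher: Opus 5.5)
The reduction to the estimation bound and the $\delta^2$-regularized operator and noise concentration match the paper. The gap is in how you treat the radius variable $r$. You identify $r$ with $\norm{f}_{\Hk}$ twice. First, you call $\norm{f-f^*}_{L^2(\mu)}^2+\delta^2r^2$ strongly convex in $\norm{h}_\delta^2=\norm{h}_{L^2(\mu)}^2+\delta^2\norm{h}_{\Hk}^2$. Second, you use $\delta|\widetilde r_\delta-r^*|\le\norm{h}_\delta$. Neither holds in the $(f,r)$ coordinates of problem~\eqref{eq:ndat-estimator}, because the constraint $r\ge\norm{f}_{\Hk}$ can be slack at the empirical minimizer. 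For fixed $f=\widetilde f_\delta$ the objective is a convex quadratic in $r$, minimized at $(\widetilde m-n^{-1}\sum_i|Y_i-\widetilde f_\delta(X_i)|)/\delta$. This may exceed $\norm{\widetilde f_\delta}_{\Hk}$, and the theorem does not exclude it.

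In these coordinates $\delta^2r^2$ has curvature only in $r$. Combining the empirical and population optimality conditions therefore gives only $n^{-1}\sum_ih(X_i)^2+\delta^2(\widetilde r_\delta-r^*)^2$ on the left, with no $\delta^2\norm{h}_{\Hk}^2$. That missing term is exactly what your Step~2 needs to compare $n^{-1}\sum_ih(X_i)^2$ with $\norm{h}_{L^2(\mu)}^2$ and to pair $n^{-1}\sum_i\varepsilon_ik_{X_i}$ with $(T_k+\delta^2I)^{1/2}h$. Also, $\delta|\widetilde r_\delta-r^*|$ is not controlled by $h$, so your $\widetilde m$-term is left unbounded.

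The idea you are missing is the lift to $\overline{\mathcal H}=\Hk\oplus\R$. Writing $r=\norm{(f,t)}_{\overline{\mathcal H}}$ turns $\delta^2r^2$ into $\delta^2\norm{w}_{\overline{\mathcal H}}^2$, which is strongly convex in $w=(f,t)$. The left side becomes $n^{-1}\sum_ih(X_i)^2+\delta^2\norm{\widehat w_\delta-w_\delta^*}_{\overline{\mathcal H}}^2$, which dominates both $\delta^2\norm{h}_{\Hk}^2$ and $\delta^2(\widetilde r_\delta-r^*)^2$. The price is that the lifted objective is not convex, since $-2\delta\widetilde m\norm{w}_{\overline{\mathcal H}}$ is concave, so no convexity-based basic inequality is available. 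The paper instead subtracts the exact first-order conditions and discards the nonlinear terms by two monotonicity facts. One is monotonicity of the norm subdifferential, weighted by the nonnegative population coefficient $\E|Y-\widetilde f_\delta^*(X)|-\E|\varepsilon|$. The other is monotonicity of the absolute-loss subdifferential. Staying in $(f,r)$ coordinates, an objective-value comparison based on $L(f,r)\ge L(f,\norm{f}_{\Hk})+\delta^2(r-\norm{f}_{\Hk})^2$ for the population objective $L$ would be another repair, but that is not the argument you wrote.

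Two further steps do not go through as written, and the paper's route avoids both.
\begin{enumerate}
\item Step~1 treats $\widetilde m$ as bounded, but the theorem allows any $\widetilde m\ge0$ computed independently of the fitting sample. Comparison with $(0,0)$ only gives $\delta\widetilde r_\delta\le2\widetilde m+(n^{-1}\sum_iY_i^2)^{1/2}$, and this quantity multiplies your uniform empirical process in Step~3.
\item The smallness needed for absorption cannot come from ``$\delta r\le1/(4\kappa)$'' at the empirical radius, which a priori is only $O((1+\widetilde m)/\delta)$. It has to come from the population minimizer, through $\norm{\widetilde f_\delta^*}_{\Hk}\le R$ and $\kappa\norm{\widetilde f_\delta^*-f^*}_{L^2(\mu)}\lesssim\kappa R\delta^{\min\{2\alpha+1,2\}}$, as in \eqref{eq:ndat-sign-bound}.
\end{enumerate}
In the paper's identity, $\norm{\widehat w_\delta}_{\overline{\mathcal H}}$ enters only in two ways: as a nonnegative weight on the monotone, hence nonpositive, sign term, and through $\delta|\norm{\widehat w_\delta}_{\overline{\mathcal H}}-\norm{w_\delta^*}_{\overline{\mathcal H}}|\le\delta\norm{\widehat w_\delta-w_\delta^*}_{\overline{\mathcal H}}$. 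So neither an a priori bound on $\widetilde r_\delta$ nor any restriction on $\widetilde m$ is needed. The one data-dependent absolute-loss difference is handled by the Lipschitz bound and AM--GM, which produce the coefficient $\tfrac12$. Everything else is concentration at fixed objects ($\widetilde f_\delta^*$, $f^*-\widetilde f_\delta^*$, and the selection $\widetilde s_\delta$, via Lemma~\ref{lem:concentration-event}). Contraction, Talagrand's inequality and peeling are therefore not needed.
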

	Theorem~\ref{thm:ndat-generalization} separates the estimation error from
	the approximation error established in Theorem~\ref{thm:ndat-approx}. To
	obtain a sample-size dependent rate, it remains to control the effective
	dimension term and the error in \(\widetilde m\). Assumption~\ref{ass:noise}
	ensures that the bias in \(\widetilde m\) caused by the preliminary KRR
	estimator is bounded by its squared prediction error. The preliminary
	estimator therefore need not attain the prediction rate of the proposed
	estimator. Combining these observations with the polynomial eigenvalue decay
	gives the following result.
	
	\begin{corollary}[Two-stage prediction rate]
		\label{cor:ndat-two-stage-rate}
		Under Assumptions~\ref{ass:model}, \ref{ass:noise},
		\ref{ass:noise-tail}, \ref{ass:source}, and \ref{ass:eigendecay}, suppose
		\(0<\alpha\le1/2\).  Fix \(q>0\), let
		\(N_a\asymp N_e\asymp n\), and construct \(\widetilde m\) from the two
		independent samples above with
		\[
		\lambda_a
		=
		N_a^{-\frac{\beta}{2\beta(\alpha+1/2)+1}},
		\qquad
		\delta_n
		=
		n^{-\frac{\beta}{2(2\beta(\alpha+1/2)+1)}}.
		\]
		Then, for all sufficiently large \(n\), with probability at least
		\(1-3n^{-q}\), every minimizing pair in
		problem~\eqref{eq:ndat-estimator} satisfies
		\[
		\norm{\widetilde f_{\delta_n}-f^*}_{L^2(\mu)}
		\lesssim
		n^{-\frac{\beta(\alpha+1/2)}{2\beta(\alpha+1/2)+1}}
		\sqrt{\log n},
		\]
		where the implicit constant depends only on
		\(\alpha,\beta,R,\sigma,\kappa,c_\beta,C_\beta,q\) and the constants implicit in
		\(N_a\asymp N_e\asymp n\).
	\end{corollary}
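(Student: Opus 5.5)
The plan is to plug the choices of \(\lambda_a\) and \(\delta_n\) into the second bound of Theorem~\ref{thm:ndat-generalization}. Write \(r:=\beta(\alpha+1/2)/(2\beta(\alpha+1/2)+1)\) for the target exponent. There are then three terms to control.

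\emph{Approximation term.} Since \(\alpha\le1/2\), we have \(\min\{2\alpha+1,2\}=2\alpha+1\). With \(\delta_n^2=n^{-\beta/(2\beta(\alpha+1/2)+1)}\), this gives
\[
\delta_n^{2\alpha+1}=(\delta_n^2)^{\alpha+1/2}=n^{-r}.
\]

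\emph{Effective-dimension term.} Assumption~\ref{ass:eigendecay} gives the standard bound \(\mathcal N(\lambda)\lesssim\lambda^{-1/\beta}\), obtained by splitting the trace sum at \(j\asymp\lambda^{-1/\beta}\) and using \(\beta>1\). Applied at \(\lambda=\delta_n^2\), it gives
\[
\mathcal N(\delta_n^2)\lesssim n^{1/(2\beta(\alpha+1/2)+1)},
\]
so that
\[
\frac{\mathcal N(\delta_n^2)}{n}\lesssim n^{-2r}.
\]
Taking \(\eta=n^{-q}\), the logarithm \(\log(2/(\eta\delta_n^2))\) is \(O(\log n)\). This yields the \(\sqrt{\log n}\) factor. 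The hypotheses of Theorem~\ref{thm:ndat-generalization} hold for large \(n\): \(\delta_n\to0\), \(\delta_n^2\le\norm{T_k}_{\mathrm{op}}\) eventually, and \(n\delta_n^2\) grows polynomially while the log condition grows only logarithmically. This step uses up probability \(n^{-q}\).

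\emph{Noise-estimate term.} I expect this to be the main work. Conditionally on the first sample, write \(\hat h=\widehat f^{\rm KRR}-f^*\). Then
\[
\E\bigl[|Y^e-\widehat f^{\rm KRR}(X^e)|\,\big|\,\hat h\bigr]=\E_X\,\E_\varepsilon|\varepsilon-\hat h(X)|,
\]
and I would split
\[
|\widetilde m-\E|\varepsilon||\le \bigl|\widetilde m-\E[\,\cdot\mid\hat h]\bigr| + \Bigl(\E_X\E_\varepsilon|\varepsilon-\hat h(X)|-\E|\varepsilon|\Bigr).
\]
By Assumption~\ref{ass:noise}, the bias term lies in \([0,\kappa\norm{\hat h}_{L^2(\mu)}^2]\). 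The standard KRR analysis of \citet{caponnetto2007optimal} (equivalently, the ridge analogue of Theorem~\ref{thm:ndat-generalization}) with \(\lambda_a=N_a^{-\beta/(2\beta(\alpha+1/2)+1)}\) gives \(\norm{\hat h}_{L^2}\lesssim N_a^{-r}\sqrt{\log N_a}\) with probability \(1-n^{-q}\), so the bias is \(O(n^{-2r}\log n)=o(n^{-r})\).

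For the fluctuation term, I would apply Bernstein's inequality conditionally. The summands \(|\varepsilon_i+f^*(X_i)-\widehat f^{\rm KRR}(X_i)|\) have subexponential tails by Assumption~\ref{ass:noise-tail}. The shift is bounded by \(\norm{f^*}_\infty+\norm{\widehat f^{\rm KRR}}_\infty\), and both are controlled through RKHS norms because the kernel is bounded. I need \(\norm{\widehat f^{\rm KRR}}_{\Hk}=O(1)\) on the KRR good event. This holds for \(\alpha>0\) under the source condition in the standard analysis; if it were not available, a polylog bound on the norm would still suffice. Bernstein then gives a deviation \(\lesssim\sqrt{\log n/N_e}+\log n/N_e\lesssim n^{-1/2}\sqrt{\log n}\le n^{-r}\sqrt{\log n}\) with probability \(1-n^{-q}\), since \(r<1/2\).

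A union bound over the three events gives probability at least \(1-3n^{-q}\), and summing the three terms gives the claimed rate. The one point needing care is verifying the KRR high-probability bound together with its uniform sup-norm control under these assumptions.
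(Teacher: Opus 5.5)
Your proposal is correct. It shares the paper's three-term structure: the approximation term \(\delta_n^{2\alpha+1}=n^{-r}\), the effective-dimension term via \(\mathcal N(\delta_n^2)\lesssim\delta_n^{-2/\beta}\), and the bias/fluctuation split of \(|\widetilde m-\E|\varepsilon||\) with bias at most \(\kappa\norm{\hat h}_{L^2(\mu)}^2\). Where you differ is in controlling the fluctuation of \(\widetilde m\).

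You take \(\norm{\widehat f^{\rm KRR}}_{\Hk}=O(1)\) from an RKHS-norm rate for KRR. This is legitimate here because \(\alpha>0\): for example, the interpolation-norm bounds of Fischer and Steinwart with \(\gamma=1\) give squared RKHS error of order \(\log^2 n\cdot n^{-2\alpha\beta/(\beta(2\alpha+1)+1)}\) at exactly this \(\lambda_a\). A single scalar Bernstein bound then gives \(n^{-1/2}\sqrt{\log n}\).

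The paper uses only the \(L^2\) theorem of Caponnetto and De Vito, together with the elementary bound \(\norm{\widehat f^{\rm KRR}}_{\Hk}\lesssim\lambda_a^{-1/2}\log n\). That bound comes from comparing the KRR objective at \(\widehat f^{\rm KRR}\) and \(f^*\) and bounding \(\max_j|\varepsilon_j^a|\). It grows polynomially, and inserting it into your single Bernstein step would give a deviation of order \(n^{-(2\beta\alpha+1)/(2D)}\) with \(D=2\beta(\alpha+1/2)+1\). That is slower than \(n^{-r}\) because \(\beta>1\).

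The paper therefore splits the residual into two pieces. The first is \(|\varepsilon-\hat h(X)|-|\hat h(X)|\), which is bounded by \(|\varepsilon|\). The second is \(|\hat h(X)|\), whose variance is at most \(\norm{\hat h}_{L^2(\mu)}^2\). The large sup-norm bound then enters only the \(\log n/N_e\) term, and the exponent gap \((\beta\alpha+1)/D>0\) finishes the argument.

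Your route is cleaner but imports a stronger KRR result than the one the paper cites. The paper's route is more self-contained, at the price of the variance split. Your fallback remark that a polylogarithmic norm bound would suffice is accurate, but it does not cover the elementary polynomial bound. So either the \(O(1)\) RKHS-norm rate or the paper's split is genuinely needed.
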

	Corollary~\ref{cor:ndat-two-stage-rate} shows that the proposed
	noise-debiased estimator attains the minimax-optimal polynomial rate, up to
	the factor \(\sqrt{\log n}\), uniformly over the source ball for
	\(0<\alpha\le1/2\)
	\citep{caponnetto2007optimal,mendelson2010regularization,fischer2020sobolev}.
	This improves upon the slower rate obtained for adversarial training
	in Section~\ref{sec:generalization}.  The improvement results from removing
	the noise premium from the mixed robustness term, while the error in
	estimating \(\E|\varepsilon|\) is of smaller order and does not change the
	optimal exponent.
	
	\subsection{Effect of noise debiasing on adversarial loss}
	Removing the noise premium improves clean prediction but may increase adversarial loss. The following proposition bounds this increase at the same perturbation radius.
	
	\begin{proposition}[Increase in adversarial loss]
		\label{prop:ndat-robustness-cost}
		Let \(\widehat f_\delta\) be any minimizer of problem~\eqref{problem}, and let
		\((\widetilde f_\delta,\widetilde r_\delta)\) be any minimizing pair in
		problem~\eqref{eq:ndat-estimator}.  Then, for \(\delta>0\) and
		\(\widetilde m\ge0\),
		\[
		\begin{aligned}
			0
			&\le
			\frac1n\sum_{i=1}^n
			\bigg[
			\left(
			|Y_i-\widetilde f_\delta(X_i)|
			+\delta\norm{\widetilde f_\delta}_{\Hk}
			\right)^2
			\;-
			\left(
			|Y_i-\widehat f_\delta(X_i)|
			+\delta\norm{\widehat f_\delta}_{\Hk}
			\right)^2
			\bigg]
			\\
			&\le
			2\delta\widetilde m
			\left(
			\widetilde r_\delta
			-\norm{\widehat f_\delta}_{\Hk}
			\right).
		\end{aligned}
		\]
	\end{proposition}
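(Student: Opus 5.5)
The plan is to compare the two optimization problems through their objective values, using optimality of each minimizer against the other. Write
\[
L(f)=\frac1n\sum_{i=1}^n\left(|Y_i-f(X_i)|+\delta\norm{f}_{\Hk}\right)^2
\]
for the adversarial objective of problem~\eqref{problem}, and
\[
G(f,r)=\frac1n\sum_{i=1}^n\left(|Y_i-f(X_i)|+\delta r\right)^2-2\delta\widetilde m r
\]
for the objective of problem~\eqref{eq:ndat-estimator}.

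\emph{Lower bound.} This is immediate: $\widehat f_\delta$ minimizes $L$ over all of $\Hk$, and $\widetilde f_\delta\in S_n\subseteq\Hk$. Hence $L(\widetilde f_\delta)\ge L(\widehat f_\delta)$.

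\emph{Upper bound.} Here I use $(\widehat f_\delta,\norm{\widehat f_\delta}_{\Hk})$ as a competitor in problem~\eqref{eq:ndat-estimator}. It is feasible, since Proposition~\ref{prop:representer} gives $\widehat f_\delta\in S_n$, and the constraint $r\ge\norm{f}_{\Hk}$ holds with equality. Optimality of the pair $(\widetilde f_\delta,\widetilde r_\delta)$ then gives
\[
G(\widetilde f_\delta,\widetilde r_\delta)\le G\bigl(\widehat f_\delta,\norm{\widehat f_\delta}_{\Hk}\bigr)=L(\widehat f_\delta)-2\delta\widetilde m\norm{\widehat f_\delta}_{\Hk}.
\]
Next, the map $r\mapsto(|Y_i-f(X_i)|+\delta r)^2$ is nondecreasing for $r\ge0$, and $\widetilde r_\delta\ge\norm{\widetilde f_\delta}_{\Hk}\ge0$. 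Therefore
\[
L(\widetilde f_\delta)\le \frac1n\sum_{i=1}^n\left(|Y_i-\widetilde f_\delta(X_i)|+\delta\widetilde r_\delta\right)^2=G(\widetilde f_\delta,\widetilde r_\delta)+2\delta\widetilde m\widetilde r_\delta.
\]
Chaining the two displays yields
\[
L(\widetilde f_\delta)-L(\widehat f_\delta)\le 2\delta\widetilde m\bigl(\widetilde r_\delta-\norm{\widehat f_\delta}_{\Hk}\bigr),
\]
which is the claimed upper bound.

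\emph{Main obstacle.} The only delicate point is feasibility of the competitor, namely that $\widehat f_\delta$ lies in $S_n$. This is exactly what Proposition~\ref{prop:representer} provides for every minimizer. The rest is just monotonicity in $r$, which needs $\delta>0$ and nonnegative absolute residuals. The hypothesis $\widetilde m\ge0$ is not needed for the inequalities themselves; it only ensures that the bound reads as a nonnegative cost whenever $\widetilde r_\delta\ge\norm{\widehat f_\delta}_{\Hk}$.
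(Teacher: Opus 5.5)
Your proposal is correct and follows the paper's proof: the lower bound from optimality of $\widehat f_\delta$ in problem~\eqref{problem}, and the upper bound from using $(\widehat f_\delta,\norm{\widehat f_\delta}_{\Hk})$ as a competitor in problem~\eqref{eq:ndat-estimator} together with monotonicity of the squared term in $r$. You also make explicit two points the paper leaves implicit: feasibility of the competitor relies on Proposition~\ref{prop:representer}, and the sign condition $\widetilde m\ge0$ is not needed for either inequality.
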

	
	Thus, at the same perturbation radius, the clean-prediction improvement from
	debiasing is accompanied by an increase in adversarial loss
	that is bounded by the estimated correction.

	\subsection{Tractable computation}
	The estimator has the following finite-dimensional convex formulation.
	
	Let \(K\) be the Gram matrix with \(K_{ij}=k(X_i,X_j)\).  Substituting
	\(f=\sum_{j=1}^n a_jk_{X_j}\) into problem~\eqref{eq:ndat-estimator} gives
	\begin{equation}
		\label{eq:ndat-tractable}
		(\widetilde a_\delta,\widetilde r_\delta)
		\in\argmin_{\substack{a\in\R^n,\ r\ge0\\a^\top Ka\le r^2}}
		\left\{
		\frac1n\sum_{i=1}^n
		\left(|Y_i-(Ka)_i|+\delta r\right)^2
		-2\delta\widetilde m r
		\right\}.
	\end{equation}
	
	The following proposition establishes exact equivalence between
	problems~\eqref{eq:ndat-estimator} and \eqref{eq:ndat-tractable}.
	
	\begin{proposition}[Equivalent finite-dimensional formulation]
		\label{prop:ndat-convex-reformulation}
		Problems~\eqref{eq:ndat-estimator} and \eqref{eq:ndat-tractable} are convex,
		have minimizers, and have the same optimal value.  If
		\((\widetilde a_\delta,\widetilde r_\delta)\) minimizes
		problem~\eqref{eq:ndat-tractable}, then
		\[
		\widetilde f_\delta
		=
		\sum_{j=1}^n\widetilde a_{\delta,j}k_{X_j}
		\]
		and \((\widetilde f_\delta,\widetilde r_\delta)\) is a minimizing pair in
		problem~\eqref{eq:ndat-estimator}.  Conversely, if \((f,r)\) minimizes
		problem~\eqref{eq:ndat-estimator} and
		\(f=\sum_{j=1}^n a_jk_{X_j}\), then \((a,r)\) minimizes
		problem~\eqref{eq:ndat-tractable}.
	\end{proposition}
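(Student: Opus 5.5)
The plan is to pass through the linear parametrization map \(A:\R^n\to S_n\), \(A(a)=\sum_{j=1}^n a_jk_{X_j}\). Write \(K_{ij}=k(X_i,X_j)\). By the reproducing property, \(A(a)(X_i)=(Ka)_i\) and \(\norm{A(a)}_{\Hk}^2=a^\top Ka\). For \(r\ge0\), the constraint \(r\ge\norm{A(a)}_{\Hk}\) is therefore equivalent to \(a^\top Ka\le r^2\).

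In problem~\eqref{eq:ndat-estimator}, the constraint \(r\ge\norm{f}_{\Hk}\) forces \(r\ge0\). So the feasible set of \eqref{eq:ndat-estimator} is exactly the image of the feasible set of \eqref{eq:ndat-tractable} under \((a,r)\mapsto(A(a),r)\). This map is surjective onto the feasible set, since \(A\) maps onto \(S_n\). Both objectives take the same value at corresponding points. Hence the optimal values coincide, and both directions of the correspondence of minimizers follow immediately. For the converse direction, any representation \(f=\sum_j a_jk_{X_j}\), even a non-unique one, gives a feasible \((a,r)\) with the same objective value, hence a minimizer.

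For convexity, consider first problem~\eqref{eq:ndat-tractable}. The feasible set \(\{a^\top Ka\le r^2,\ r\ge0\}\) is \(\{\norm{K^{1/2}a}_2\le r\}\), a second-order cone, because \(K\) is positive semidefinite. For the objective, note that \(|Y_i-(Ka)_i|+\delta r\) is convex and nonnegative in \((a,r)\), and \(t\mapsto t^2\) is convex and nondecreasing on \([0,\infty)\). The composition is therefore convex, and the term \(-2\delta\widetilde m r\) is linear. The same argument applies to \eqref{eq:ndat-estimator}: there the feasible set \(\{(f,r): f\in S_n,\ \norm{f}_{\Hk}\le r\}\) is convex because the norm is convex.

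For existence, it suffices to work with \eqref{eq:ndat-tractable} and transfer a minimizer via the correspondence. Since \(A(a)\) depends only on \(a\) modulo \(\ker K\), I restrict \(a\) to \(\operatorname{range}(K)\). On this subspace, \(a^\top Ka\ge\lambda_{\min}^+\norm{a}_2^2\), where \(\lambda_{\min}^+\) is the smallest positive eigenvalue of \(K\) (or \(a=0\) if \(K=0\)). The objective is bounded below by
\[
\delta^2r^2-2\delta\widetilde m r,
\]
which tends to \(\infty\) as \(r\to\infty\). Moreover, \(\norm{a}_2\le r/\sqrt{\lambda_{\min}^+}\) on the feasible set, so the restricted feasible set intersected with any sublevel set is closed and bounded. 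The objective is continuous, so Weierstrass gives a minimizer. Any \(a\) differing from it by an element of \(\ker K\) attains the same value, so it is a minimizer of the full problem~\eqref{eq:ndat-tractable} as well.

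The only mildly delicate point is this existence step. Coercivity in \(a\) fails along \(\ker K\), and the linear term \(-2\delta\widetilde m r\) must be dominated, which the quadratic \(\delta^2r^2\) does. The reduction to \(\operatorname{range}(K)\) handles the kernel directions.
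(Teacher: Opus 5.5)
Your proof is correct and follows essentially the same route as the paper. The correspondence \((a,r)\mapsto(\sum_j a_jk_{X_j},r)\) preserves feasibility and objective values in both directions. Convexity comes from the second-order cone and the square of a nonnegative convex function, and existence from restricting \(a\) to \(\operatorname{range}(K)\) and using the lower bound \(\delta^2r^2-2\delta\widetilde m r\). You are slightly more explicit than the paper: you check convexity of problem~\eqref{eq:ndat-estimator} directly, and you quantify the bound on \(a\) through the smallest positive eigenvalue of \(K\).
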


	\section{Signal Collapse}
	\label{sec:collapse}
	We study signal collapse in adversarial training when the robustness radius is fixed in this section.
	
	More data
	reduce estimation error but do not change the population minimizer. The
	following result gives the exact collapse thresholds for adversarial training
	and the proposed noise-debiased counterpart.
	
	\begin{corollary}[Exact collapse thresholds]
		\label{cor:breakdown-radius}
		Suppose Assumptions~\ref{ass:model} and \ref{ass:noise} hold and
		\(f^*\ne0\) in \(L^2(\mu)\).  Define
		\[
		\delta^{\rm AT}
		:=
		\frac{\norm{T_kf^*}_{\Hk}}{\E|Y|},
		\qquad
		\delta^{\rm ND}
		:=
		\frac{\norm{T_kf^*}_{\Hk}}{\E|Y|-\E|\varepsilon|},
		\]
		where \(\delta^{\rm ND}:=\infty\) if \(\E|Y|=\E|\varepsilon|\).
		The zero function minimizes the population adversarial training problem
		\eqref{populationproblem} if and only if \(\delta\ge\delta^{\rm AT}\), and it
		minimizes the population noise-debiased problem \eqref{prob:populationndat} if
		and only if \(\delta\ge\delta^{\rm ND}\).  In either case, the zero function
		is the unique minimizer when the corresponding inequality is strict.  Moreover,
		\[
		\delta^{\rm ND}\ge\delta^{\rm AT},
		\qquad
		\frac{\delta^{\rm ND}}{\delta^{\rm AT}}
		=
		\frac{\E|Y|}{\E|Y|-\E|\varepsilon|}
		\ \ge\
		\frac{\E|Y|}{\kappa\norm{f^*}_{L^2(\mu)}^2},
		\]
		with both sides of the equality interpreted as \(\infty\) when
		\(\E|Y|=\E|\varepsilon|\).
	\end{corollary}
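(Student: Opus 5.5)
Both problems are convex in \(f\), so the plan is to characterize when \(0\) is a minimizer through the one-sided directional derivative at \(0\). Write \(F(f)=\E[(|Y-f(X)|+\delta\norm{f}_{\Hk})^2]\) and \(G(f)\) for the noise-debiased objective in \eqref{prob:populationndat}. Convexity of \(G\) is not automatic, because the term \(2\delta\norm{f}_{\Hk}(\E|Y-f(X)|-\E|\varepsilon|)\) has a sign-indefinite factor. I will sidestep this by writing \(G(f)=F(f)-\E|Y-f(X)|^2+\norm{f-f^*}^2_{L^2(\mu)}-2\delta\E|\varepsilon|\norm{f}_{\Hk}\). Since \(\E(Y-f(X))^2=\norm{f-f^*}^2_{L^2(\mu)}+\E\varepsilon^2\), this reduces to \(G=F-\E\varepsilon^2-2\delta\E|\varepsilon|\norm{f}_{\Hk}\). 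That is a convex function minus a positively homogeneous convex term, so I will not rely on convexity of \(G\) and will handle \(G\) by a direct ray argument instead.

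\textbf{Step 1 (expansion along rays).} For \(h\in\Hk\) with \(\norm{h}_{\Hk}=1\) and \(t\ge0\), I expand
\[
F(th)=\E|Y-th(X)|^2+2\delta t\,\E|Y-th(X)|+\delta^2t^2 .
\]
The derivative at \(t=0^+\) is \(-2\E[Yh(X)]+2\delta\E|Y|=-2\inner{T_kf^*}{h}_{\Hk}+2\delta\E|Y|\), using \(\E[\varepsilon h(X)]=0\) and the RKHS realization of \(T_k\). By convexity, \(0\) minimizes \(F\) iff this derivative is \(\ge0\) for every unit \(h\), i.e.\ iff \(\delta\E|Y|\ge\sup_h\inner{T_kf^*}{h}_{\Hk}=\norm{T_kf^*}_{\Hk}\), which is \(\delta\ge\delta^{\rm AT}\). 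For uniqueness under strict inequality, I will lower-bound \(F(th)-F(0)\) by \(t\) times the directional derivative. This uses convexity in \(t\) together with the fact that the \(t\)-derivative is nondecreasing. The resulting bound is \(\ge 2t(\delta\E|Y|-\norm{T_kf^*}_{\Hk})>0\) for \(t>0\). I also check that \(\norm{T_kf^*}_{\Hk}>0\) whenever \(f^*\ne0\) in \(L^2(\mu)\), since \(\inner{T_kf^*}{f^*}_{\Hk}=\norm{f^*}^2_{L^2(\mu)}\).

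\textbf{Step 2 (noise-debiased problem).} The ray function is \(G(th)=t^2\norm{h}^2_{L^2}-2t\inner{T_kf^*}{h}_{\Hk}+\norm{f^*}^2_{L^2}+\delta^2t^2+2\delta t\,(\E|Y-th(X)|-\E|\varepsilon|)\). The map \(t\mapsto\E|Y-th(X)|\) is convex, so \(G(th)\) is convex in \(t\) on \([0,\infty)\). Its right derivative at \(0\) is \(-2\inner{T_kf^*}{h}_{\Hk}+2\delta(\E|Y|-\E|\varepsilon|)\). Every \(f\) has the form \(th\), so \(0\) is a global minimizer iff each ray is minimized at \(t=0\). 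By convexity along rays, this holds iff all right derivatives are \(\ge0\), i.e.\ iff \(\delta(\E|Y|-\E|\varepsilon|)\ge\norm{T_kf^*}_{\Hk}\). This gives \(\delta^{\rm ND}\), including the \(\infty\) convention when \(\E|Y|=\E|\varepsilon|\). Uniqueness follows exactly as in Step 1.

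\textbf{Step 3 (comparison).} Assumption~\ref{ass:noise} applied conditionally on \(X\) with \(t=-f^*(X)\) gives \(0\le\E|Y|-\E|\varepsilon|\le\kappa\norm{f^*}^2_{L^2(\mu)}\), using independence of \(X\) and \(\varepsilon\). The ratio identity is immediate from the definitions. The inequality \(\delta^{\rm ND}\ge\delta^{\rm AT}\) follows from \(\E|Y|-\E|\varepsilon|\le\E|Y|\), and the lower bound follows from the upper bound \(\E|Y|-\E|\varepsilon|\le\kappa\norm{f^*}^2_{L^2(\mu)}\).

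\textbf{Main obstacle.} The delicate point is justifying the one-sided derivative of \(t\mapsto\E|Y-th(X)|\) at \(0\). The difference quotient is \((\E|Y-th(X)|-\E|Y|)/t\), and I will show its limit is the \(\E|Y|\)-independent quantity \(-\E[\operatorname{sgn}(Y)h(X)]+\E[|h(X)|\mathbf 1\{Y=0\}]\). That expression is not \(0\), so my derivative formula above is wrong as written. The correct right derivative of the term \(2\delta t\,\E|Y-th(X)|\) at \(0\) is simply \(2\delta\E|Y|\), since it is \(t\) times a continuous function. So only the linear term's contribution matters, and the formula above does stand. The remaining care is confirming the continuity of \(t\mapsto\E|Y-th(X)|\), which follows by dominated convergence using \(|h(X)|\le\norm{h}_{\Hk}\le1\).
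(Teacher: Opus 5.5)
Your outline matches the paper's proof: right directional derivatives at zero, convexity, RKHS duality, and Lemma~\ref{lem:absolutelossincrease} at \(f=0\) for the comparison. Steps~1 and~3 are fine. The worry in your last paragraph resolves as you eventually say: the right derivative of \(t\,\E|Y-th(X)|\) at \(0\) is \(\E|Y|\) because \(t\mapsto\E|Y-th(X)|\) is continuous, and \(\E|Y-th(X)|^2=\E(Y-th(X))^2\) is a quadratic in \(t\).

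The genuine gap is in Step~2. The inference ``\(t\mapsto\E|Y-th(X)|\) is convex, so \(G(th)\) is convex in \(t\)'' does not follow, because \(t\psi(t)\) need not be convex even for convex \(\psi\ge0\). For example, \(\psi(t)=\max\{1-t,0\}\) gives \(t\psi(t)=t-t^2\) on \([0,1]\). Ray convexity is exactly what the `if' direction and the uniqueness claim for \(\delta^{\rm ND}\) rest on, so Step~2 is unproved as written.

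The repair is already in your first paragraph. For a unit \(h\) and \(t\ge0\) you have \(\norm{th}_{\Hk}=t\), so your identity gives \(G(th)=F(th)-\E\varepsilon^2-2\delta\E|\varepsilon|\,t\). This is convex in \(t\) because \(F\) is convex and the subtracted term is linear along the ray, which is all your ray argument needs.

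The paper instead proves that \(G\) is convex on all of \(\Hk\). Up to a constant, the pair objective \(\E(|Y-f(X)|+\delta r)^2-2\delta\E|\varepsilon|r\) is jointly convex on \(\{(f,r):r\ge\norm{f}_{\Hk}\}\). Its \(r\)-derivative \(2\delta(\E|Y-f(X)|-\E|\varepsilon|+\delta r)\) is nonnegative by Lemma~\ref{lem:absolutelossincrease}. Minimizing out \(r\) therefore lands at \(r=\norm{f}_{\Hk}\) and leaves a convex reduced objective.

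Your repaired route is more elementary and does not need \(\E|Y-f(X)|\ge\E|\varepsilon|\) to get convexity. However, it only yields convexity along rays through the origin. The paper's route gives global convexity, which it also uses for the first-order condition in Lemma~\ref{lem:ndat-population-foc}.
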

	
	The thresholds differ by the noise premium. Whenever
	\(\delta^{\rm AT}<\delta^{\rm ND}\), adversarial training has the unique zero
	solution for \(\delta^{\rm AT}<\delta<\delta^{\rm ND}\), while the
	noise-debiased population solution remains nonzero. By
	Assumption~\ref{ass:noise},
	we have \(\E|Y|-\E|\varepsilon|\le\kappa\norm{f^*}_{L^2(\mu)}^2\), so this
	separation can be substantial when the signal is weak relative to the noise.

	\section{Numerical Experiments}
	\label{sec:num}
	We report numerical experiments in this section.
	Adversarial training is computed from problem~\eqref{problem}, and
	noise-debiased adversarial training from the equivalent convex program in
	Proposition~\ref{prop:ndat-convex-reformulation}.
	
	\subsection{Synthetic experiments}
	\label{sec:synth-experiment}
	
	The synthetic experiments use the model
	with parameters \(\alpha=1/4\), \(\beta=2\), and the Rademacher kernel
	\(k(x,x')=\sum_{j\ge1}\sigma_jx_jx_j'\), where
	\(\sigma_j\propto j^{-2}\).  We normalize the eigenvalues so that
	\(k(x,x)=1\), set
	\(f^*=T_k^{1/4}g\), \(\norm{g}_{\Hk}=1\), and use standard Gaussian noise.
	Unless stated otherwise, all synthetic experiments truncate the spectral
	representation to the first \(512\) coordinates.

	\subsubsection{Approximation error, prediction error, and signal collapse}
	\label{sec:synth-path}
	We first vary \(\delta\) to examine prediction error, approximation error,
	and signal collapse. Each replication uses a total of \(800\) observations.
	Adversarial training uses all \(800\). Noise-debiased adversarial training
	uses \(240\) observations for preliminary KRR, \(120\) for estimating
	\(\E|\varepsilon|\), and \(440\) for the convex optimization. We
	evaluate eight values of \(\delta\) in \([0.1,0.5]\) over \(50\) replications
	and approximate
	population solutions with a fixed Monte Carlo sample of size \(50{,}000\).
	
	\begin{figure}[!htbp]
		\centering
		\includegraphics[width=\textwidth]{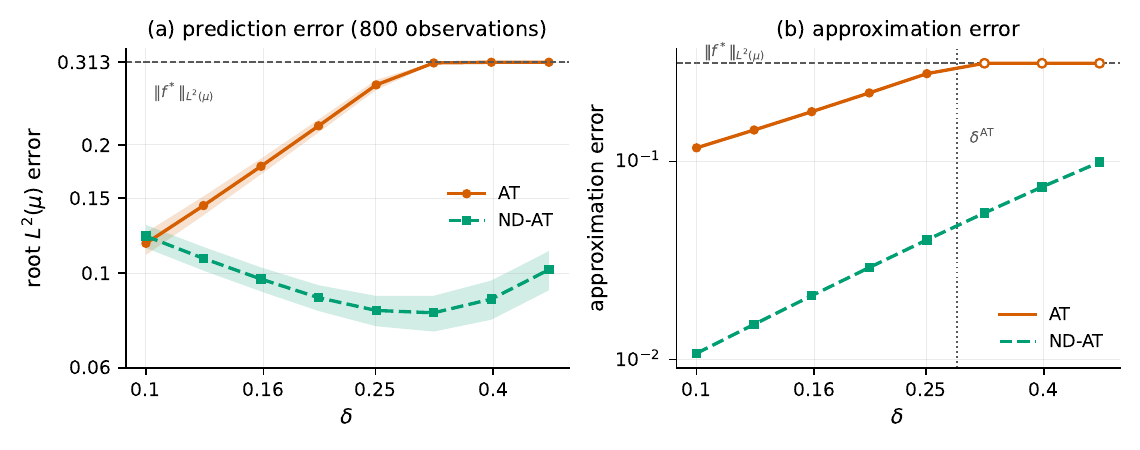}
		\caption{Synthetic radius path. Panel (a) uses \(800\) observations per
			replication. Adversarial training uses all \(800\), whereas noise-debiased
			adversarial training uses the \(240/120/440\) allocation described in the text.
			(a) Root \(L^2(\mu)\) prediction error, with bands of \(\pm2\) standard errors
			over \(50\) replications. (b) Approximation error; open markers
			denote zero population solutions for adversarial training. The horizontal
			dashed line marks the common zero-function error. In panel (b), the vertical dotted line marks the theoretical
			adversarial training collapse threshold \(\delta^{\rm AT}=0.2834\).}
		\label{fig:synthetic}
	\end{figure}
	
	Figure~\ref{fig:synthetic}(b) reports the approximation error.
	The adversarial training approximation error has log--log slope \(0.94\) and
	reaches the zero-function error once \(\delta\) exceeds the population
	collapse threshold \(\delta^{\rm AT}=0.2834\) in
	Corollary~\ref{cor:breakdown-radius}. The
	noise-debiased error has slope \(1.38\), close to the predicted exponent
	\(2\alpha+1=1.5\), and remains below the zero-function error throughout.
	
	Figure~\ref{fig:synthetic}(a) reports the prediction error.  
	Adversarial training is
	zero in \(10\%\) of replications at \(\delta=0.251\), \(88\%\) at
	\(\delta=0.316\), and all replications at the two largest values of \(\delta\).
	Thus, the empirical collapse transition occurs near this threshold. The
	noise-debiased estimator never collapses and reaches
	the minimum error, \(0.081\), at \(\delta=0.316\). Its prediction error is
	lower than that of adversarial training for every reported
	\(\delta\ge0.126\).
	
	\subsubsection{Signal strength and the collapse threshold}
	\label{sec:synth-snr}
	We next examine how the collapse threshold changes with signal strength.  We
	multiply the baseline regression function by
	\(\gamma\in\{0.1,0.2,0.5,1,2\}\) and set the noise standard deviation to
	\(\sigma_\varepsilon\in\{0.25,0.5,1,2\}\).  Each of the resulting \(20\)
	configurations uses \(n=800\) and is repeated \(20\) times.  To isolate the
	collapse mechanism, noise-debiased adversarial training uses the known
	Gaussian value \(\E|\varepsilon|=\sigma_\varepsilon\sqrt{2/\pi}\).
	
	We measure signal strength relative to noise by
	\begin{equation}
		\label{eq:snr-def}
		\mathrm{SNR}
		:=
		\frac{\norm{f^*}_{L^2(\mu)}}{\sigma_\varepsilon}
		=
		\frac{(\E f^*(X)^2)^{1/2}}
		{(\E\varepsilon^2)^{1/2}},
	\end{equation}
	where a larger SNR means a stronger regression signal.  The configurations
	cover \(11\) distinct SNR values between \(0.016\) and \(2.50\).  We report
	the relative prediction error
	\(\norm{\widehat f-f^*}_{L^2(\mu)}/\norm{f^*}_{L^2(\mu)}\), so the zero
	predictor has error one.
	
	\begin{figure}[!htbp]
		\centering
		\includegraphics[width=0.92\textwidth]{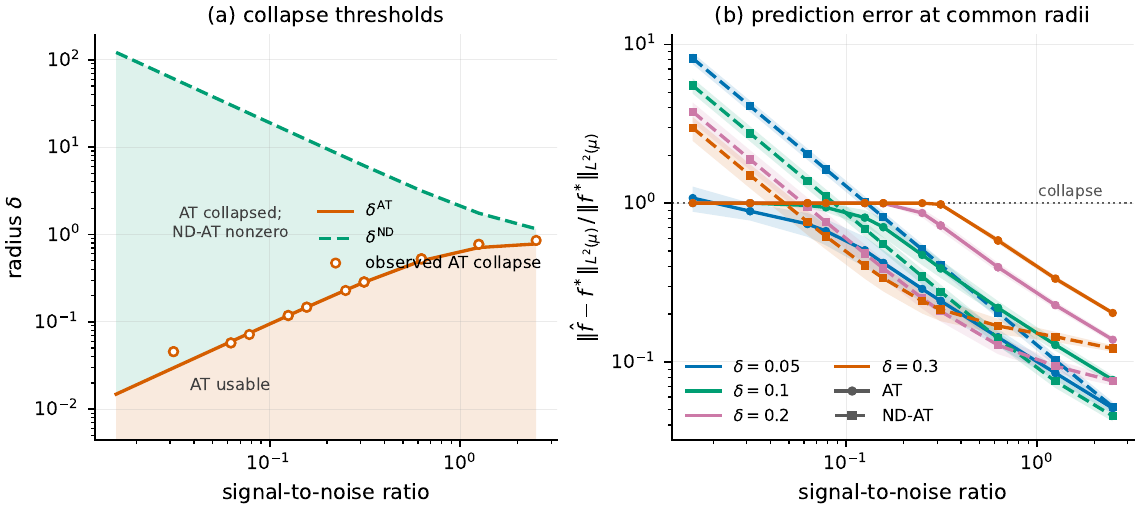}
		\caption{Effect of the signal-to-noise ratio defined in
			\eqref{eq:snr-def}. (a) Population collapse thresholds from
			Corollary~\ref{cor:breakdown-radius} and the empirical value of \(\delta\) at
			which half of the adversarial training estimates are zero. (b) Relative
			prediction error at four fixed values of \(\delta\), with bands of \(\pm2\)
			standard errors. The dotted line is the error of the zero predictor.}
		\label{fig:snr}
	\end{figure}
	
	Figure~\ref{fig:snr}(a) shows that the empirical collapse point follows the
	population threshold.  For nine of the eleven SNR values, the difference is
	less than \(10\%\).  As the SNR decreases, adversarial training collapses at
	a smaller value of \(\delta\).  At the three smallest SNR values, it returns
	the zero predictor in at least \(80\%\) of the replications when
	\(\delta\ge0.1\).
	
	Figure~\ref{fig:snr}(b) shows when noise debiasing improves prediction.  If
	the signal is extremely weak, the zero predictor is already competitive and
	debiasing need not help.  For stronger signals, debiasing postpones collapse
	and can substantially reduce prediction error.  At \(\mathrm{SNR}=0.625\)
	and \(\delta=0.2\), for example, the relative error decreases from \(0.395\)
	to \(0.128\).  The benefit is therefore largest when the signal is present
	but the noise contribution pushes adversarial training toward collapse.
	
In summary, the synthetic experiments support our theoretical findings.
	The population threshold accurately predicts adversarial
	training collapse, and noise debiasing delays collapse and improves prediction
	when the signal is nonnegligible.  
	
	Additional experiments in the Appendix examine the sharp approximation exponent and the effect of increasing
	the sample size at a fixed robustness radius.
	\subsection{Real-data evaluation}
	\label{sec:attack-experiments}
	This subsection evaluates adversarial training and noise-debiased adversarial
	training on real data.
	
	We consider Communities and Crime \citep{redmond2002data}, Abalone
	\citep{nash1994population}, and Diamonds \citep{wickham2016ggplot2}.  Their
	responses are violent crimes per population, abalone rings, and log diamond
	price.  All methods use the same \(20\) random splits, receive \(800\)
	observations per split, and are evaluated on an independent test sample of at
	most \(2{,}000\) observations.  For the two-stage noise-debiased procedure, the
	first stage uses one subsample to compute the preliminary KRR estimator and
	another to estimate noise.  The remaining observations are used
	for the final optimization.  Adversarial training and the two kernel ridge
	regression (KRR) methods compute their predictors from all \(800\) observations.
	KRR with \(\lambda=\delta^2\) removes the mixed term while preserving the same
	quadratic RKHS penalty as adversarial training.  KRR with a cross-validated
	penalty provides a benchmark for clean prediction.  To facilitate comparisons
	across data sets, we report RMSE in units of the response standard deviation.
	
	For the adversarial perturbation analysis, adversarial training, noise-debiased adversarial
	training, and KRR with \(\lambda=\delta^2\) are computed at \(\delta=0.1\),
	while KRR(CV) uses the cross-validated penalty.  The common training radius is
	below the estimated adversarial-training collapse boundary in every split, and
	no adversarial training estimator is zero at this radius.  Thus, the
	perturbation comparison is not driven by collapsed predictors.  We attack each predictor
	separately using projected gradient descent (PGD)
	\citep{madry2018towards}, which approximates the largest squared prediction
	error over \(\norm{\Delta x_{\rm cont}}_2\le\rho\) after the continuous
	covariates are standardized.   The perturbation applies to all covariates in
	Communities and Crime, the seven numerical covariates in Abalone, and the six
	physical covariates in Diamonds.  Categorical variables and responses are held
	fixed, and \(\rho\) ranges from \(0\) to \(0.5\).  Further implementation
	details are given in the Appendix.
	\begin{figure}[htbp]
		\centering
		\includegraphics[width=\textwidth]{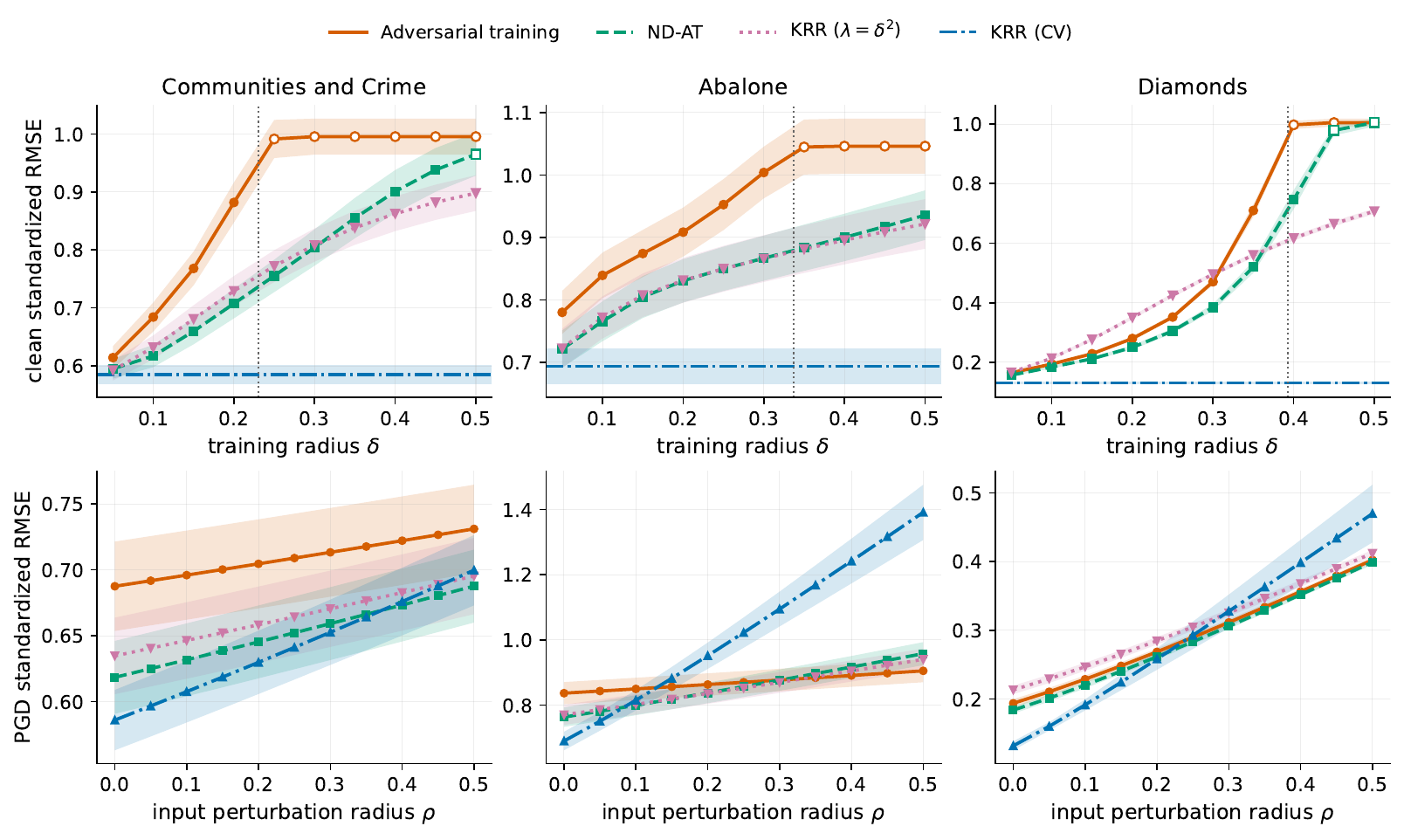}
		\caption{Real-data clean prediction and input-perturbation performance.  The
			upper panels report clean standardized RMSE against the training radius
			\(\delta\).  Vertical dotted lines are the mean estimated adversarial
			training collapse boundaries, and open markers indicate a zero estimator
			in at least one split.  The lower panels report PGD standardized RMSE
			against \(\rho\) for predictors computed at \(\delta=0.1\).  KRR(CV) uses
			the cross-validated penalty.  The same 400 test observations are attacked
			for all methods in each split.  Curves show means, and bands show \(\pm2\)
			standard errors over the same \(20\) splits.}
		\label{fig:attack}
	\end{figure}
	\FloatBarrier
	
	The upper panels show that noise debiasing lowers clean RMSE relative to
	adversarial training on all three data sets.  The estimated collapse boundaries
	closely track the observed transitions to the zero estimator for adversarial
	training, whereas noise-debiased adversarial training remains nonzero over a
	wider range of training radii.  These patterns support the proposed mechanism
	under which subtracting the estimated noise premium reduces premature
	shrinkage toward the zero predictor.  Cross-validated KRR attains
	the smallest clean RMSE on all three data sets, as expected for a benchmark
	tuned specifically for clean prediction.
	
	The lower panels distinguish total PGD RMSE from the attack-induced increase
	relative to \(\rho=0\).  Noise-debiased adversarial training
	starts with lower RMSE than adversarial training, although the RMSE of the
	noise-debiased method increases more as \(\rho\) grows.  This baseline advantage
	keeps the total PGD RMSE of noise-debiased adversarial training lower over the
	entire perturbation path on Communities and Crime as well as Diamonds.  On
	Abalone, noise-debiased adversarial training remains lower through
	\(\rho=0.25\), the two methods are nearly equal at \(\rho=0.3\), and adversarial
	training is lower thereafter.
	
	KRR with \(\lambda=\delta^2\) is the matched-penalty benchmark that omits the whole
	mixed term.  The attack-induced increase for KRR with \(\lambda=\delta^2\) is
	smaller than that of noise-debiased adversarial training on all three data
	sets, but noise-debiased adversarial training again starts with lower RMSE.
	Consequently,
	noise-debiased adversarial training has lower total PGD RMSE through
	\(\rho=0.15\) on all three data sets, with the two methods nearly
	indistinguishable on Abalone at the endpoint.  This reduction in total PGD RMSE
	persists over the full perturbation path on Communities and Crime as well as
	Diamonds.  On Abalone, KRR with \(\lambda=\delta^2\) is lower from \(\rho=0.2\)
	onward.  Cross-validated KRR has the largest attack-induced increase on all
	three data sets, showing that tuning for clean prediction alone does not ensure
	low prediction error under input perturbations.
	
	In conclusion, the real-data results show that noise debiasing mitigates the
	effects of the noise premium on clean prediction and signal collapse while
	also lowering prediction error under small input perturbations.  For
	\(\rho\le0.15\), noise-debiased adversarial training has the lowest mean total
	PGD RMSE among the three radius-dependent methods on all three data sets,
	although the noise-debiased and matched KRR means are nearly equal on Abalone
	at \(\rho=0.15\).  The matched-penalty comparison is consistent with a more
	favorable robustness--accuracy trade-off from the debiased mixed term under
	small attacks.  The steeper PGD increase and the crossover on Abalone show that
	the robustness cost depends on the data set and the perturbation radius.
	\section{Discussion}
	\label{sec:dis}
	\enlargethispage{2\baselineskip}
	Adversarial training in RKHS regression introduces a noise premium that creates
	persistent population bias.  Matching bounds show that the noise premium slows
	the optimally balanced prediction rate relative to the nonparametric minimax
	benchmark and can force collapse to the zero function at a fixed robustness
	radius.  Noise-debiased adversarial training subtracts an estimate of
	\(\E|\varepsilon|\), restores the minimax rate up to a logarithmic factor in the
	stated regime, raises the collapse threshold, and admits an explicit bound on
	the increase in adversarial loss.  The numerical results support these
	conclusions and show improved clean prediction and lower total error under
	small input perturbations, with a robustness cost that varies across data sets
	and attack radii.
	
	The present theory assumes squared-loss regression with independent additive
	noise, quadratic behavior of the absolute risk, Bernstein-type moment
	conditions.  One direction for future work
	is to weaken these noise conditions and allow covariate-dependent and
	heavier-tailed noise.  Another is to study how noise affects adversarial
	training in broader models and applications, especially neural networks with
	learned representations, and whether analogous noise-induced effects can be
	mitigated while controlling the associated robustness cost.

	\clearpage
	\renewcommand{\thesection}{S\arabic{section}}
\renewcommand{\thetheorem}{\thesection.\arabic{theorem}}
\renewcommand{\thefigure}{S\arabic{figure}}
\renewcommand{\theHsection}{S\arabic{section}}
\renewcommand{\theHsubsection}{S\arabic{section}.\arabic{subsection}}
\renewcommand{\theHfigure}{S\arabic{figure}}
\numberwithin{equation}{section}
	\setcounter{section}{0}
	\section{Proofs of Main Results}
	The proofs follow the order in which the results appear in the main text.
	The constant convention of the main text remains in force.  In particular,
	generic \(c,C>0\) may change from line to line in derivations, whereas
	constants stated in a theorem are fixed for that result.  Implicit constants in
	\(\lesssim\) and \(\gtrsim\) inherit the parameter dependence stated in the
	corresponding result in the main text and are independent of \(n,\delta,\eta\), and
	of the particular \(g\) satisfying \(\norm{g}_{\Hk}\le R\).
	\subsection{Proof of Theorem~\ref{thm:full-approximation}: \nameref*{thm:full-approximation}}
	
	We first provide some auxiliary lemmas.
	\begin{lemma}[Consequences of the quadratic absolute-risk condition]
		\label{lem:quadratic-absolute-risk}
		Let \(q(t)=\E|\varepsilon-t|\).  Under
		Assumption~\ref{ass:noise}, zero minimizes \(q\), every
		\(z\in\partial q(t)\) satisfies
		\[
		|z|\le4\kappa|t|,
		\]
		and
		\[
		\E|\varepsilon|\ge\frac{1}{4\kappa}.
		\]
	\end{lemma}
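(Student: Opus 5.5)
The function \(q(t)=\E|\varepsilon-t|\) is convex and finite, since \(\varepsilon\) is integrable. Assumption~\ref{ass:noise} gives \(q(t)-q(0)\ge0\) for all \(t\), so zero minimizes \(q\) and \(0\in\partial q(0)\).

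\emph{Subgradient bound.} Fix \(t\) and \(z\in\partial q(t)\). Assume \(t\neq0\); the case \(t=0\) is handled at the end. By the subgradient inequality and the upper half of Assumption~\ref{ass:noise}, for every \(s\in\R\),
\[
q(0)+z(s-t)\le q(t)+z(s-t)\le q(s)\le q(0)+\kappa s^2 .
\]
Hence \(z(s-t)\le\kappa s^2\) for all \(s\). I will take \(s=2t\) and \(s=0\):
\begin{itemize}
\item \(s=2t\) gives \(zt\le4\kappa t^2\).
\item \(s=0\) gives \(-zt\le0\), i.e.\ \(zt\ge0\).
\end{itemize}
So \(z\) has the sign of \(t\) (or vanishes) and \(|z|\,|t|\le4\kappa t^2\). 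Dividing by \(|t|\) gives \(|z|\le4\kappa|t|\). At \(t=0\) the same two-sided argument, now with arbitrary \(s\), gives \(zs\le\kappa s^2\) for all \(s\). Letting \(s\to0^{\pm}\) forces \(z=0\), as required.

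\emph{Lower bound on \(\E|\varepsilon|\).} I use a one-sided derivative of \(q\) far from zero. For \(t>0\), the right derivative is
\[
q'_+(t)=P(\varepsilon\le t)-P(\varepsilon>t)=2P(\varepsilon\le t)-1 .
\]
This tends to \(1\) as \(t\to\infty\) and lies in \(\partial q(t)\). The subgradient bound then gives \(2P(\varepsilon\le t)-1\le4\kappa t\) for all \(t>0\). By itself this does not bound \(\E|\varepsilon|\), so I also need an inequality going the other way. By convexity \(q(t)\ge q(0)+zt\) for \(z\in\partial q(0)\), which is too weak.

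The better route is the direct comparison
\[
q(t)\le q(0)+|t|,
\]
which follows from \(|\varepsilon-t|\le|\varepsilon|+|t|\). I also need a lower bound of the form \(q(t)\ge |t|-q(0)\), which follows from \(|\varepsilon-t|\ge|t|-|\varepsilon|\). Combining it with Assumption~\ref{ass:noise},
\[
|t|-q(0)\le q(t)\le q(0)+\kappa t^2,
\]
so \(2q(0)\ge |t|-\kappa t^2\) for every \(t\). Maximizing the right side at \(|t|=1/(2\kappa)\) gives \(2q(0)\ge 1/(4\kappa)\), that is \(q(0)\ge 1/(8\kappa)\). This is off by a factor of \(2\) from the stated bound.

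To reach \(1/(4\kappa)\) I will sharpen the lower bound on \(q(t)\) using the mean-zero condition. Since \(\E\varepsilon=0\),
\[
q(t)\ge |\E(\varepsilon-t)|=|t| .
\]
Combined with Assumption~\ref{ass:noise}, this yields \(|t|\le q(0)+\kappa t^2\) for all \(t\), i.e.\ \(q(0)\ge |t|-\kappa t^2\). Taking \(|t|=1/(2\kappa)\) gives exactly \(q(0)\ge1/(4\kappa)\).

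This last step is the main point of the lemma: using \(\E\varepsilon=0\) through Jensen's inequality is what produces the correct constant. The subgradient bound is routine once the test point \(s=2t\) is chosen.
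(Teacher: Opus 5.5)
Your proposal is correct. The subgradient bound follows the paper's proof, and the lower bound on $\E|\varepsilon|$ takes a different route that needs an extra hypothesis.

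\textbf{Subgradient bound.} Testing the subgradient inequality at $s=0$ and $s=2t$ against $q(s)\le q(0)+\kappa s^2$ is exactly what the paper does. Your version handles both signs of $t$ in one stroke and also checks $\partial q(0)=\{0\}$ explicitly.

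\textbf{Lower bound on $\E|\varepsilon|$.} You use $\E\varepsilon=0$ and Jensen to get $q(t)\ge|t|$ for every $t$. The paper instead symmetrizes. The triangle inequality gives $|\varepsilon-a|+|\varepsilon+a|\ge 2a$, so $q(a)+q(-a)\ge 2a$. Adding the two upper bounds $q(\pm a)\le q(0)+\kappa a^2$ then yields $q(0)\ge a-\kappa a^2$, with no centering needed.

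\textbf{Extra hypothesis.} Your argument is valid wherever the paper uses the lemma, since Assumption~\ref{ass:model} is always in force there. However, it relies on a hypothesis the lemma does not list. The paper's route proves the lemma under Assumption~\ref{ass:noise} alone. That assumption also covers median-zero, non-centered noise with a bounded density. In exchange, your route gives the slightly stronger pointwise bound $q(t)\ge|t|$.

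\textbf{Closing remark.} Your claim that centering ``is what produces the correct constant'' is not right. Your intermediate attempt $|t|-q(0)\le q(t)\le q(0)+\kappa t^2$ lost a factor of two because it compares one copy of $|t|$ with two copies of $q(0)$. Pairing $a$ with $-a$ compares $2a$ with $2q(0)$ and recovers $1/(4\kappa)$ without any appeal to $\E\varepsilon=0$.
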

	
	\begin{proof}[Proof of Lemma \ref{lem:quadratic-absolute-risk}]
		
		Assumption~\ref{ass:noise} implies \(q(t)\ge q(0)\), so zero minimizes
		\(q\) and \(\partial q(0)=\{0\}\).
		
		Let \(t>0\) and \(z\in\partial q(t)\).  Convexity and the minimizing
		property of zero yield
		\[
		0\le\frac{q(t)-q(0)}{t}\le z.
		\]
		
		Applying the subgradient inequality at \(2t\) gives
		\[
		z
		\le
		\frac{q(2t)-q(t)}{t}
		\le
		\frac{q(2t)-q(0)}{t}
		\le
		4\kappa t.
		\]
		
		For \(t<0\), the same argument gives \(z\le0\) and
		\[
		zt
		\le
		q(2t)-q(t)
		\le
		q(2t)-q(0)
		\le
		4\kappa t^2.
		\]
		Division by \(t<0\) yields \(z\ge-4\kappa|t|\).
		
		Finally, for every \(a>0\), the triangle inequality gives
		\[
		q(a)+q(-a)
		=
		\E\bigl(|\varepsilon-a|+|\varepsilon+a|\bigr)
		\ge 2a.
		\]
		Assumption~\ref{ass:noise} also gives
		\(q(a)+q(-a)\le2q(0)+2\kappa a^2\).  Therefore,
		\(q(0)\ge a-\kappa a^2\).  Taking \(a=(2\kappa)^{-1}\) proves
		\(\E|\varepsilon|=q(0)\ge(4\kappa)^{-1}\).
	\end{proof}
	
	\begin{lemma}[Population norm control]
		\label{normcontrol}
		Under Assumptions~\ref{ass:model} and \ref{ass:noise}, for every \(\delta>0\)
		and every minimizer \(f_\delta^*\) of problem~\eqref{populationproblem},
		\[
		\norm{f_\delta^*}_{\Hk}
		\le
		\norm{f^*}_{\Hk}.
		\]
	\end{lemma}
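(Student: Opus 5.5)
The plan is a direct comparison of objective values. We show that any \(f\in\Hk\) with \(\norm{f}_{\Hk}>\norm{f^*}_{\Hk}\) has strictly larger population objective than \(f^*\) itself, so no such \(f\) can be a minimizer.

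First, expand the population objective. For \(f\in\Hk\), write
\[
J(f)=\E\left[(|Y-f(X)|+\delta\norm{f}_{\Hk})^2\right]
=\E(Y-f(X))^2+2\delta\norm{f}_{\Hk}\E|Y-f(X)|+\delta^2\norm{f}_{\Hk}^2 .
\]
Under Assumption~\ref{ass:model}, \(Y-f(X)=\varepsilon-h(X)\) with \(h=f-f^*\). Because \(\varepsilon\) is centered, square integrable and independent of \(X\), the first term equals \(\E\varepsilon^2+\norm{f-f^*}_{L^2(\mu)}^2\). At \(f=f^*\) the objective is
\[
J(f^*)=\E\varepsilon^2+2\delta\E|\varepsilon|\norm{f^*}_{\Hk}+\delta^2\norm{f^*}_{\Hk}^2 .
\]

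Second, bound the mixed term from below. Conditioning on \(X\) and using independence gives \(\E|Y-f(X)|=\E[q(h(X))]\), where \(q(t)=\E|\varepsilon-t|\) is as in Lemma~\ref{lem:quadratic-absolute-risk}. That lemma, which follows from Assumption~\ref{ass:noise}, shows that zero minimizes \(q\), so \(q(h(X))\ge q(0)=\E|\varepsilon|\) pointwise. Hence \(\E|Y-f(X)|\ge\E|\varepsilon|\), and therefore
\[
J(f)\ge \E\varepsilon^2+\norm{f-f^*}_{L^2(\mu)}^2+2\delta\E|\varepsilon|\norm{f}_{\Hk}+\delta^2\norm{f}_{\Hk}^2 .
\]

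Third, compare. Suppose \(\norm{f}_{\Hk}>\norm{f^*}_{\Hk}\). Then \(\norm{f-f^*}_{L^2(\mu)}^2\ge0\) and \(2\delta\E|\varepsilon|\norm{f}_{\Hk}\ge 2\delta\E|\varepsilon|\norm{f^*}_{\Hk}\), using \(\E|\varepsilon|\ge0\). Moreover \(\delta^2\norm{f}_{\Hk}^2>\delta^2\norm{f^*}_{\Hk}^2\) strictly, since \(\delta>0\). Combining these gives \(J(f)>J(f^*)\), so \(f\) is not a minimizer. Consequently every minimizer \(f_\delta^*\) satisfies \(\norm{f_\delta^*}_{\Hk}\le\norm{f^*}_{\Hk}\).

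I do not expect a real obstacle here. The only point needing care is justifying \(\E|Y-f(X)|\ge\E|\varepsilon|\) via conditioning and independence, together with checking that all expectations are finite. Finiteness holds because \(f\) is bounded (from \(\sup_x k(x,x)\le1\) we get \(|f(x)|\le\norm{f}_{\Hk}\)) and \(\varepsilon\) is square integrable. Strictness of the final inequality comes from the quadratic term alone, so the argument does not even need the positivity \(\E|\varepsilon|>0\).
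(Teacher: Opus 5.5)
Your proposal is correct and follows essentially the same argument as the paper. The paper compares the objective at a minimizer with its value at \(f^*\), lower-bounds \(\E|Y-f(X)|\) by \(\E|\varepsilon|\) via conditioning, drops the \(L^2(\mu)\) term, and uses that \(r\mapsto\delta^2r^2+2\delta\E|\varepsilon|r\) is strictly increasing; you run the same comparison in contrapositive form.
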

	
	\begin{proof}[Proof of Lemma~\ref{normcontrol}]
		For every \(f\in\Hk\), independence and \(\E\varepsilon=0\) give
		\[
		\E[(Y-f(X))^2]
		=
		\norm{f-f^*}_{L^2(\mu)}^2+\E[\varepsilon^2].
		\]
		
		Optimality of \(f_\delta^*\), comparison with \(f^*\), and cancellation of
		\(\E[\varepsilon^2]\) give
		\begin{equation}
			\begin{aligned}\label{eq:opti}
				\norm{f_\delta^*-f^*}_{L^2(\mu)}^2
				+
				2\delta \norm{f_\delta^*}_{\Hk}\E|Y-f_\delta^*(X)|
				+
				\delta^2\norm{f_\delta^*}_{\Hk}^2
				\le
				2\delta\norm{f^*}_{\Hk}\E|\varepsilon|
				+
				\delta^2\norm{f^*}_{\Hk}^2 .
			\end{aligned}
		\end{equation}
		
		Dropping the last two terms on the left gives
		\begin{equation}
			\label{eq:population-objective-error}
			\norm{f_\delta^*-f^*}_{L^2(\mu)}^2
			\le
			\delta^2\norm{f^*}_{\Hk}^2
			+
			2\delta\E|\varepsilon|\norm{f^*}_{\Hk}.
		\end{equation}
		
		Lemma~\ref{lem:quadratic-absolute-risk}, applied conditionally on \(X\),
		implies
		\[
		\E|Y-f_\delta^*(X)|
		\ge
		\E|\varepsilon|.
		\]
		
		Using this lower bound in \eqref{eq:opti} and removing the
		nonnegative \(L^2(\mu)\) term yield
		\[
		\delta^2\norm{f_\delta^*}_{\Hk}^2
		+
		2\delta\E|\varepsilon|\norm{f_\delta^*}_{\Hk}
		\le
		\delta^2\norm{f^*}_{\Hk}^2
		+
		2\delta\E|\varepsilon|\norm{f^*}_{\Hk}.
		\]

		The map \(r\mapsto\delta^2r^2+2\delta\E|\varepsilon|r\) is strictly
		increasing on \([0,\infty)\), proving the result.
	\end{proof}
	
	\begin{lemma}[Population first-order condition]
		\label{lem:population-foc}
		Suppose Assumptions~\ref{ass:model} and \ref{ass:noise} hold. For every
		population minimizer \(f_\delta^*\), there exist a vector
		\[
		v_\delta\in\partial\norm{f_\delta^*}_{\Hk}
		\]
		and a measurable function \(s_\delta:\mathcal X\times\mathbb R\to[-1,1]\).
		The RKHS-norm subdifferential is
		\[
		\partial\norm{f_\delta^*}_{\Hk}
		=
		\begin{cases}
			\left\{f_\delta^*/\norm{f_\delta^*}_{\Hk}\right\},
			& f_\delta^*\ne0,\\[2mm]
			\left\{v\in\Hk:\norm{v}_{\Hk}\le1\right\},
			& f_\delta^*=0.
		\end{cases}
		\]
		The function \(s_\delta\) can be chosen so that
		\[
		s_\delta(x,e)
		\in
		\left.\partial_t|f^*(x)+e-t|\right|_{t=f_\delta^*(x)}
		\]
		and, together with \(v_\delta\), satisfies
		\begin{equation}
			\label{populationfirstorder}
			T_k(f_\delta^*-f^*)
			+
			\delta^2f_\delta^*
			+
			\delta\norm{f_\delta^*}_{\Hk}
			T_k\left(
			\E_\varepsilon
			\left[
			s_\delta(\cdot,\varepsilon)
			\right]
			\right)
			+
			\delta\E|Y-f_\delta^*(X)|
			v_\delta
			=0.
		\end{equation}
		The same selection satisfies
		\begin{equation}
			\label{eq:population-sign-selection}
			\left|
			\E_\varepsilon[s_\delta(x,\varepsilon)]
			\right|
			\le
			4\kappa|f_\delta^*(x)-f^*(x)|
			\quad\text{for almost every \(x\) with respect to the distribution of \(X\)}.
		\end{equation}
	\end{lemma}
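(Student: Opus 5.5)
The plan is to derive the first-order condition from convex subdifferential calculus applied to the population objective. First I expand it using independence and \(\E\varepsilon=0\):
\[
F(f)=\norm{f-f^*}_{L^2(\mu)}^2+\E[\varepsilon^2]+2\delta\norm{f}_{\Hk}A(f)+\delta^2\norm{f}_{\Hk}^2,
\qquad A(f):=\E|Y-f(X)|.
\]
Each piece is convex and finite on \(\Hk\). The maps \(f\mapsto\norm{f-f^*}_{L^2(\mu)}^2\) and \(A\) are continuous on \(\Hk\), because \(\sup_x|f(x)|\le\norm{f}_{\Hk}\) by boundedness of the kernel. The mixed term \(\norm{f}_{\Hk}A(f)\) is a product of two nonnegative convex functions, and is convex as \(\E\) of \((|Y-f(X)|+\delta\norm f)^2\) minus the other convex pieces is not immediate. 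So I will not use a sum rule on the product directly. Instead I will write \(F(f)=\E[(|Y-f(X)|+\delta\norm{f}_{\Hk})^2]\), which is convex. At the minimizer \(f_\delta^*\), the directional derivative \(F'(f_\delta^*;h)\) must be \(\ge0\) for every \(h\in\Hk\).

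\textbf{Step 1: directional derivative.} I compute \(F'(f_\delta^*;h)\) by dominated convergence. The difference quotients of \(|Y-f-th|\) are bounded by \(|h(X)|\le\norm{h}_{\Hk}\), and \(Y\) is square integrable. This gives
\[
F'(f_\delta^*;h)=2\langle T_k(f_\delta^*-f^*),h\rangle_{\Hk}+2\delta^2\langle f_\delta^*,h\rangle_{\Hk}+2\delta\norm{f_\delta^*}_{\Hk}A'(f_\delta^*;h)+2\delta A(f_\delta^*)\,N'(f_\delta^*;h),
\]
with \(N=\norm{\cdot}_{\Hk}\). Here \(A'(f;h)=\E\,\phi_{X,\varepsilon}'(f(X);-h(X))\) is the pointwise one-sided derivative of \(t\mapsto|f^*(x)+e-t|\). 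This sum is positively homogeneous and sublinear in \(h\), so it is the support function of the convex set
\[
\mathcal S:=\bigl\{T_k(f_\delta^*-f^*)+\delta^2f_\delta^*+\delta\norm{f_\delta^*}_{\Hk}\,\partial A(f_\delta^*)+\delta A(f_\delta^*)\,\partial N(f_\delta^*)\bigr\}.
\]
The condition \(F'\ge0\) then yields \(0\in\mathcal S\) by Hahn--Banach separation. The set \(\mathcal S\) is weakly compact, being a sum of a point and bounded closed convex sets, so the minimax theorem also applies.

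\textbf{Step 2: subdifferential of \(A\) (main obstacle).} I need to show that \(\partial A(f)\) consists exactly of elements \(T_k(\E_\varepsilon[s(\cdot,\varepsilon)])\) with \(s\) a measurable selection of the pointwise sign subdifferential. This is an interchange of subdifferential and expectation for a normal convex integrand (Rockafellar/Ioffe--Tikhomirov). I will avoid the general theorem as follows. Such elements \(T_k\E_\varepsilon s\) form a convex set \(\mathcal A\). It is weakly compact in \(\Hk\), since the selections lie in the weak-* compact unit ball of \(L^\infty\) and \(T_k:L^2\to\Hk\) is compact. Its support function equals \(A'(f;h)\), obtained by choosing \(s=\mathrm{sign}\) appropriately on the kink set \(\{e=f(x)-f^*(x)\}\). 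Two closed convex sets with the same support function coincide, so \(\partial A(f)=\mathcal A\). The sign convention, with the derivative taken in \(t\), must produce the \(+\) sign in \eqref{populationfirstorder}. The \(L^2\) term contributes \(2T_k(f-f^*)\) through the RKHS realization \(\langle T_kh,f\rangle_{\Hk}=\E[hf]\). Dividing the factor \(2\) out gives exactly \eqref{populationfirstorder}, with \(v_\delta\in\partial N(f_\delta^*)\). The stated formula for \(\partial N\) is standard.

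\textbf{Step 3: the bound \eqref{eq:population-sign-selection}.} Fix \(x\) and put \(u=f_\delta^*(x)-f^*(x)\). Since \(s_\delta(x,e)\in\partial_t|e-(t-f^*(x))|\) at \(t=f_\delta^*(x)\), the function \(e\mapsto s_\delta(x,e)\) is a selection of the subgradient of \(e\mapsto|e-u|\) with respect to \(u\). By independence of \(\varepsilon\) and \(X\), \(\E_\varepsilon[s_\delta(x,\varepsilon)]\in\partial q(u)\), where \(q(t)=\E|\varepsilon-t|\). This is again an expectation–subdifferential interchange, and in one dimension it is elementary: \(q'_\pm(u)\) equal \(P(\varepsilon<u)-P(\varepsilon>u)\pm P(\varepsilon=u)\), and any selection averages into \([q'_-(u),q'_+(u)]\). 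Lemma~\ref{lem:quadratic-absolute-risk} then gives \(|\E_\varepsilon s_\delta(x,\varepsilon)|\le4\kappa|u|\) for almost every \(x\), which completes the argument.
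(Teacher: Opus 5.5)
Your proof is correct, and it is organized differently from the paper's.

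The paper works at the level of the integrand. It applies the subgradient rule for convex integral functionals to $(|Y-f(X)|+\delta\norm{f}_{\Hk})^2$ and uses the chain rule for the square to obtain the pointwise subgradient $2(|Y-f(X)|+\delta\norm{f}_{\Hk})(s_\delta k_X+\delta f/\norm{f}_{\Hk})$. It then collapses the residual--sign product via $|Y-f(X)|s_\delta=f(X)-Y$ to produce $T_k(f_\delta^*-f^*)$. It treats $f_\delta^*=0$ separately, through a directional-derivative and RKHS-duality argument.

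You instead expand $F$, differentiate the non-convex product $\norm{f}_{\Hk}A(f)$ with the one-sided product rule, and recognize $F'(f_\delta^*;\cdot)$ as the support function of the Minkowski sum $\mathcal S$. Then $0\in\mathcal S$ follows from separation. The only interchange of expectation and subdifferential you need is for the $1$-Lipschitz functional $A$. You prove it by hand, via weak compactness of the selection image and matching support functions, rather than citing the general normal-integrand theorem.

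What your route buys is a self-contained argument that handles $f_\delta^*=0$ and $f_\delta^*\neq0$ in one stroke: at zero the $\partial A$ term simply has coefficient zero. The paper's route is shorter once the interchange theorem is cited. Your Step~3 is the paper's conditioning argument with the one-sided derivatives of $q$ written out.

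Two small things to tidy up:
\begin{itemize}
\item In Step~1 the direction should be $+h(X)$, not $-h(X)$, if $\phi_{X,\varepsilon}$ denotes $t\mapsto|f^*(x)+e-t|$. This is exactly what produces the $+$ sign in \eqref{populationfirstorder}.
\item For weak compactness of $\mathcal A$, you should state that the set of admissible selections is weak-$*$ closed in $L^\infty(\mu\otimes P_\varepsilon)$. It is the unit ball intersected with the constraint $s=\sgn(f(x)-f^*(x)-e)$ off the kink set, so its image in $\Hk$ is weakly compact and hence closed.
\end{itemize}
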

	
	\begin{proof}[Proof of Lemma~\ref{lem:population-foc}]
		Suppose first that \(f_\delta^*\ne0\). The subgradient rule for convex
		integral functionals
		\citep[see, e.g.,][]{rockafellar1998variational}
		and the chain rule give a measurable selection \(s_\delta\) for which a
		subgradient of the integrand is
		\[
		2\left(
		|Y-f_\delta^*(X)|+\delta\norm{f_\delta^*}_{\Hk}
		\right)
		\left(
		s_\delta(X,\varepsilon)k_X
		+\delta\frac{f_\delta^*}{\norm{f_\delta^*}_{\Hk}}
		\right).
		\]
		It is integrable because \(|s_\delta|\le1\), \(\norm{k_X}_{\Hk}\le1\),
		and the objective is finite. Fermat's rule gives
		\eqref{populationfirstorder} after division by two and use of
		\[
		|Y-f_\delta^*(X)|s_\delta(X,\varepsilon)
		=
		f_\delta^*(X)-Y
		\]
		for every admissible absolute-loss subgradient.
		
		Conditioning on \(X=x\) gives
		\[
		\E_\varepsilon[s_\delta(x,\varepsilon)]
		\in
		\partial q(f_\delta^*(x)-f^*(x)),
		\qquad
		q(t)=\E|\varepsilon-t|.
		\]
		Lemma~\ref{lem:quadratic-absolute-risk} now gives
		\eqref{eq:population-sign-selection}.
		
		If \(f_\delta^*=0\), the right directional derivative at zero in direction
		\(h\in\Hk\) of one half of the population objective is
		\[
		-\inner{T_kf^*}{h}_{\Hk}
		+\delta\E|Y|\norm{h}_{\Hk}.
		\]
		
		Optimality makes this quantity nonnegative for every \(h\in\Hk\). By
		RKHS duality,
		\[
		\norm{T_kf^*}_{\Hk}\le\delta\E|Y|.
		\]
		
		Hence there is \(v_\delta\in\Hk\) with
		\(\norm{v_\delta}_{\Hk}\le1\) such that
		\[
		T_kf^*=\delta\E|Y|v_\delta.
		\]
		
		The same measurable-selection result gives \(s_\delta\) with
		\[
		\E_\varepsilon[s_\delta(x,\varepsilon)]
		\in
		\partial q(-f^*(x)).
		\]
		
		Lemma~\ref{lem:quadratic-absolute-risk} gives
		\eqref{eq:population-sign-selection}. Since
		\(\norm{f_\delta^*}_{\Hk}=0\), the term involving \(s_\delta\) in
		\eqref{populationfirstorder} is zero, and
		\(T_kf^*=\delta\E|Y|v_\delta\) verifies the remaining terms in that equation.
	\end{proof}
	
	\subsubsection{Main proof}
	\begin{proof}
		Lemma~\ref{normcontrol}, the source condition, and
		\(\norm{T_k}_{\mathrm{op}}\le1\) give
		\begin{equation}
			\label{eq:uniform-norm-bound}
			\norm{f_\delta^*}_{\Hk}
			\le
			\norm{f^*}_{\Hk}
			\le
			\norm{g}_{\Hk}
			\le R.
		\end{equation}
		
		Equation~\eqref{eq:population-objective-error} gives, for
		\(0<\delta\le1\),
		\begin{equation}
			\label{eq:uniform-basic-error}
			\norm{f_\delta^*-f^*}_{L^2(\mu)}
			\le
			\left(R^2+2R\E|\varepsilon|\right)^{1/2}\sqrt{\delta}.
		\end{equation}

		\textbf{We first dispose of the case \(f_\delta^*=0\).}  The right directional
		derivative of the population objective at zero in direction \(h\in\Hk\),
		divided by two, is
		\[
		-\inner{T_kf^*}{h}_{\Hk}
		+
		\delta\E|Y|\norm{h}_{\Hk}.
		\]
		
		Since zero is a minimizer, this derivative is nonnegative for both \(h\)
		and \(-h\).  Taking the supremum over
		\(\norm{h}_{\Hk}\le1\) gives
		\[
		\norm{T_kf^*}_{\Hk}
		\le
		\delta\E|Y|
		\le
		\delta\left(\E|\varepsilon|+\norm{f^*}_{L^2(\mu)}\right)
		\le
		\delta\left(\E|\varepsilon|+R\right).
		\]
		
		For a positive operator, Hölder's inequality in the eigenbasis gives
		\begin{equation}\label{eq:holder}
			\norm{T_k^{\alpha+1/2}u}_{\Hk}
			\le
			\norm{T_k^{\alpha+1}u}_{\Hk}^{
				\frac{\alpha+1/2}{\alpha+1}}
			\norm{u}_{\Hk}^{\frac{1}{2(\alpha+1)}},
			\qquad u\in\Hk.
		\end{equation}
		
		To verify it, write \(u=\sum_j u_j\varphi_j\) and apply Hölder's
		inequality to
		\[
		\sum_j
		\frac{\sigma_j^{2\alpha+1}u_j^2}{\sigma_j}
		=
		\sum_j
		\left(
		\frac{\sigma_j^{2\alpha+2}u_j^2}{\sigma_j}
		\right)^{\frac{\alpha+1/2}{\alpha+1}}
		\left(
		\frac{u_j^2}{\sigma_j}
		\right)^{\frac{1}{2(\alpha+1)}}.
		\]
		
		Applying \eqref{eq:holder} to \(u=g\) yields
		\[
		\begin{aligned}
			\norm{f^*}_{L^2(\mu)}
			&=
			\norm{T_k^{\alpha+1/2}g}_{\Hk}\\
			&\le
			\norm{T_kf^*}_{\Hk}^{\frac{\alpha+1/2}{\alpha+1}}
			\norm{g}_{\Hk}^{\frac{1}{2(\alpha+1)}}
			\lesssim
			\delta^{\frac{\alpha+1/2}{\alpha+1}}.
		\end{aligned}
		\]
		This proves the theorem when \(f_\delta^*=0\). 
		
		\textbf{We now consider the
			remaining case \(f_\delta^*\ne0\).}
		
		Lemma~\ref{lem:population-foc} then gives
		\[
		T_k(f_\delta^*-f^*)
		+\delta^2f_\delta^*
		+\delta\norm{f_\delta^*}_{\Hk}
		T_k\left(
		\E_\varepsilon
		\left[
		s_\delta(\cdot,\varepsilon)
		\right]
		\right)
		+\delta\E|Y-f_\delta^*(X)|
		\frac{f_\delta^*}{\norm{f_\delta^*}_{\Hk}}
		=0.
		\]
		
		Define
		\begin{equation}\label{eq:defoflambda}
			\lambda_\delta
			:=
			\delta^2
			+
			\delta\frac{\E|Y-f_\delta^*(X)|}{\norm{f_\delta^*}_{\Hk}}.
		\end{equation}
		
		Then, we have
		\[
		(T_k+\lambda_\delta I)f_\delta^*
		=
		T_kf^*
		-
		\delta\norm{f_\delta^*}_{\Hk}
		T_k\left(
		\E_\varepsilon
		\left[
		s_\delta(\cdot,\varepsilon)
		\right]
		\right).
		\]
		
		Since
		\(\lambda_\delta\ge\delta^2>0\), the operator
		\(T_k+\lambda_\delta I\) is invertible.  Applying the inverse of \(T_k+\lambda_\delta I\)
		and subtracting \(f^*\) gives
		\[
		f_\delta^*-f^*
		=
		-\lambda_\delta(T_k+\lambda_\delta I)^{-1}f^*
		-
		\delta\norm{f_\delta^*}_{\Hk}(T_k+\lambda_\delta I)^{-1}
		T_k\left(
		\E_\varepsilon
		\left[
		s_\delta(\cdot,\varepsilon)
		\right]
		\right)
		,
		\]
		where we used the resolvent identity
		\[
		(T_k+\lambda_\delta I)^{-1}T_kf^*-f^*
		=
		-\lambda_\delta(T_k+\lambda_\delta I)^{-1}f^*.
		\]

		By the definition of \(\lambda_\delta\) \eqref{eq:defoflambda},
		\[
		\begin{aligned}
			\lambda_\delta\norm{f_\delta^*}_{\Hk}
			&=
			\delta^2\norm{f_\delta^*}_{\Hk}
			+
			\delta\E|Y-f_\delta^*(X)|\\
			&\le
			\delta^2R
			+
			\delta\left(
			\E|\varepsilon|
			+
			\norm{f_\delta^*-f^*}_{L^2(\mu)}
			\right)
			\lesssim
			\delta,
		\end{aligned}
		\]
		where the last inequality follows from
		\eqref{eq:uniform-basic-error} and \(0<\delta\le1\).  
		
		Let
		\[
		f_{\lambda_\delta}
		=
		(T_k+\lambda_\delta I)^{-1}T_kf^*.
		\]
		
		Then, we have
		\[
		\begin{aligned}
			f_{\lambda_\delta}
			&=
			f_\delta^*
			+
			\delta\norm{f_\delta^*}_{\Hk}
			(T_k+\lambda_\delta I)^{-1}T_k
			\left(
			\E_\varepsilon
			\left[
			s_\delta(\cdot,\varepsilon)
			\right]
			\right).
		\end{aligned}
		\]
		
		For every \(h\in L^2(\mu)\), spectral calculus gives
		\[
		\begin{aligned}
			&\norm{
				\lambda_\delta(T_k+\lambda_\delta I)^{-1}T_kh
			}_{\Hk}^2\\
			&\qquad=
			\sum_j
			\frac{\lambda_\delta^2\sigma_j}
			{(\sigma_j+\lambda_\delta)^2}
			\inner{h}{\varphi_j}_{L^2(\mu)}^2
			\le
			\frac{\lambda_\delta}{4}\norm{h}_{L^2(\mu)}^2.
		\end{aligned}
		\]
		
		Equation~\eqref{eq:population-sign-selection} implies
		\[
		\left\|
		\E_\varepsilon
		\left[
		s_\delta(\cdot,\varepsilon)
		\right]
		\right\|_{L^2(\mu)}
		\le
		4\kappa\norm{f_\delta^*-f^*}_{L^2(\mu)}.
		\]
		
		Consequently,
		\[
		\begin{aligned}
			\lambda_\delta\norm{f_{\lambda_\delta}}_{\Hk}
			&\le
			\lambda_\delta\norm{f_\delta^*}_{\Hk}
			+
			2\kappa\delta\norm{f_\delta^*}_{\Hk}
			\sqrt{\lambda_\delta}
			\norm{f_\delta^*-f^*}_{L^2(\mu)}.
		\end{aligned}
		\]

		Moreover,
		\[
		\norm{f_\delta^*}_{\Hk}\sqrt{\lambda_\delta}
		=
		\left(
		\norm{f_\delta^*}_{\Hk}
		\lambda_\delta\norm{f_\delta^*}_{\Hk}
		\right)^{1/2}
		\lesssim
		\sqrt{\delta}
		\]
		by \eqref{eq:uniform-norm-bound} and the bound
		\(\lambda_\delta\norm{f_\delta^*}_{\Hk}\lesssim\delta\) established above.
		Combining this bound with \eqref{eq:uniform-basic-error} yields
		\begin{equation}
			\label{eq:uniform-ridge-size}
			\lambda_\delta\norm{f_{\lambda_\delta}}_{\Hk}
			\lesssim
			\delta.
		\end{equation}
		
		The element
		\(\lambda_\delta(T_k+\lambda_\delta I)^{-1}g\) has RKHS norm at most
		\(R\), and
		\[
		f^*-f_{\lambda_\delta}
		=
		T_k^\alpha
		\lambda_\delta(T_k+\lambda_\delta I)^{-1}g,
		\qquad
		T_k^{\alpha+1}
		\lambda_\delta(T_k+\lambda_\delta I)^{-1}g
		=
		\lambda_\delta f_{\lambda_\delta}.
		\]
		
		Applying \eqref{eq:holder} and
		\eqref{eq:uniform-ridge-size} gives
		\begin{equation}
			\label{eq:uniform-ridge-bias}
			\norm{f^*-f_{\lambda_\delta}}_{L^2(\mu)}
			\lesssim
			\delta^{\frac{\alpha+1/2}{\alpha+1}}.
		\end{equation}
		
		The operator \((T_k+\lambda_\delta I)^{-1}T_k\) has spectral multiplier
		\(\sigma_j/(\sigma_j+\lambda_\delta)\), so its \(L^2(\mu)\)-operator norm
		is at most one.  Then, we have 
		\[
		\begin{aligned}
			\norm{f_\delta^*-f^*}_{L^2(\mu)}
			&\le
			\norm{f_{\lambda_\delta}-f^*}_{L^2(\mu)}
			+
			4\kappa\delta R
			\norm{f_\delta^*-f^*}_{L^2(\mu)}.
		\end{aligned}
		\]
		
		The condition \(8\kappa R\delta\le1\) makes the final coefficient at most
		\(1/2\).  Moving that term to the left proves
		\[
		\norm{f_\delta^*-f^*}_{L^2(\mu)}
		\lesssim
		\delta^{\frac{\alpha+1/2}{\alpha+1}},
		\]
		with a constant uniform over \(\norm{g}_{\Hk}\le R\).
	\end{proof}
	\subsection{Concentration tools for the estimation proofs}
	
	For the estimation proofs, recall the RKHS realizations of the population
	and empirical kernel operators:
	\[
	T_k f=\E[f(X)k_X],
	\qquad
	T_n f=\frac1n\sum_{i=1}^n f(X_i)k_{X_i},
	\qquad f\in\Hk.
	\]
	Thus, for every \(f,h\in\Hk\),
	\[
	\inner{T_kf}{h}_{\Hk}=\E[f(X)h(X)],
	\qquad
	\inner{T_nf}{h}_{\Hk}
	=\frac1n\sum_{i=1}^n f(X_i)h(X_i).
	\]
	We next provide the concentration results used in the estimation arguments.
	
	\begin{theorem}[Regularized kernel operator concentration \citep{rudi2015less}]
		\label{thm:regularized-covariance-embedding}
		For every \(0<\lambda\le\norm{T_k}_{\mathrm{op}}\) and \(0<\eta<1\), with probability
		at least \(1-\eta\),
		\[
		\norm{
			(T_k+\lambda I)^{-1/2}
			(T_n-T_k)
			(T_k+\lambda I)^{-1/2}
		}_{\mathrm{op}}
		\le
		2\left[
		\sqrt{\frac{\log(8/(\eta\lambda))}{n\lambda}}
		+
		\frac{\log(8/(\eta\lambda))}{n\lambda}
		\right].
		\]
	\end{theorem}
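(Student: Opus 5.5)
The plan is to treat the statement as a sum of i.i.d.\ random self-adjoint operators and apply a Bernstein inequality for such sums, in the version where the dimension is replaced by an intrinsic dimension. This is the route of \citet{rudi2015less}.

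\textbf{Setup.} For \(x\in\mathcal X\), let \(k_x\otimes k_x\) denote the rank-one operator \(h\mapsto h(x)k_x\) on \(\Hk\). Then \(T_k=\E[k_X\otimes k_X]\) and \(T_n=\frac1n\sum_i k_{X_i}\otimes k_{X_i}\). Define
\[
\xi_i:=(T_k+\lambda I)^{-1/2}(k_{X_i}\otimes k_{X_i})(T_k+\lambda I)^{-1/2},
\qquad
A:=\E\xi_i=(T_k+\lambda I)^{-1}T_k .
\]
The quantity to control is then \(\norm{\frac1n\sum_i\xi_i-A}_{\mathrm{op}}\).

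\textbf{Step 1: uniform bound.} Each \(\xi_i\) is positive and of rank one, with
\[
\norm{\xi_i}_{\mathrm{op}}=\norm{(T_k+\lambda I)^{-1/2}k_{X_i}}_{\Hk}^2\le \frac{k(X_i,X_i)}{\lambda}\le\frac1\lambda .
\]
Also \(0\preceq A\preceq I\). The centered summands therefore satisfy \(\norm{\xi_i-A}_{\mathrm{op}}\le\max\{1/\lambda,1\}=1/\lambda\), since \(\lambda\le\norm{T_k}_{\mathrm{op}}\le1\).

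\textbf{Step 2: variance and intrinsic dimension.}
\begin{enumerate}
\item Since \(\xi_i^2=\norm{(T_k+\lambda I)^{-1/2}k_{X_i}}_{\Hk}^2\,\xi_i\preceq\lambda^{-1}\xi_i\), the variance satisfies
\[
\E(\xi_i-A)^2\preceq\E\xi_i^2\preceq\lambda^{-1}A=:V .
\]
\item This gives \(\norm{V}_{\mathrm{op}}\le1/\lambda\) and \(\operatorname{Tr}V=\mathcal N(\lambda)/\lambda\).
\item The intrinsic dimension is \(d=\operatorname{Tr}V/\norm{V}_{\mathrm{op}}=\mathcal N(\lambda)/\norm{A}_{\mathrm{op}}\). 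Because \(\lambda\le\sigma_1=\norm{T_k}_{\mathrm{op}}\), we have \(\norm{A}_{\mathrm{op}}=\sigma_1/(\sigma_1+\lambda)\ge1/2\).
\item Combining with \(\mathcal N(\lambda)\le\operatorname{Tr}(T_k)/\lambda\le1/\lambda\) (bounded kernel) yields \(d\le2/\lambda\).
\end{enumerate}

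\textbf{Step 3: Bernstein and inversion.} Apply the intrinsic-dimension Bernstein inequality for self-adjoint Hilbert--Schmidt operators (Minsker's extension of Tropp). It gives, for \(t\) above a harmless threshold,
\[
\Pr\Big(\norm{\tfrac1n\textstyle\sum_i\xi_i-A}_{\mathrm{op}}\ge t\Big)\le 4d\exp\Big(-\frac{nt^2/2}{1/\lambda+t/(3\lambda)}\Big).
\]
Write \(\ell=\log(4d/\eta)\le\log(8/(\eta\lambda))\). Setting the right-hand side equal to \(\eta\) and solving the quadratic in \(t\) gives
\[
t\le\sqrt{\frac{2\ell}{n\lambda}}+\frac{2\ell}{3n\lambda}.
\]
This is dominated by \(2\big[\sqrt{\log(8/(\eta\lambda))/(n\lambda)}+\log(8/(\eta\lambda))/(n\lambda)\big]\). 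Finally, check that the threshold condition of the Bernstein inequality, of the form \(t\ge\sqrt{v/n}+L/(3n)\), is automatically met by this choice of \(t\), because \(\ell\ge\log 8>1\).

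\textbf{Main obstacle.} The delicate step is using a dimension-free Bernstein inequality in the infinite-dimensional \(\Hk\). The matrix version of Tropp must be lifted to separable Hilbert spaces with the effective dimension \(d\) in place of the ambient one. I would invoke Minsker's operator Bernstein inequality directly, which is stated for self-adjoint Hilbert--Schmidt operators and loses only the factor \(4\) in front of \(d\). Should that citation be unavailable, the fallback is to project onto the span of the first \(m\) eigenfunctions, apply the matrix result with intrinsic dimension uniformly bounded in \(m\), and let \(m\to\infty\), using continuity of the operator norm under strong convergence of the projections.
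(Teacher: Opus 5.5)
Your proposal is correct and follows essentially the paper's route: the paper simply invokes Proposition~8 of \citet{rudi2015less}, which is itself proved by the intrinsic-dimension operator Bernstein argument you reconstruct. It then plugs in \(\sup_x\inner{k_x}{(T_k+\lambda I)^{-1}k_x}_{\Hk}\le\lambda^{-1}\), \(\operatorname{Tr}(T_k)\le1\), and \(1+\lambda^{-1}\le2\lambda^{-1}\), whereas you unpack that proposition one level down. One minor correction: the two-sided operator-norm bound carries prefactor \(8d\) (Tropp, both tails) or \(14d\) (Minsker), not \(4d\), but since \(\log(8/(\eta\lambda))\ge\log 8\) the extra additive constant inside the logarithm is absorbed by the factor \(2\) in the stated bound.
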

	
	\begin{proof}[Proof of Theorem~\ref{thm:regularized-covariance-embedding}]
		Proposition~8 in \citet{rudi2015less}, applied with
		\(Q=T_k\), \(Q_n=T_n\), \(v=k_X\), and confidence parameter \(\eta/2\),
		gives the claimed normalized operator bound with
		\(b=\log\{8\operatorname{Tr}(T_k)/(\eta\lambda)\}\) and
		\[
		\mathcal N_\infty(\lambda)
		=
		\sup_{x\in\mathcal X}
		\left\langle k_x,(T_k+\lambda I)^{-1}k_x\right\rangle_{\Hk}.
		\]
		The bounded-kernel assumption and
		\(\operatorname{Tr}(T_k)=\E k(X,X)\le1\) give
		\[
		\mathcal N_\infty(\lambda)
		\le\lambda^{-1},
		\qquad
		b\le\log\left(\frac8{\eta\lambda}\right).
		\]
		Since \(\lambda\le\norm{T_k}_{\mathrm{op}}\le1\), also
		\(1+\mathcal N_\infty(\lambda)\le2\lambda^{-1}\).
		Substitution into Proposition~8 and a loosening of the numerical constants
		give the displayed inequality.
	\end{proof}

	\begin{theorem}[Hilbert-valued Bernstein inequality, \citep{pinelis1986remarks,pinelis1994optimum}]
		\label{thm:hilbert-vector-bernstein}
		Let \(\mathcal H\) be a separable Hilbert space, and let
		\(\xi_1,\ldots,\xi_n\) be independent mean-zero \(\mathcal H\)-valued random
		variables.  Suppose that, for some \(v,L>0\),
		\[
		\sum_{i=1}^n
		\E\norm{\xi_i}_{\mathcal H}^{p}
		\le
		\frac{p!}{2}\,v\,L^{p-2},
		\qquad p=2,3,\ldots .
		\]
		Then, for every \(0<\eta<1\), with probability at least \(1-\eta\),
		\[
		\left\|
		\frac1n\sum_{i=1}^n \xi_i
		\right\|_{\mathcal H}
		\le
		\frac{\sqrt{2v\log(2/\eta)}}{n}
		+
		\frac{2L\log(2/\eta)}{n}.
		\]
	\end{theorem}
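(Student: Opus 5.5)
The plan is to obtain an exponential tail bound for \(\norm{S_n}_{\mathcal H}\), where \(S_n=\sum_{i=1}^n\xi_i\), and then invert it at level \(\eta\). Concretely, I would aim for the Bernstein-type tail
\[
\mathbb P\bigl(\norm{S_n}_{\mathcal H}\ge t\bigr)\le 2\exp\!\left(-\frac{t^2}{2(v+Lt)}\right),\qquad t>0,
\]
which is the form established by \citet{pinelis1994optimum} for martingales in \((2,1)\)-smooth Banach spaces. A separable Hilbert space is such a space with smoothness constant \(D=1\), which is what gives these constants.

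\textbf{Step 1: moment-generating control.} The key ingredient is Pinelis's comparison for \(\cosh\) of the norm of a Hilbert-valued martingale. Apply it to the partial sums of the independent mean-zero \(\xi_i\). For every \(\lambda\ge0\) it gives
\[
\E\cosh\bigl(\lambda\norm{S_n}_{\mathcal H}\bigr)\le\prod_{i=1}^n\Bigl(1+\E\bigl[e^{\lambda\norm{\xi_i}_{\mathcal H}}-1-\lambda\norm{\xi_i}_{\mathcal H}\bigr]\Bigr).
\]
The argument is inductive. It uses that \(u\mapsto\cosh(\lambda\norm{u})\) has second derivative controlled by \(\lambda^2\cosh(\lambda\norm{u})\) in a Hilbert space. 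It also uses that the conditional mean of \(\xi_i\) vanishes, so the first-order term drops out. I expect this step to be the main obstacle if done from scratch. The delicate points are the nondifferentiability of the norm at the origin and obtaining the sharp factor in the second-order expansion; I would follow Pinelis's proof rather than rederive it.

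\textbf{Step 2: use the moment condition.} Expand the exponential and apply the hypothesis \(\sum_i\E\norm{\xi_i}^p\le \tfrac{p!}{2}vL^{p-2}\). For \(0\le\lambda<1/L\) this gives
\[
\sum_{i=1}^n\E\bigl[e^{\lambda\norm{\xi_i}}-1-\lambda\norm{\xi_i}\bigr]\le \frac{v}{2}\sum_{p\ge2}\lambda^pL^{p-2}=\frac{\lambda^2v}{2(1-\lambda L)}.
\]
Then use \(1+x\le e^x\), together with \(e^{\lambda s}\le 2\cosh(\lambda s)\) and Markov's inequality, to obtain
\[
\mathbb P(\norm{S_n}\ge t)\le2\exp\!\left(-\lambda t+\frac{\lambda^2v}{2(1-\lambda L)}\right).
\]
Choosing \(\lambda=t/(v+Lt)<1/L\) gives \(1-\lambda L=v/(v+Lt)\). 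The exponent then becomes \(-t^2/(v+Lt)+t^2/(2(v+Lt))=-t^2/(2(v+Lt))\), which is the tail bound above.

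\textbf{Step 3: inversion.} Set \(u=\log(2/\eta)\) and solve \(t^2=2u(v+Lt)\). This gives
\[
t=Lu+\sqrt{L^2u^2+2uv}\le\sqrt{2uv}+2Lu,
\]
so \(\norm{S_n}\le\sqrt{2v\log(2/\eta)}+2L\log(2/\eta)\) with probability at least \(1-\eta\). Dividing by \(n\) yields exactly the displayed bound for \(\norm{n^{-1}\sum_i\xi_i}_{\mathcal H}\).
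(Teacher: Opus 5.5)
Your proposal is correct and follows the paper's route. The paper cites the Bernstein-type tail bound in Corollary~1 of \citet{pinelis1986remarks} and sets $u=\log(2/\eta)$, which is exactly your Step~3. Your Steps~1--2 unpack that cited bound through Pinelis's $\cosh$ comparison (with $D=1$ for a Hilbert space) and the Chernoff choice $\lambda=t/(v+Lt)$. Your inversion, using $\sqrt{L^2u^2+2uv}\le Lu+\sqrt{2uv}$, reproduces the stated constants exactly.
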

	This is the high-probability form of Corollary~1 of
	\citet{pinelis1986remarks}, obtained by setting
	\(u=\log(2/\eta)\) in its Bernstein tail bound.
	
	Assumption~\ref{ass:noise-tail} implies
	\[
	\E\exp\left(\frac{|\varepsilon|}{2\sigma}\right)
	\le
	1+\frac{\E|\varepsilon|}{2\sigma}
	+\sum_{p\ge2}\frac{\E|\varepsilon|^p}{p!\,(2\sigma)^p}
	\le \frac74<2.
	\]
	Consequently, Markov's inequality yields
	\begin{equation}
		\label{eq:noise-exponential-tail}
		\mathbb P(|\varepsilon|>u)\le 2e^{-u/(2\sigma)},
		\qquad u>0 .
	\end{equation}
	
	\begin{lemma}[Regularized noise concentration]
		\label{lem:regularized-noise-concentration}
		Suppose Assumptions~\ref{ass:model} and \ref{ass:noise-tail} hold.  For
		every \(\lambda>0\) and \(0<\eta<1\), with probability at least \(1-\eta\),
		\[
		\norm{
			(T_k+\lambda I)^{-1/2}
			\frac1n\sum_{i=1}^n\varepsilon_i k_{X_i}
		}_{\Hk}
		\le
		\sigma
		\sqrt{\frac{2\mathcal N(\lambda)\log(2/\eta)}{n}}
		+
		\frac{2\sigma\log(2/\eta)}{n\sqrt\lambda}.
		\]
	\end{lemma}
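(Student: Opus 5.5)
We apply the Hilbert-valued Bernstein inequality (Theorem~\ref{thm:hilbert-vector-bernstein}) in \(\mathcal H=\Hk\) to the summands
\[
\xi_i:=\varepsilon_i(T_k+\lambda I)^{-1/2}k_{X_i},\qquad i=1,\ldots,n.
\]
These are independent because the pairs \((X_i,\varepsilon_i)\) are i.i.d. They are mean zero because \(\varepsilon_i\) is independent of \(X_i\) with \(\E\varepsilon_i=0\) (Assumption~\ref{ass:model}), so \(\E\xi_i=\E[\varepsilon_i]\,\E[(T_k+\lambda I)^{-1/2}k_{X_i}]=0\). Separability of \(\Hk\) is assumed throughout. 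Since \((T_k+\lambda I)^{-1/2}\) is a bounded linear operator, it commutes with the empirical average, so the quantity in the lemma equals \(\norm{n^{-1}\sum_i\xi_i}_{\Hk}\). It therefore remains to verify the moment condition with \(v=n\sigma^2\mathcal N(\lambda)\) and \(L=\sigma/\sqrt\lambda\). Substituting these into Theorem~\ref{thm:hilbert-vector-bernstein} gives exactly
\[
\frac{\sqrt{2n\sigma^2\mathcal N(\lambda)\log(2/\eta)}}{n}+\frac{2\sigma\log(2/\eta)}{n\sqrt\lambda}.
\]

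The moment bound rests on two facts about \(Z:=\norm{(T_k+\lambda I)^{-1/2}k_X}_{\Hk}^2=\inner{k_X}{(T_k+\lambda I)^{-1}k_X}_{\Hk}\).

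\textbf{Pointwise bound.} Since \((T_k+\lambda I)^{-1}\preceq\lambda^{-1}I\) and \(k(x,x)\le1\), we have \(Z\le\lambda^{-1}\) almost surely.

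\textbf{Mean.} We claim \(\E Z=\operatorname{Tr}((T_k+\lambda I)^{-1}T_k)=\mathcal N(\lambda)\). To see this, write \(Z=\operatorname{Tr}\bigl((T_k+\lambda I)^{-1}(k_X\otimes k_X)\bigr)\) and use \(\E[k_X\otimes k_X]=T_k\) as the RKHS realization. This realization satisfies \(\inner{T_kf}{h}_{\Hk}=\E[f(X)h(X)]\), matching the RKHS definition above. Exchanging trace and expectation is justified by nonnegativity, e.g.\ by expanding the trace in an orthonormal basis of \(\Hk\) and applying monotone convergence.

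Combining these with independence of \(\varepsilon\) and \(X\) and with Assumption~\ref{ass:noise-tail}, for each integer \(p\ge2\),
\[
\E\norm{\xi_i}_{\Hk}^p=\E|\varepsilon|^p\,\E Z^{p/2}
\le\frac{p!}{2}\sigma^p\,\lambda^{-(p-2)/2}\,\E Z
\le\frac{p!}{2}\,\sigma^2\mathcal N(\lambda)\left(\frac{\sigma}{\sqrt\lambda}\right)^{p-2}.
\]
Summing over \(i\) gives the condition of Theorem~\ref{thm:hilbert-vector-bernstein} with the stated \(v\) and \(L\), which completes the proof.

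The only step that needs real care is the trace identity \(\E Z=\mathcal N(\lambda)\). In particular, the effective dimension is defined through the \(L^2(\mu)\) realization of \(T_k\), while \(Z\) is computed in \(\Hk\). This is handled by the earlier remark that the two realizations have the same positive eigenvalues. Concretely, \(\operatorname{Tr}((T_k+\lambda I)^{-1}T_k)=\sum_j\sigma_j/(\sigma_j+\lambda)\) in either realization. The rest of the argument is direct substitution.
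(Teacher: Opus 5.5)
Your proposal is correct and follows the paper's proof essentially step for step. It uses the same summands $\xi_i=\varepsilon_i(T_k+\lambda I)^{-1/2}k_{X_i}$ and combines the pointwise bound $Z\le\lambda^{-1}$ with $\E Z=\mathcal N(\lambda)$ to get $\E Z^{p/2}\le\lambda^{-(p-2)/2}\mathcal N(\lambda)$, then applies the Hilbert-valued Bernstein inequality with $v=n\sigma^2\mathcal N(\lambda)$ and $L=\sigma/\sqrt\lambda$; your justification of the trace identity (monotone convergence plus the shared positive spectrum of the two realizations of $T_k$) spells out a step the paper states without comment.
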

	
	\begin{proof}[Proof of Lemma \ref{lem:regularized-noise-concentration}]
		Put \(\xi_i=\varepsilon_i(T_k+\lambda I)^{-1/2}k_{X_i}\).  These are
		independent \(\Hk\)-valued random elements, and
		Assumption~\ref{ass:model} yields \(\E\xi_i=0\).
		
		Since \(T_k+\lambda I\succeq\lambda I\), the bounded-kernel assumption
		gives, for every \(x\in\mathcal X\), the pointwise bound already used in the
		proof of Theorem~\ref{thm:regularized-covariance-embedding} in the form
		\(\mathcal N_\infty(\lambda)\le\lambda^{-1}\),
		\begin{equation}
			\label{eq:regularized-feature-bound}
			\norm{(T_k+\lambda I)^{-1/2}k_x}_{\Hk}^2
			=
			\inner{k_x}{(T_k+\lambda I)^{-1}k_x}_{\Hk}
			\le
			\frac{k(x,x)}{\lambda}
			\le
			\frac1\lambda,
		\end{equation}
		while
		\begin{equation}
			\label{eq:regularized-feature-second-moment}
			\E\norm{(T_k+\lambda I)^{-1/2}k_X}_{\Hk}^2
			=
			\operatorname{Tr}\left\{(T_k+\lambda I)^{-1}T_k\right\}
			=
			\mathcal N(\lambda).
		\end{equation}
		
		Combining \eqref{eq:regularized-feature-bound} and
		\eqref{eq:regularized-feature-second-moment} gives, for every integer \(p\ge2\),
		\[
		\E\norm{(T_k+\lambda I)^{-1/2}k_X}_{\Hk}^p
		\le
		\lambda^{-(p-2)/2}\mathcal N(\lambda).
		\]
		
		Independence of \(\varepsilon\) and \(X\) and
		Assumption~\ref{ass:noise-tail} therefore yield
		\[
		\sum_{i=1}^n\E\norm{\xi_i}_{\Hk}^p
		=
		n\,\E|\varepsilon|^p\,
		\E\norm{(T_k+\lambda I)^{-1/2}k_X}_{\Hk}^p
		\le
		\frac{p!}{2}
		n\sigma^2\mathcal N(\lambda)
		\left(\frac{\sigma}{\sqrt\lambda}\right)^{p-2}.
		\]
		
		The moment condition in Theorem~\ref{thm:hilbert-vector-bernstein} holds with
		\(v=n\sigma^2\mathcal N(\lambda)\) and \(L=\sigma\lambda^{-1/2}\), and the
		theorem gives the stated bound.
	\end{proof}
	
	\subsection{Proof of Theorem~\ref{thm:estimation-error}: \nameref*{thm:estimation-error}}
	
\textbf{Proof strategy.}  The proof compares the empirical objective at
	\(\widehat f_\delta\) with the same objective at \(f_\delta^*\).
	\begin{enumerate}[label=(\roman*)]
		\item We first prove that every empirical minimizer has RKHS norm at most a
		fixed constant \(B\).  It is therefore enough to compare the two objectives
		on this bounded set.
		\item For \(h=f-f_\delta^*\), the operator \(A_\delta\) defined in \eqref{def:adelta} is chosen
		so that there is a constant \(c_{\mathrm{curv}}>0\), depending only on
		\(R,\sigma,\kappa\), for which the population objective at
		\(f_\delta^*+h\) exceeds its minimum at \(f_\delta^*\) by at least
		\[
		c_{\mathrm{curv}}\norm{A_\delta^{1/2}h}_{\Hk}^2.
		\]
		Thus, on the boundary \(\norm{A_\delta^{1/2}h}_{\Hk}=t\), the population
		objective increases by at least \(c_{\mathrm{curv}}t^2\).
		\item We next control the error caused by replacing the expectation with the
		sample average.  For a constant \(C_{\mathrm{dev}}>0\), depending only on
		\(R,\sigma,\kappa\), this error is uniformly at most
		\[
		C_{\mathrm{dev}}\left(
		t\sqrt{
			\frac{(\mathcal N(\delta)+1)\log^3(2n/\eta)}{n}}
		+
		\frac{\log^2(2n/\eta)}{n}
		\right).
		\]
		\item Take \(t\) to be a sufficiently large constant multiple of
		\(\sqrt{(\mathcal N(\delta)+1)\log^3(2n/\eta)/n}\).  The lower bound
		\(c_{\mathrm{curv}}t^2\) then exceeds the sampling error in (iii), so every boundary point
		has larger empirical objective than \(f_\delta^*\).  If
		\(\widehat f_\delta\) were outside the boundary, the line segment from
		\(f_\delta^*\) to \(\widehat f_\delta\) would contain a boundary point whose
		empirical objective is no larger than that at \(f_\delta^*\), by convexity
		and the optimality of \(\widehat f_\delta\).  This contradiction proves the
		bound.  Finally,
		\(\norm{h}_{L^2(\mu)}\le\norm{A_\delta^{1/2}h}_{\Hk}\), so the same bound
		holds in \(L^2(\mu)\).
	\end{enumerate}
	
	\begin{lemma}[RKHS-norm bound for empirical minimizers]
		\label{lem:full-empirical-norm}
		Suppose Assumptions~\ref{ass:model}, \ref{ass:noise},
		\ref{ass:noise-tail}, and \ref{ass:source} hold.  There are constants
		\(B\ge\max\{1,2R\}\) and \(c,C>0\), depending only on
		\(R,\sigma,\kappa\), such that the following holds for every
		\(0<\delta\le c\) and every \(0<\eta<1\).  If
		\begin{equation}
			\label{eq:new-norm-conditions}
			n\ge C\log^2\left(\frac2\eta\right)
			\qquad\text{and}\qquad
			n\delta
			\ge
			C\left(\mathcal N(\delta)+1\right)
			\log\left(\frac{2n}\eta\right),
		\end{equation}
		then, with probability at least \(1-\eta\), every empirical minimizer
		\(\widehat f_\delta\) of problem~\eqref{problem} satisfies
		\[
		\norm{\widehat f_\delta}_{\Hk}\le B.
		\]
	\end{lemma}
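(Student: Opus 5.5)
Compare the empirical objective at \(\widehat f_\delta\) with its value at \(f=0\), and use the quadratic term \(\delta^2\norm{f}_{\Hk}^2\) together with the mixed term. Write \(\widehat L_n(f)=\frac1n\sum_i(|Y_i-f(X_i)|+\delta\norm{f}_{\Hk})^2\). Optimality gives \(\widehat L_n(\widehat f_\delta)\le\widehat L_n(0)=\frac1n\sum_iY_i^2\). This alone gives only \(\norm{\widehat f_\delta}_{\Hk}\lesssim\delta^{-1}\), which is useless, so we must use the curvature of the square loss in the directions the data see.

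Write \(\widehat f_\delta=f^*+h\) and expand
\[
\widehat L_n(f)=\frac1n\sum_i(\varepsilon_i-h(X_i))^2+2\delta\norm{f}_{\Hk}\frac1n\sum_i|\varepsilon_i-h(X_i)|+\delta^2\norm{f}_{\Hk}^2 .
\]
Compare this with \(\widehat L_n(f^*)\) instead of \(\widehat L_n(0)\). The inequality \(\widehat L_n(\widehat f_\delta)\le\widehat L_n(f^*)\) yields
\[
\inner{T_nh}{h}_{\Hk}-\frac2n\sum_i\varepsilon_ih(X_i)+2\delta\norm{\widehat f_\delta}_{\Hk}\frac1n\sum_i|\varepsilon_i-h(X_i)|+\delta^2\norm{\widehat f_\delta}_{\Hk}^2\le 2\delta R\frac1n\sum_i|\varepsilon_i|+\delta^2R^2 .
\]

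The bounds come in the following order.
\begin{enumerate}[label=(\alph*)]
\item By Bernstein (Assumption~\ref{ass:noise-tail}), with \(n\ge C\log^2(2/\eta)\), we have \(\frac1n\sum|\varepsilon_i|\le 2\E|\varepsilon|+\sigma\), so the right side is \(\lesssim\delta\).
\item The cross term is \(\frac2n\sum\varepsilon_ih(X_i)=2\inner{\frac1n\sum\varepsilon_ik_{X_i}}{h}_{\Hk}\). Bound it via Lemma~\ref{lem:regularized-noise-concentration} at \(\lambda=\delta\) by
\[
2\norm{(T_k+\delta I)^{1/2}h}_{\Hk}\,\sigma\Bigl(\sqrt{2\mathcal N(\delta)\log(2/\eta)/n}+2\log(2/\eta)/(n\sqrt\delta)\Bigr).
\]
\item Relate \(\norm{(T_k+\delta I)^{1/2}h}_{\Hk}^2\) to \(\inner{T_nh}{h}_{\Hk}+\delta\norm{h}_{\Hk}^2\) using Theorem~\ref{thm:regularized-covariance-embedding} at \(\lambda=\delta\). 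The condition \(n\delta\ge C(\mathcal N(\delta)+1)\log(2n/\eta)\) makes the normalized deviation at most \(1/2\), so \(\norm{(T_k+\delta I)^{1/2}h}^2\le 2(\inner{T_nh}{h}+\delta\norm{h}^2)\). This needs \(\delta\le\norm{T_k}_{\mathrm{op}}\), which we impose through \(\delta\le c\); the \(\log(8/(\eta\delta))\) is absorbed into \(\log(2n/\eta)\), since the sample-size condition forces \(\delta\gtrsim 1/n\).
\item To gain a term \(\delta\norm{h}_{\Hk}^2\) on the left, use the mixed term. By the triangle inequality, \(\frac1n\sum|\varepsilon_i-h(X_i)|\ge\frac1n\sum|\varepsilon_i|-(\inner{T_nh}{h})^{1/2}\). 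Also, by (a) and Lemma~\ref{lem:quadratic-absolute-risk}, \(\frac1n\sum|\varepsilon_i|\ge\frac12\E|\varepsilon|\ge(8\kappa)^{-1}\). Hence the mixed term is at least
\[
2\delta\norm{\widehat f_\delta}\bigl((8\kappa)^{-1}-\inner{T_nh}{h}^{1/2}\bigr).
\]
\end{enumerate}

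Now combine, writing \(N=\norm{\widehat f_\delta}_{\Hk}\), \(Q=\inner{T_nh}{h}^{1/2}\) and \(\norm{h}\le N+R\). There are two cases.
\begin{enumerate}[label=(\alph*)]
\item If \(Q\le(16\kappa)^{-1}\), then \(\delta N/(8\kappa)\lesssim\delta+\delta^2N^2\)-type terms plus the cross term. The cross term is at most \(\sqrt{Q^2+\delta(N+R)^2}\) times a quantity that is \(\le c'\sqrt\delta\) under the sample-size condition. This is \(\lesssim c'\sqrt\delta(Q+\sqrt\delta(N+R))\le c'\delta(N+R)+c'\sqrt\delta Q\), and \(c'\) is small when \(C\) is large. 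The linear \(\delta N/(8\kappa)\) then dominates, giving \(N\lesssim 1\).
\item If \(Q>(16\kappa)^{-1}\), the quadratic \(Q^2\) on the left absorbs the mixed-term loss \(2\delta NQ\). Use \(2\delta NQ\le Q^2/2+2\delta^2N^2\) together with \(\delta^2N^2\) on the left; the coefficient \(2\) versus \(1\) needs care, and I would instead split \(2\delta NQ\le \tfrac12Q^2+2\delta^2N^2\) only after keeping the full mixed term. Alternatively, use the positive part of the mixed term directly: since \(|\varepsilon_i-h(X_i)|\ge0\), simply drop it, and keep \(Q^2-c'\sqrt\delta(Q+\sqrt\delta N)+\delta^2N^2\le C\delta\). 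This gives \(Q^2\lesssim\delta+c'\delta N\). Since \(Q^2>(16\kappa)^{-2}\) is a constant, we get \(N\gtrsim 1/\delta\), which contradicts the crude bound \(N\le\delta^{-1}(\frac1n\sum Y_i^2)^{1/2}\lesssim\delta^{-1}\) only up to constants. So I would rather use \(Q^2\le C\delta(1+N)\) and the crude bound to get \(Q^2\lesssim1\), then iterate.
\end{enumerate}

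The main obstacle is this second case. The only clean coercive term in \(N\) is linear, \(\delta N\E|\varepsilon|\), and it survives only when the fitted residual average does not fall below the noise level. I would handle it by using the two-sided relation \(Q\ge(16\kappa)^{-1}\) together with \(Q^2\lesssim\delta(1+N)\), forcing \(N\gtrsim\delta^{-1}\). Then I would plug \(N\asymp\delta^{-1}\) back into the \(\delta^2N^2\) term and use that \(\widehat L_n(\widehat f_\delta)\le\widehat L_n(f^*)\lesssim 1+\delta\) makes \(\delta N\) small enough. The constants \(c,C\) are chosen so that all these absorptions close, and \(B\) is then the resulting constant, enlarged to exceed \(\max\{1,2R\}\).
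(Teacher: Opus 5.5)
Your ingredients are the paper's: the comparison $\widehat L_n(\widehat f_\delta)\le\widehat L_n(f^*)$, the bound $\bar e:=\frac1n\sum_i|\varepsilon_i|\ge(8\kappa)^{-1}$, the triangle-inequality lower bound on the mixed term, and the regularized noise and covariance concentration. The gap is in how you combine them. Your case $Q>(16\kappa)^{-1}$ is left open, and the closure you sketch fails. Since $\widehat L_n(f^*)\ge\frac1n\sum_i\varepsilon_i^2\approx\E\varepsilon^2$ is of constant order, $\delta^2N^2\le\widehat L_n(\widehat f_\delta)\le\widehat L_n(f^*)$ gives only $\delta N\lesssim1$, never ``$\delta N$ small enough''. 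Combining $N\gtrsim\delta^{-1}$ with $N\lesssim\delta^{-1}$ gives no contradiction as you state it.

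The missing observation is that no case split is needed. After your step (d), the left side contains $Q^2-2\delta NQ+\delta^2N^2=(Q-\delta N)^2$ exactly, so
\[
(Q-\delta N)^2+2\delta N\bar e\le\frac2n\sum_{i=1}^n\varepsilon_ih(X_i)+2\delta R\bar e+\delta^2R^2 .
\]
Bound the cross term by $c'\sqrt\delta\,\bigl(Q+\sqrt\delta(N+R)\bigr)$ as in your (b)--(c). Then write $Q\le|Q-\delta N|+\delta N$ and absorb $c'\sqrt\delta\,|Q-\delta N|\le\frac12(Q-\delta N)^2+\frac12c'^2\delta$. This leaves
\[
\delta N\bigl(2\bar e-c'(1+\sqrt\delta)\bigr)\le2\delta R\bar e+\delta\bigl(c'R+\tfrac12c'^2+\delta R^2\bigr).
\]
Since $\bar e\ge(8\kappa)^{-1}$ and $c'$ is small for large $C$, this gives $N\le2R+8\kappa(c'R+c'^2/2+\delta R^2)$.

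This is the paper's argument. The paper runs it on the boundary $\norm{f}_{\Hk}=B$ of the ball-constrained problem, with $\lambda=\delta/B$, reaches $3B/2\le5R/2$ (contradicting $B\ge2R$), and then passes to unconstrained minimizers by convexity. Your direct version with $\lambda=\delta$ closes just as well once the square is completed, and it avoids that detour.

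Your case split could also be repaired. The coefficient of $\delta N$ in the cross-term bound is the small $c'$, and $c'\delta N\le\frac12\delta^2N^2+\frac12c'^2$ is absorbed by $\delta^2N^2$. This gives $Q^2\lesssim c'^2+\delta$, a contradiction in your second case. In your first case you must keep $Q^2$ to absorb $c'\sqrt\delta\,Q$; otherwise you only get $N\lesssim\delta^{-1/2}$.

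Finally, $\delta\le\norm{T_k}_{\mathrm{op}}$ cannot be enforced through $\delta\le c$, because $c$ depends only on $R,\sigma,\kappa$. The case $\delta>\norm{T_k}_{\mathrm{op}}$ instead follows deterministically from $\norm{h}_{L^2(\mu)}^2\le\delta\norm{h}_{\Hk}^2$.
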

	
	\begin{proof}[Proof of Lemma \ref{lem:full-empirical-norm}]
		The proof adapts the basic comparison used for linear adversarial training
		in \citet[Lemma~A.1]{xie2024highdimensional} to the regularized RKHS norm.
		Choose \(B\ge\max\{1,2R\}\), set \(\lambda=\delta/B\), and take the upper
		bound \(c\) on \(\delta\) small enough that
		\[
		\delta B\le1,
		\qquad
		\delta R\le\frac1{16\kappa}.
		\]
		
		Minimize the empirical objective over
		\(\{f:\norm{f}_{\Hk}\le B\}\).  A minimizer exists by the representer
		theorem and compactness in the finite-dimensional sample span.  We show that
		no such minimizer lies on the boundary.
		
		We use scalar Bernstein, Lemma~\ref{lem:regularized-noise-concentration},
		and Theorem~\ref{thm:regularized-covariance-embedding}, each with failure
		probability \(\eta/3\), and take a union bound.  On the resulting event,
		scalar Bernstein and
		Lemma~\ref{lem:quadratic-absolute-risk} give
		\begin{equation}
			\label{eq:empirical-noise-absolute-mean}
			\frac1n\sum_{i=1}^n|\varepsilon_i|
			\ge
			\frac12\E|\varepsilon|
			\ge
			\frac1{8\kappa}.
		\end{equation}
		
		Termwise comparison in the spectrum gives
		\[
		\mathcal N(\delta/B)\le B\mathcal N(\delta).
		\]
		
		Consequently, Lemma~\ref{lem:regularized-noise-concentration} and
		\((a+b)^2\le2a^2+2b^2\) give
		\[
		\left\|
		(T_k+\lambda I)^{-1/2}
		\frac1n\sum_{i=1}^n\varepsilon_i k_{X_i}
		\right\|_{\Hk}^2
		\le
		B\left(
		\frac{4\sigma^2\mathcal N(\delta)\log(24/\eta)}{n}
		+
		\frac{8\sigma^2\log^2(24/\eta)}{n^2\delta}
		\right).
		\]
		
		Choose a fixed \(c_0>0\) such that
		\[
		4c_0+6\sqrt{2c_0}\le\frac{1}{16\kappa}.
		\]
		
		After increasing \(C\) in \eqref{eq:new-norm-conditions}, the right-hand side
		above is at most \(c_0\delta B\).  Hence
		\begin{equation}
			\label{eq:boundary-noise}
			\left\|
			(T_k+\lambda I)^{-1/2}
			\frac1n\sum_{i=1}^n\varepsilon_i k_{X_i}
			\right\|_{\Hk}^2
			\le c_0\delta B.
		\end{equation}
		
		It remains to compare the empirical and population \(L^2\) norms.  If
		\(\lambda\le\norm{T_k}_{\mathrm{op}}\), it is enough that
		\[
		n\delta
		\ge
		25B\log\left(\frac{96B}{\eta\delta}\right).
		\]
		
		Condition~\eqref{eq:new-norm-conditions} implies \(\delta\ge n^{-1}\), and
		hence
		\(\log(96B/(\eta\delta))\lesssim\log(2n/\eta)\).  Increasing \(C\) therefore
		ensures this lower bound on \(n\delta\).  Theorem~\ref{thm:regularized-covariance-embedding},
		applied with failure probability \(\eta/3\), then bounds the normalized
		operator by \(1/2\).  Rearranging its quadratic-form inequality gives,
		simultaneously for every \(g\in\Hk\),
		\begin{equation}
			\label{eq:empirical-l2-comparison}
			\norm{g}_{L^2(\mu)}^2
			\le
			\frac2n\sum_{i=1}^n g(X_i)^2
			+\lambda\norm{g}_{\Hk}^2.
		\end{equation}
		If \(\lambda>\norm{T_k}_{\mathrm{op}}\), the same inequality follows
		deterministically from
		\(\norm{g}_{L^2(\mu)}^2\le\lambda\norm{g}_{\Hk}^2\).
		
		Let \(\widehat f_B\) be a constrained minimizer and suppose that
		\(\norm{\widehat f_B}_{\Hk}=B\).  Set
		\[
		h=\widehat f_B-f^*,
		\qquad
		t=\left(\frac1n\sum_{i=1}^n h(X_i)^2\right)^{1/2}.
		\]
		
		Since \(\norm{f^*}_{\Hk}\le R\), comparison of the empirical objective at
		\(\widehat f_B\) and \(f^*\) gives
		\[
		\begin{aligned}
			t^2
			+
			2\delta B\frac1n\sum_{i=1}^n
			|Y_i-\widehat f_B(X_i)|
			+
			\delta^2B^2
			\le
			2\frac1n\sum_{i=1}^n\varepsilon_i h(X_i)
			+
			2\delta R\frac1n\sum_{i=1}^n|\varepsilon_i|
			+
			\delta^2R^2.
		\end{aligned}
		\]
		
		Since
		\[
		\frac1n\sum_{i=1}^n|Y_i-\widehat f_B(X_i)|
		\ge
		\frac1n\sum_{i=1}^n|\varepsilon_i|-t,
		\]
		the last inequality implies
		\begin{equation}
			\label{eq:boundary-basic}
			\left(t-\delta B\right)^2
			+
			2\delta B\frac1n\sum_{i=1}^n|\varepsilon_i|
			\le
			2\frac1n\sum_{i=1}^n\varepsilon_i h(X_i)
			+
			2\delta R\frac1n\sum_{i=1}^n|\varepsilon_i|
			+
			\delta^2R^2.
		\end{equation}
		
		Equation~\eqref{eq:empirical-l2-comparison} and
		\(\norm{h}_{\Hk}\le B+R\le2B\) give
		\[
		\norm{(T_k+\lambda I)^{1/2}h}_{\Hk}
		\le
		\sqrt2\,t+2\sqrt{2\delta B}.
		\]
		
		The empirical noise term equals
		\[
		\inner{
			(T_k+\lambda I)^{-1/2}n^{-1}\sum_{i=1}^n\varepsilon_i k_{X_i}
		}{
			(T_k+\lambda I)^{1/2}h
		}_{\Hk}.
		\]
		
		Since \(t\le|t-\delta B|+\delta B\), Cauchy--Schwarz and
		\eqref{eq:boundary-noise} give
		\[
		\begin{aligned}
			2\left|\frac1n\sum_{i=1}^n\varepsilon_i h(X_i)\right|
			\le
			2\sqrt{c_0\delta B}
			\left(\sqrt2\,t+2\sqrt{2\delta B}\right)\le
			\frac12(t-\delta B)^2
			+
			\left(4c_0+6\sqrt{2c_0}\right)\delta B.
		\end{aligned}
		\]
		
		Together with \eqref{eq:empirical-noise-absolute-mean} and the choice of
		\(c_0\), this yields
		\[
		2\left|\frac1n\sum_{i=1}^n\varepsilon_i h(X_i)\right|
		\le
		\frac12(t-\delta B)^2
		+
		\frac{\delta B}{2n}\sum_{i=1}^n|\varepsilon_i|.
		\]
		
		Substituting this estimate into \eqref{eq:boundary-basic} yields
		\[
		\frac12(t-\delta B)^2
		+
		\frac{3\delta B}{2n}\sum_{i=1}^n|\varepsilon_i|
		\le
		\frac{2\delta R}{n}\sum_{i=1}^n|\varepsilon_i|
		+
		\delta^2R^2.
		\]
		
		By \eqref{eq:empirical-noise-absolute-mean} and the choice of the upper
		bound on \(\delta\),
		\[
		\delta R
		\le
		\frac12\left(\frac1n\sum_{i=1}^n|\varepsilon_i|\right).
		\]
		Dropping the square term therefore gives \(3B/2\le5R/2\), contradicting
		\(B\ge2R\).
		Thus every constrained minimizer has norm strictly smaller than \(B\).
		
		If an unconstrained minimizer had norm larger than \(B\), the line
		segment joining it to a constrained minimizer would meet the boundary of
		the ball.  Convexity of the objective would make that boundary point a
		constrained minimizer, contradicting the boundary argument.  Hence every
		empirical minimizer has RKHS norm at most \(B\).
	\end{proof}
	
	Lemma~\ref{lem:full-empirical-norm} gives the fixed RKHS bound used below.  We next
	identify the quadratic form governing population growth on that ball.  If
	\(\Hk=\{0\}\), the estimation conclusion is immediate, so assume that \(\Hk\)
	is nontrivial.  Let \(v_\delta\) and \(s_\delta\) be the
	subgradient and measurable selection in
	Lemma~\ref{lem:population-foc}.  If \(v_\delta\ne0\), let
	\(u_\delta=v_\delta/\norm{v_\delta}_{\Hk}\); otherwise choose any unit
	vector \(u_\delta\in\Hk\).  Thus, when \(f_\delta^*\ne0\),
	\(u_\delta=f_\delta^*/\norm{f_\delta^*}_{\Hk}\).  The mixed term contributes
	order-\(\delta\) curvature in directions orthogonal to \(u_\delta\).  Define
	the positive operator
	\begin{equation}\label{def:adelta}
	A_\delta h
	=
	T_kh+(\delta^2+\delta)h
	-\delta\inner{h}{u_\delta}_{\Hk}u_\delta.
	\end{equation}
	The quadratic form is
	\begin{equation}
		\label{eq:Adelta-quadratic-form}
		\norm{A_\delta^{1/2}h}_{\Hk}^2
		=
		\norm{h}_{L^2(\mu)}^2
		+\delta^2\norm{h}_{\Hk}^2
		+\delta\left(
		\norm{h}_{\Hk}^2-\inner{h}{u_\delta}_{\Hk}^2
		\right).
	\end{equation}
	The first two terms are the prediction and quadratic-penalty contributions.
	The third term is the additional curvature in directions orthogonal to
	\(u_\delta\).
	
	\begin{lemma}[Quadratic growth around the population minimizer]
		\label{lem:full-quadratic-growth}
		Suppose Assumptions~\ref{ass:model}, \ref{ass:noise},
		and~\ref{ass:noise-tail} hold and \(f^*=T_k^\alpha g\) with
		\(\norm{g}_{\Hk}\le R\).  Let \(B\) be the constant in
		Lemma~\ref{lem:full-empirical-norm}.  There are
		constants \(c,C>0\), depending only on \(R,\sigma,\kappa\), such that, if
		\(0<\delta\le c\) and
		\(\norm{f_\delta^*+h}_{\Hk}\le B\), then
		\[
		\begin{aligned}
			&\E\left[
			\left(
			|Y-f_\delta^*(X)-h(X)|
			+
			\delta\norm{f_\delta^*+h}_{\Hk}
			\right)^2
			\right]
			\\
			&\quad-
			\E\left[
			\left(
			|Y-f_\delta^*(X)|
			+
			\delta\norm{f_\delta^*}_{\Hk}
			\right)^2
			\right]
			\ge
			C^{-1}\norm{A_\delta^{1/2}h}_{\Hk}^2.
		\end{aligned}
		\]
	\end{lemma}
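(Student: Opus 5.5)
The plan is to expand the population objective around \(f_\delta^*\) and subtract the first-order condition of Lemma~\ref{lem:population-foc}, so that only nonnegative convexity gaps and one cross term remain. Then I lower bound the convexity gaps by the three pieces of \eqref{eq:Adelta-quadratic-form} and show the cross term is a small multiple of them.

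\textbf{Step 1: expansion.} Write \(f=f_\delta^*+h\), \(r=\norm{f}_{\Hk}\), \(r_0=\norm{f_\delta^*}_{\Hk}\), \(a(f)=\E|Y-f(X)|\) and \(a_0=a(f_\delta^*)\). By independence and \(\E\varepsilon=0\), the objective equals \(\norm{f-f^*}_{L^2(\mu)}^2+\E\varepsilon^2+2\delta r\,a(f)+\delta^2r^2\). Let \(v_\delta,s_\delta\) be as in Lemma~\ref{lem:population-foc}. Pair \eqref{populationfirstorder} with \(h\) in \(\Hk\); this gives the vanishing linear functional
\[
\inner{f_\delta^*-f^*}{h}_{L^2(\mu)}+\delta^2\inner{f_\delta^*}{h}_{\Hk}+\delta r_0\E[s_\delta h(X)]+\delta a_0\inner{v_\delta}{h}_{\Hk}.
\]
Subtract twice this quantity from the objective difference. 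The difference then equals
\[
\norm{h}_{L^2(\mu)}^2+\delta^2\norm{h}_{\Hk}^2
+2\delta a_0\bigl(r-r_0-\inner{v_\delta}{h}_{\Hk}\bigr)
+2\delta\bigl(r\,a(f)-r_0a_0-r_0\E[s_\delta h(X)]\bigr).
\]

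\textbf{Step 2: the norm gap.} I claim \(r-r_0-\inner{v_\delta}{h}_{\Hk}\ge (\norm{h}_{\Hk}^2-\inner{h}{u_\delta}_{\Hk}^2)/(2B)\).

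When \(f_\delta^*\ne0\), we have \(v_\delta=u_\delta\), so the left side is \(\norm{f}_{\Hk}-\inner{u_\delta}{f}_{\Hk}\). Use the identity \(\norm{f}-\inner{u}{f}=(\norm{f}^2-\inner{u}{f}^2)/(\norm{f}+\inner{u}{f})\). Note that \(\norm{f}^2-\inner{u}{f}^2=\norm{h}^2-\inner{u}{h}^2\), because \(f_\delta^*\parallel u_\delta\). The denominator is at most \(2B\).

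When \(f_\delta^*=0\), we have \(h=f\) and \(\norm{v_\delta}\le1\). Hence \(\norm{h}-\inner{v_\delta}{h}\ge\norm{h}-|\inner{u_\delta}{h}|\), and the same identity with denominator at most \(2\norm{h}\le2B\) gives the claim.

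Since Lemma~\ref{lem:quadratic-absolute-risk} gives \(a_0\ge\E|\varepsilon|\ge1/(4\kappa)\), this term contributes at least \(\delta(\norm{h}^2-\inner{h}{u_\delta}^2)/(4\kappa B)\).

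\textbf{Step 3: the absolute-loss cross term (main obstacle).} Naively, \((r-r_0)(a(f)-a_0)\) is only bounded by \(\norm{h}_{\Hk}\norm{h}_{L^2(\mu)}\) with an \(O(1)\) constant, which is not absorbable. The fix is to regroup by the sign of \(r-r_0\), using the pointwise subgradient inequality \(a(f)-a_0-\E[s_\delta h(X)]\ge0\).

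If \(r\ge r_0\), write the bracket as \(r_0(a(f)-a_0-\E[s_\delta h])+(r-r_0)(a(f)-a_0)\). Then use \(a(f)-a_0\ge\E[s_\delta h]\).

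If \(r<r_0\), write it as \(r(a(f)-a_0-\E[s_\delta h])+(r-r_0)\E[s_\delta h]\).

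In both cases the bracket is at least \(-|r-r_0|\,|\E[s_\delta h(X)]|\). Next, \eqref{eq:population-sign-selection} and Cauchy--Schwarz give \(|\E[s_\delta h]|\le4\kappa\norm{f_\delta^*-f^*}_{L^2(\mu)}\norm{h}_{L^2(\mu)}\). By \eqref{eq:uniform-basic-error}, with \(\norm{f^*}_{\Hk}\le R\), we have \(\norm{f_\delta^*-f^*}_{L^2(\mu)}\lesssim\sqrt\delta\). Together with \(|r-r_0|\le\norm{h}_{\Hk}\), the cross term is at least
\[
-C'\sqrt\delta\cdot2\delta\norm{h}_{\Hk}\norm{h}_{L^2(\mu)}\ge -C'\sqrt\delta\bigl(\norm{h}_{L^2(\mu)}^2+\delta^2\norm{h}_{\Hk}^2\bigr).
\]
For \(\delta\le c\) small enough, this is at most half of the first two terms.

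\textbf{Step 4: combine.} Collecting the bounds, the difference is at least
\[
\tfrac12\bigl(\norm{h}_{L^2(\mu)}^2+\delta^2\norm{h}_{\Hk}^2\bigr)+\tfrac{\delta}{4\kappa B}\bigl(\norm{h}_{\Hk}^2-\inner{h}{u_\delta}_{\Hk}^2\bigr)\ge C^{-1}\norm{A_\delta^{1/2}h}_{\Hk}^2
\]
by \eqref{eq:Adelta-quadratic-form}. Here \(C\) depends only on \(R,\sigma,\kappa\), because \(B\) and \(\E|\varepsilon|\) are controlled by these parameters.
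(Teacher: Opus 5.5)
Your proof is correct and follows the paper's argument essentially step for step: you cancel the linear terms with the first-order condition, bound the norm gap below by $(\norm{h}_{\Hk}^2-\inner{h}{u_\delta}_{\Hk}^2)/(2B)$ using $a_0\ge 1/(4\kappa)$, and use the subgradient inequality to reduce the absolute-loss cross term to $(r-r_0)\E[s_\delta h(X)]$, which is absorbed because $\norm{f_\delta^*-f^*}_{L^2(\mu)}\lesssim\sqrt\delta$ (the case split is unnecessary, since your second decomposition holds for either sign of $r-r_0$ and is exactly the paper's). One bookkeeping slip: because $2\delta a_0(r-r_0)$ has already been moved into the norm-gap term, the last bracket in Step 1 should be $r\,a(f)-r\,a_0-r_0\E[s_\delta h(X)]$ rather than $r\,a(f)-r_0a_0-r_0\E[s_\delta h(X)]$, and this corrected bracket is what your Step 3 decompositions actually expand.
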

	
	\begin{proof}[Proof of Lemma \ref{lem:full-quadratic-growth}]
		The subgradient inequality for the absolute loss gives, for every \(h\in\Hk\),
		\[
		\E|Y-f_\delta^*(X)-h(X)|
		-
		\E|Y-f_\delta^*(X)|
		-
		\E[s_\delta(X,\varepsilon)h(X)]
		\ge0.
		\]
		
		Multiplying this inequality by \(\norm{f_\delta^*+h}_{\Hk}\), and using
		\[
		\left|
		\norm{f_\delta^*+h}_{\Hk}-\norm{f_\delta^*}_{\Hk}
		\right|
		\le
		\norm{h}_{\Hk},
		\]
		yields
		\[
		\begin{aligned}
			&\norm{f_\delta^*+h}_{\Hk}\E|Y-f_\delta^*(X)-h(X)|
			-\norm{f_\delta^*}_{\Hk}\E|Y-f_\delta^*(X)|\\
			&\qquad-
			\norm{f_\delta^*}_{\Hk}\E[s_\delta(X,\varepsilon)h(X)]
			-
			\E|Y-f_\delta^*(X)|\inner{v_\delta}{h}_{\Hk}\\
			&\quad\ge
			\E|Y-f_\delta^*(X)|
			\left(
			\norm{f_\delta^*+h}_{\Hk}
			-\norm{f_\delta^*}_{\Hk}
			-\inner{v_\delta}{h}_{\Hk}
			\right)
			-
			4\kappa\norm{f_\delta^*-f^*}_{L^2(\mu)}
			\norm{h}_{L^2(\mu)}\norm{h}_{\Hk},
		\end{aligned}
		\]
		where \eqref{eq:population-sign-selection} was used in the last term.
		
		We next bound the remainder in the RKHS norm inequality.  If
		\(f_\delta^*\ne0\), then \(v_\delta=u_\delta\) and
		whenever the denominator below is positive,
		\[
		\begin{aligned}
			&\norm{f_\delta^*+h}_{\Hk}
			-\norm{f_\delta^*}_{\Hk}
			-\inner{v_\delta}{h}_{\Hk}\\
			&\quad
			=
			\frac{
				\norm{h}_{\Hk}^2-\inner{h}{u_\delta}_{\Hk}^2
			}{
				\norm{f_\delta^*+h}_{\Hk}
				+\norm{f_\delta^*}_{\Hk}
				+\inner{h}{u_\delta}_{\Hk}
			}
			\ge
			\frac{
				\norm{h}_{\Hk}^2-\inner{h}{u_\delta}_{\Hk}^2
			}{2B}.
		\end{aligned}
		\]
		where the last inequality uses
		\(\norm{f_\delta^*+h}_{\Hk}\le B\).  If the denominator is zero, the
		numerator is zero, and the same inequality follows directly from the
		subgradient inequality for the norm.
		If \(f_\delta^*=0\), then
		\(\norm{v_\delta}_{\Hk}\le1\), and
		for \(h\ne0\),
		\[
		\norm{h}_{\Hk}-\inner{v_\delta}{h}_{\Hk}
		\ge
		\norm{h}_{\Hk}-\left|\inner{h}{u_\delta}_{\Hk}\right|
		=
		\frac{
			\norm{h}_{\Hk}^2-\inner{h}{u_\delta}_{\Hk}^2
		}{\norm{h}_{\Hk}+\left|\inner{h}{u_\delta}_{\Hk}\right|}
		\ge
		\frac{
			\norm{h}_{\Hk}^2-\inner{h}{u_\delta}_{\Hk}^2
		}{2B}.
		\]
		For \(h=0\), this inequality is immediate.
		Moreover,
		\[
		\E|Y-f_\delta^*(X)|
		\ge
		\E|\varepsilon|
		\ge
		\frac1{4\kappa}.
		\]
		
		Expand the difference of the two population objectives.  The linear terms
		vanish by \eqref{populationfirstorder}; the bounds above give
		\[
		\begin{aligned}
			&\E\left[
			\left(
			|Y-f_\delta^*(X)-h(X)|
			+\delta\norm{f_\delta^*+h}_{\Hk}
			\right)^2
			\right]-
			\E\left[
			\left(
			|Y-f_\delta^*(X)|
			+\delta\norm{f_\delta^*}_{\Hk}
			\right)^2
			\right]\\
			&\ge
			\norm{h}_{L^2(\mu)}^2
			+\delta^2\norm{h}_{\Hk}^2
			+\frac{\delta}{4\kappa B}
			\left(
			\norm{h}_{\Hk}^2-\inner{h}{u_\delta}_{\Hk}^2
			\right)
			-
			8\delta\kappa
			\norm{f_\delta^*-f^*}_{L^2(\mu)}
			\norm{h}_{L^2(\mu)}\norm{h}_{\Hk}.
		\end{aligned}
		\]
		
		Young's inequality bounds the last term by
		\[
		\frac12\norm{h}_{L^2(\mu)}^2
		+
		32\delta^2\kappa^2
		\norm{f_\delta^*-f^*}_{L^2(\mu)}^2
		\norm{h}_{\Hk}^2.
		\]
		
		Finally, \eqref{eq:population-objective-error} and
		\(\norm{f^*}_{\Hk}\le R\) imply
		\[
		\norm{f_\delta^*-f^*}_{L^2(\mu)}^2
		\le
		\delta^2R^2+2\delta R\sigma.
		\]
		For \(\delta\) bounded above by a constant depending only on
		\(R,\sigma,\kappa\), the second term from Young's inequality is absorbed by
		\(\delta^2\norm{h}_{\Hk}^2\).  The remaining three terms are a fixed
		positive multiple of \(\norm{A_\delta^{1/2}h}_{\Hk}^2\).
	\end{proof}
	
	The complexity of balls in the quadratic form is controlled by the following
	trace bound.
	
	\begin{lemma}[Trace bound for \(A_\delta\)]
		\label{lem:full-trace-bound}
		\[
		\operatorname{Tr}(T_kA_\delta^{-1})
		\le
		\mathcal N(\delta^2+\delta)+1
		\le
		\mathcal N(\delta)+1.
		\]
	\end{lemma}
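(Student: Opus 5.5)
We write $A_\delta=D-\delta P$, where $D:=T_k+(\delta^2+\delta)I$ and $P:=\inner{\cdot}{u_\delta}_{\Hk}u_\delta$ is the rank-one orthogonal projection onto the unit vector $u_\delta$. The plan is to treat $A_\delta$ as a rank-one perturbation of the ridge-type operator $D$. Comparison in quadratic forms alone does not suffice, because $A_\delta\preceq D$ goes in the wrong direction. The first inequality will follow from two estimates. On the orthogonal complement of $u_\delta$, $A_\delta$ agrees with $D$ in quadratic form. Along $u_\delta$, the quadratic form $\norm{h}_{L^2(\mu)}^2+\delta^2\norm{h}_{\Hk}^2$ contributes at most one unit of trace.

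\textbf{Step 1: dominate $A_\delta$ from below.} Writing $P^\perp=I-P$, the quadratic form \eqref{eq:Adelta-quadratic-form} gives
\[
A_\delta = T_k+\delta^2 I+\delta P^\perp \succeq T_k+\delta^2 I+\delta P^\perp .
\]
Hence $A_\delta\succeq E:=T_k+(\delta^2+\delta)P^\perp+\delta^2P$. This is positive definite, so $A_\delta^{-1}\preceq E^{-1}$ and $\operatorname{Tr}(T_kA_\delta^{-1})=\operatorname{Tr}(T_k^{1/2}A_\delta^{-1}T_k^{1/2})\le\operatorname{Tr}(T_k^{1/2}E^{-1}T_k^{1/2})$. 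It therefore suffices to compare $E^{-1}$ with $D^{-1}$. Since $E=D-\delta P$ differs from $D$ by a rank-one term, Sherman--Morrison gives
\[
E^{-1}=D^{-1}+\frac{\delta D^{-1}u_\delta\otimes D^{-1}u_\delta}{1-\delta\inner{u_\delta}{D^{-1}u_\delta}_{\Hk}}.
\]
The denominator is positive because $\inner{u_\delta}{D^{-1}u_\delta}_{\Hk}\le(\delta^2+\delta)^{-1}<\delta^{-1}$.

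\textbf{Step 2: bound the rank-one correction by one.} The correction contributes
\[
\frac{\delta\norm{T_k^{1/2}D^{-1}u}^2}{1-\delta\inner{u}{D^{-1}u}}
\]
to the trace, where $u=u_\delta$. Set $a_j=\inner{u}{e_j}^2$ in an eigenbasis $(e_j)$ of $T_k$ on $\Hk$, and $\lambda=\delta^2+\delta$. The numerator is $\delta\sum_j a_j\sigma_j/(\sigma_j+\lambda)^2$. The denominator is $\sum_j a_j\bigl(1-\delta/(\sigma_j+\lambda)\bigr)=\sum_j a_j(\sigma_j+\delta^2)/(\sigma_j+\lambda)$, using $\sum_j a_j=1$, and that sum should also include the kernel directions of $T_k$. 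Termwise,
\[
\frac{\delta\sigma_j}{(\sigma_j+\lambda)^2}\le\frac{\sigma_j}{\sigma_j+\lambda}\le\frac{\sigma_j+\delta^2}{\sigma_j+\lambda},
\]
so the ratio is at most $1$. Therefore $\operatorname{Tr}(T_kA_\delta^{-1})\le\operatorname{Tr}(T_kD^{-1})+1=\mathcal N(\delta^2+\delta)+1$. The main care is needed here: the computation must be done in $\Hk$, where $T_k$ may have a kernel, and the case of a non-separable spectrum needs the same termwise argument via the spectral measure of $u$.

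\textbf{Step 3: monotonicity.} The map $\lambda\mapsto\mathcal N(\lambda)=\sum_j\sigma_j/(\sigma_j+\lambda)$ is decreasing, and $\delta^2+\delta\ge\delta$, so $\mathcal N(\delta^2+\delta)\le\mathcal N(\delta)$. This gives the second inequality. Throughout we would use that the $\Hk$ and $L^2(\mu)$ realizations of $T_k$ share their positive eigenvalues, so $\operatorname{Tr}(T_kD^{-1})$ equals $\mathcal N(\delta^2+\delta)$.
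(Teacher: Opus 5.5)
Your proof is correct, but it takes a more computational route than the paper.

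The paper never writes down $A_\delta^{-1}$ explicitly. It argues as follows.
\begin{itemize}
\item Since $A_\delta\preceq D$ and $A_\delta$ is a rank-one perturbation of $D$, the resolvent identity shows that $A_\delta^{-1}-D^{-1}$ is positive with rank at most one.
\item Since $A_\delta\succeq T_k$, one gets $T_k^{1/2}A_\delta^{-1}T_k^{1/2}\preceq I$.
\item Hence $T_k^{1/2}(A_\delta^{-1}-D^{-1})T_k^{1/2}$ is a positive rank-one contraction, and its trace is at most one.
\end{itemize}

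You instead make the correction explicit with Sherman--Morrison. You then show that its trace, $\delta\norm{T_k^{1/2}D^{-1}u_\delta}_{\Hk}^2/(1-\delta\inner{u_\delta}{D^{-1}u_\delta}_{\Hk})$, is at most one via the termwise inequality $\delta\sigma_j/(\sigma_j+\lambda)^2\le(\sigma_j+\delta^2)/(\sigma_j+\lambda)$. This inequality is just the coordinate form of the contraction property that the paper uses abstractly.

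The paper's version is shorter and basis-free. It does not need to track $\ker T_k$ in $\Hk$ or the normalization $\sum_j a_j=1$. Yours gives an exact formula for the correction. Taking the maximum of the termwise ratios, that formula yields a constant slightly below one, though the lemma does not need this.

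Two cosmetic points:
\begin{itemize}
\item Your operator $E$ in Step~1 is identical to $A_\delta$, because $T_k+(\delta^2+\delta)P^\perp+\delta^2P=T_k+\delta^2I+\delta P^\perp$. The ``domination'' step is therefore vacuous, and you can apply Sherman--Morrison directly to $A_\delta=D-\delta P$.
\item The concern about a non-separable spectrum does not arise. $T_k$ is trace class on the separable space $\Hk$, so it has a countable orthonormal eigenbasis of all of $\Hk$, including its kernel, and your termwise argument applies as written.
\end{itemize}
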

	
	\begin{proof}[Proof of Lemma \ref{lem:full-trace-bound}]
		The operator \(A_\delta\) is obtained from
		\(T_k+(\delta^2+\delta)I\) by subtracting the positive rank-one operator
		\(h\mapsto\delta\inner{h}{u_\delta}_{\Hk}u_\delta\).  The resolvent
		identity therefore implies that
		\[
		A_\delta^{-1}
		-
		\{T_k+(\delta^2+\delta)I\}^{-1}
		\]
		is positive and has rank at most one.  Also \(A_\delta\succeq T_k\), so
		\[
		0
		\preceq
		T_k^{1/2}
		\left[
		A_\delta^{-1}
		-
		\{T_k+(\delta^2+\delta)I\}^{-1}
		\right]
		T_k^{1/2}
		\preceq I.
		\]
		
		The trace of this positive rank-one contraction is at most one.  Hence
		\[
		\begin{aligned}
			\operatorname{Tr}(T_kA_\delta^{-1})
			\le
			\operatorname{Tr}
			\left[
			T_k\{T_k+(\delta^2+\delta)I\}^{-1}
			\right]+1=
			\mathcal N(\delta^2+\delta)+1
			\le
			\mathcal N(\delta)+1.
		\end{aligned}
		\]
	\end{proof}
	
	The final auxiliary lemma compares empirical and population objective
	differences uniformly over each ball defined by
	\eqref{eq:Adelta-quadratic-form}.
	
	\begin{lemma}[Uniform deviation of objective differences]
		\label{lem:full-objective-increments}
		Suppose Assumptions~\ref{ass:model}, \ref{ass:noise},
		\ref{ass:noise-tail}, and \ref{ass:source} hold.  Independence of
		\(\varepsilon\) and \(X\) is used only to control the truncation error.  Let
		\(0<\delta\le1\), \(0<\eta<1\), and let \(B\) be as
		in Lemma~\ref{lem:full-empirical-norm}.  There is a constant
		\(C>0\), depending only on \(R,\sigma,\kappa\), such that, for every
		\(t>0\), with probability at least \(1-\eta\),
		\[
		\begin{aligned}
			&
			\sup_{\substack{
					\norm{A_\delta^{1/2}h}_{\Hk}\le t\\
					\norm{f_\delta^*+h}_{\Hk}\le B
			}}
			\Bigg|
			\frac1n\sum_{i=1}^n
			\left(
			|Y_i-f_\delta^*(X_i)-h(X_i)|
			+\delta\norm{f_\delta^*+h}_{\Hk}
			\right)^2
			\\
			&\qquad
			-\frac1n\sum_{i=1}^n
			\left(
			|Y_i-f_\delta^*(X_i)|
			+\delta\norm{f_\delta^*}_{\Hk}
			\right)^2
			\\
			&\qquad
			-\E
			\left(
			|Y-f_\delta^*(X)-h(X)|
			+\delta\norm{f_\delta^*+h}_{\Hk}
			\right)^2
			\\
			&\qquad
			+\E
			\left(
			|Y-f_\delta^*(X)|
			+\delta\norm{f_\delta^*}_{\Hk}
			\right)^2
			\Bigg|
			\\
			&\quad\le
			C\log\left(\frac{16n}{\eta}\right)
			\left[
			t
			\sqrt{
				\frac{
					(\mathcal N(\delta)+1)\log(16/\eta)
				}{n}
			}
			+
			\frac{B\log(16/\eta)}{n}
			\right].
		\end{aligned}
		\]
	\end{lemma}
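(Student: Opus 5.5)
We split the objective difference into a quadratic-loss part, a mixed part and a pure-norm part, and control the empirical process of each over the set \(\mathcal F_t=\{h:\norm{A_\delta^{1/2}h}_{\Hk}\le t,\ \norm{f_\delta^*+h}_{\Hk}\le B\}\). Write \(r_0=Y-f_\delta^*(X)\) and \(N_h=\norm{f_\delta^*+h}_{\Hk}\). Then
\[
(|r_0-h(X)|+\delta N_h)^2-(|r_0|+\delta N_0)^2
=
h(X)^2-2r_0h(X)
+2\delta\bigl(N_h|r_0-h(X)|-N_0|r_0|\bigr)
+\delta^2(N_h^2-N_0^2).
\]
The last term is deterministic and cancels between the empirical and population sides, so only the first three need control.

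For the linear term \(2r_0h(X)\), we split \(r_0=\varepsilon+(f^*-f_\delta^*)(X)\). The noise piece equals
\[
\Bigl\langle (T_k+\lambda I)^{-1/2}\tfrac1n\textstyle\sum_i\varepsilon_ik_{X_i},\,(T_k+\lambda I)^{1/2}h\Bigr\rangle_{\Hk}.
\]
With \(\lambda=\delta^2+\delta\) we have \(\norm{(T_k+\lambda I)^{1/2}h}_{\Hk}^2\le\norm{A_\delta^{1/2}h}^2_{\Hk}+\delta\norm{h}_{\Hk}^2\). This is not directly bounded by \(t^2\), because of the direction \(u_\delta\).

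To handle this, we treat the \(u_\delta\) component separately. Decompose \(h=au_\delta+h_\perp\). On \(\mathcal F_t\) we have \(|a|\le 2B\) and \(\delta^2a^2\le t^2\). The \(h_\perp\) part is handled by Lemma~\ref{lem:regularized-noise-concentration}, which gives order \(t\sqrt{\mathcal N(\delta)\log(16/\eta)/n}\) plus a lower-order term. The \(u_\delta\) part is a single direction, handled by scalar Bernstein, so the \(+1\) in \(\mathcal N(\delta)+1\) accounts for it. This mirrors Lemma~\ref{lem:full-trace-bound}: one can equivalently bound the noise term in the \(A_\delta^{-1/2}\)-norm. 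Its second moment is \(\operatorname{Tr}(T_kA_\delta^{-1})\le\mathcal N(\delta)+1\), and its sup-norm is \(\le\lambda^{-1/2}\) up to a rank-one correction. We then apply Theorem~\ref{thm:hilbert-vector-bernstein} directly to \(\xi_i=\varepsilon_iA_\delta^{-1/2}k_{X_i}\).

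The bias piece \((f^*-f_\delta^*)(X)h(X)\), and likewise the quadratic term \(h(X)^2\), are controlled through the normalized operator deviation \((T_n-T_k)\). We use Theorem~\ref{thm:regularized-covariance-embedding} together with the bound \(\norm{f^*-f_\delta^*}_{\Hk}\le 2R\). Assuming \(n\delta\gtrsim(\mathcal N(\delta)+1)\log\) (as in the downstream use), these contribute \(t\) times the same rate.

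The mixed term is the main obstacle because it is non-smooth. Write
\[
N_h|r_0-h|-N_0|r_0|=(N_h-N_0)|r_0-h|+N_0\bigl(|r_0-h|-|r_0|\bigr).
\]
In the first piece, \(N_h-N_0\) is deterministic with \(|N_h-N_0|\le\norm{h}_{\Hk}\). The centred average of \(|r_0-h(X)|\) is Lipschitz in \(h\). In the second piece, the contraction principle (\(|\cdot|\) is 1-Lipschitz) reduces the centred average to a Rademacher process \(\sup_h\frac1n\sum_i\epsilon_ih(X_i)\). This is bounded by \(\norm{A_\delta^{1/2}h}_{\Hk}\cdot\norm{\frac1n\sum_i\epsilon_iA_\delta^{-1/2}k_{X_i}}_{\Hk}\lesssim t\sqrt{(\mathcal N(\delta)+1)/n}\), using Lemma~\ref{lem:full-trace-bound}. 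Multiplied by \(\delta N_0\le\delta R\), this is harmless.

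Contraction needs bounded summands, so before applying it we truncate \(|\varepsilon_i|\le C\sigma\log(16n/\eta)\) using the exponential tail \eqref{eq:noise-exponential-tail}. A union bound shows no truncation occurs with probability \(\ge1-\eta/8\). The truncation bias in expectations is \(O(1/n)\); controlling it is where independence of \(X\) and \(\varepsilon\) is used. The truncation is the source of the extra \(\log(16n/\eta)\) factor, and of the \(B\log/n\) remainder via Talagrand/Bousquet concentration around the Rademacher expectation. Finally we union-bound all the events, each with probability \(\eta/8\), and collect the constants, which depend only on \(R,\sigma,\kappa\).
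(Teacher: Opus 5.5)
Your proposal has a genuine gap: the terms linear or quadratic in $h(X_i)$ (the noise piece, the bias piece, and $h(X)^2$) are not controlled at the claimed rate. The underlying reason is that $A_\delta$ is $T_k+(\delta^2+\delta)I$ minus the rank-one operator $h\mapsto\delta\inner{h}{u_\delta}_{\Hk}u_\delta$. So $\norm{A_\delta^{1/2}h}_{\Hk}\le t$ controls the $u_\delta$-component of $h$ only at scale $\delta^2$. Moreover, $u_\delta$ is essentially unconstrained; when $v_\delta=0$ it is an arbitrary unit vector.

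\emph{Hilbert-valued Bernstein.} To apply Theorem~\ref{thm:hilbert-vector-bernstein} to $\xi_i=\varepsilon_iA_\delta^{-1/2}k_{X_i}$ you need $\sup_x\norm{A_\delta^{-1/2}k_x}_{\Hk}$, and the rank-one correction is not lower order than $\lambda^{-1/2}$. The only general bound is $\delta^{-1}$, from $A_\delta\succeq T_k+\delta^2I$, and it can be attained: in the diagonal model of Theorem~\ref{thm:full-at-matching-lower}, $u_\delta=\mathbf 1_{\{J\}}$ with $p_J\ll\delta^2$ gives $\norm{A_\delta^{-1/2}k_J}_{\Hk}^2=(p_J+\delta^2)^{-1}$. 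The resulting remainder $\sigma t\log(2/\eta)/(n\delta)$ is not dominated by the lemma's right-hand side when $t\gg B\delta$, unless $\delta\sqrt{n(\mathcal N(\delta)+1)}$ is bounded below up to logarithms. That fails, for example, at the balancing radius after Corollary~\ref{cor:generalization-error} when $\beta>2\alpha+2$, even under the sample-size condition of Theorem~\ref{thm:estimation-error}.

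\emph{Operator concentration.} The route through Theorem~\ref{thm:regularized-covariance-embedding} hits the same wall. With $\lambda\asymp\delta$, $\norm{(T_k+\lambda I)^{1/2}h}_{\Hk}$ is not $O(t)$ on $\mathcal F_t$, since you only know $\delta|\inner{h}{u_\delta}_{\Hk}|\le t$. With $\lambda=\delta^2$, the normalized deviation bound is of order $\sqrt{\log(8/(\eta\delta^2))/(n\delta^2)}$, which need not be small even when $n\delta\gtrsim(\mathcal N(\delta)+1)\log$.

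\emph{The split $h=au_\delta+h_\perp$.} This does not help either, because it is orthogonal in $\Hk$ but not in $L^2(\mu)$. Hence $\norm{h_\perp}_{L^2(\mu)}$ can be of order $t/\sqrt\delta$. Scalar Bernstein on the $u_\delta$ part gives $|a|\sigma\sqrt{\log(2/\eta)/n}$ with only $|a|\le\min\{2B,t/\delta\}$, which matches neither $t\sqrt{(\mathcal N(\delta)+1)\log/n}$ nor $B\log/n$.

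\emph{Extra hypothesis.} The lemma is asserted for every $t>0$ and $0<\delta\le1$ with no sample-size condition. Importing $n\delta\gtrsim(\mathcal N(\delta)+1)\log$ therefore weakens the statement.

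The fix is the tool you already use for the mixed term, applied to the whole increment at once, as the paper does.
\begin{enumerate}
\item Truncate the noise at $M=2\sigma\log(16n/\eta)$. Independence is what bounds the truncation bias by $Ct\log(16n/\eta)/n$.
\item The increment is then $C\log(16n/\eta)$-Lipschitz in the two coordinates $h(X_i)$ and $\delta(\norm{f_\delta^*+h}_{\Hk}-\norm{f_\delta^*}_{\Hk})$, and vanishes at zero. This uses $|h(x)|\le\norm{h}_{\Hk}\le2B$ on the index set, so $h(X)^2$, the linear term and the mixed term are all covered without any operator concentration.
\item Symmetrization and Maurer's vector contraction reduce the expected supremum to $t\,\E\norm{A_\delta^{-1/2}n^{-1}\sum_i\xi_ik_{X_i}}_{\Hk}$ plus a term of order $t/\sqrt n$.
\item Jensen and Lemma~\ref{lem:full-trace-bound} bound this by $t\sqrt{(\mathcal N(\delta)+1)/n}$. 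Only the trace $\operatorname{Tr}(T_kA_\delta^{-1})$ enters, never $\sup_x\norm{A_\delta^{-1/2}k_x}_{\Hk}$.
\item Talagrand--Bousquet gives concentration with weak variance $\lesssim t^2\log^2(16n/\eta)$, from $\norm{h}_{L^2(\mu)}\le t$ and $\delta\norm{h}_{\Hk}\le t$. The envelope is $\lesssim B\log(16n/\eta)$, from $\norm{h}_{\Hk}\le2B$, and this is exactly where $B\log(16/\eta)/n$ comes from. No sample-size condition is needed.
\end{enumerate}
Incidentally, contraction itself does not require truncation; only the bounded envelope for Bousquet does.
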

	
	\begin{proof}[Proof of Lemma \ref{lem:full-objective-increments}]
		
		For every \(h\) in the index set,
		Equation~\eqref{eq:Adelta-quadratic-form}, Lemma~\ref{normcontrol}, the
		triangle inequality, and \(B\ge R\) give
		\begin{equation}
			\label{eq:objective-increment-norm-bounds}
			\norm{h}_{L^2(\mu)}\le t,
			\qquad
			\delta\norm{h}_{\Hk}\le t,
			\qquad
			\norm{h}_{\Hk}\le\min\left\{\frac t\delta,2B\right\}.
		\end{equation}
		Because \(\Hk\) is separable and the objective increments are continuous
		in the RKHS norm, the supremum below equals the supremum over a countable
		dense subset of the index set and is measurable.
		
		The difference-of-squares identity also gives
		\begin{equation}
			\label{eq:full-increment-lipschitz}
			\begin{aligned}
				&\left|
				\left(
				|Y-f_\delta^*(X)-h(X)|
				+\delta\norm{f_\delta^*+h}_{\Hk}
				\right)^2
				-
				\left(
				|Y-f_\delta^*(X)|
				+\delta\norm{f_\delta^*}_{\Hk}
				\right)^2
				\right|\\
				&\quad
				\le
				C(1+|\varepsilon|+R)
				\left(|h(X)|+\delta\norm{h}_{\Hk}\right).
			\end{aligned}
		\end{equation}

		Set
		\[
		\begin{aligned}
			M&=2\sigma\log(16n/\eta),\\
			\overline\varepsilon
			&=\sgn(\varepsilon)\min\{|\varepsilon|,M\},\\
			\overline Y&=f^*(X)+\overline\varepsilon.
		\end{aligned}
		\]
		
		Equation~\eqref{eq:noise-exponential-tail} gives
		\[
		\mathbb P\left(\max_{1\le i\le n}|\varepsilon_i|>M\right)
		\le\frac{\eta}{4}.
		\]
		
		On the complementary event, the empirical objective differences based on \(Y_i\) and
		\(\overline Y_i\) agree.  Independence of \(X\) and \(\varepsilon\),
		\eqref{eq:objective-increment-norm-bounds}, and
		\eqref{eq:full-increment-lipschitz} further give
		\begin{equation}
			\label{eq:full-truncation-bias}
			\begin{aligned}
				&
				\sup_h
				\Bigg|
				\E
				\left(
				|Y-f_\delta^*(X)-h(X)|
				+\delta\norm{f_\delta^*+h}_{\Hk}
				\right)^2
				\\
				&\qquad
				-\E
				\left(
				|Y-f_\delta^*(X)|
				+\delta\norm{f_\delta^*}_{\Hk}
				\right)^2
				\\
				&\qquad
				-\E
				\left(
				|\overline Y-f_\delta^*(X)-h(X)|
				+\delta\norm{f_\delta^*+h}_{\Hk}
				\right)^2
				\\
				&\qquad
				+\E
				\left(
				|\overline Y-f_\delta^*(X)|
				+\delta\norm{f_\delta^*}_{\Hk}
				\right)^2
				\Bigg|
				\\
				&\quad
				\le
				\frac{Ct}{n}\log\left(\frac{16n}{\eta}\right).
			\end{aligned}
		\end{equation}
		Here the supremum is over the index set in the lemma.  From
		\eqref{eq:noise-exponential-tail} and tail integration,
		\[
		\begin{aligned}
		\E\!\left[
		(1+|\varepsilon|+R)\mathbf 1\{|\varepsilon|>M\}
		\right]
		&\le C(1+M+R)\exp\left(-\frac{M}{2\sigma}\right)\\
		&\le \frac{C}{n}\log\left(\frac{16n}{\eta}\right),
		\end{aligned}
		\]
		which proves the stated truncation bound.

		For the truncated responses, the objective difference is
		\(C\log(16n/\eta)\)-Lipschitz in the two coordinates
		\[
		h(X_i)
		\quad\text{and}\quad
		\delta\left(
		\norm{f_\delta^*+h}_{\Hk}-\norm{f_\delta^*}_{\Hk}
		\right),
		\]
		and vanishes when both coordinates are zero.  Symmetrization and the vector
		contraction inequality \citep{maurer2016vector}, applied with independent
		Rademacher sequences for the two coordinates, therefore bound the expected
		supremum of the centered process by
		\[
		\begin{aligned}
			C\log\left(\frac{16n}{\eta}\right)
			\Bigg[
			t\,\E\left\|
			A_\delta^{-1/2}\frac1n\sum_{i=1}^n\xi_i k_{X_i}
			\right\|_{\Hk}+
			\delta\sup_{\norm{A_\delta^{1/2}h}_{\Hk}\le t}
			\norm{h}_{\Hk}
			\E\left|\frac1n\sum_{i=1}^n\xi_i\right|
			\Bigg],
		\end{aligned}
		\]
		where each occurrence of \(\xi_i\) denotes the corresponding independent
		Rademacher sequence.  Independence and
		Lemma~\ref{lem:full-trace-bound} give
		\[
		\begin{aligned}
			\E\left\|
			A_\delta^{-1/2}\frac1n\sum_{i=1}^n\xi_i k_{X_i}
			\right\|_{\Hk}^2
			=
			\frac1n\operatorname{Tr}(T_kA_\delta^{-1})\le
			\frac{\mathcal N(\delta)+1}{n}.
		\end{aligned}
		\]
		
		Jensen's inequality then gives
		\[
		\E\left\|
		A_\delta^{-1/2}\frac1n\sum_{i=1}^n\xi_i k_{X_i}
		\right\|_{\Hk}
		\le
		\sqrt{\frac{\mathcal N(\delta)+1}{n}}.
		\]
		
		The second term is at most \(t/\sqrt n\) by
		\eqref{eq:objective-increment-norm-bounds}.  Hence the expected supremum is at most
		\[
		Ct\log\left(\frac{16n}{\eta}\right)
		\sqrt{\frac{\mathcal N(\delta)+1}{n}}.
		\]

		Equation~\eqref{eq:full-increment-lipschitz} and
		\eqref{eq:objective-increment-norm-bounds} bound the variance of each centered truncated
		increment by
		\[
		Ct^2\log^2\left(\frac{16n}{\eta}\right)
		\]
		and its envelope by
		\[
		CB\log\left(\frac{16n}{\eta}\right).
		\]
		The Talagrand--Bousquet inequality \citep{bousquet2002bennett}, applied to
		the centered truncated increments with confidence level \(\eta/2\), bounds
		their supremum by twice its expectation plus
		\[
		C\log\left(\frac{16n}{\eta}\right)
		\left[
		t\sqrt{\frac{\log(4/\eta)}{n}}
		+\frac{B\log(4/\eta)}{n}
		\right].
		\]
		Combining this with the truncation event and
		\eqref{eq:full-truncation-bias} proves the stated bound.
	\end{proof}
	\begin{proof}[Proof of Theorem~\ref{thm:estimation-error}]
		After increasing the constant, the sample-size conditions in the theorem
		imply those of Lemma~\ref{lem:full-empirical-norm} at confidence level
		\(\eta/2\).  Thus, every empirical minimizer has RKHS norm at most \(B\) with
		probability at least \(1-\eta/2\).
		
		Let
		\[
		r_n
		=
		\sqrt{
			\frac{
				(\mathcal N(\delta)+1)\log^3(2n/\eta)
			}{n}
		}.
		\]
		
		Choose a sufficiently large constant \(M\), depending only on
		\(R,\sigma,\kappa\).  Apply Lemma~\ref{lem:full-objective-increments} with
		confidence level \(\eta/2\) and radius \(Mr_n\).  The right-hand side is at
		most
		\[
		C(M+1)r_n^2,
		\]
		because \(B\) is fixed, \(\mathcal N(\delta)+1\ge1\), and all logarithms in
		Lemma~\ref{lem:full-objective-increments} are bounded by a constant multiple of \(\log(2n/\eta)\).
		Thus, the norm and deviation bounds hold together with probability at
		least \(1-\eta\).
		
		We show that an empirical minimizer cannot leave the ball of
		radius \(Mr_n\) around \(f_\delta^*\) in the quadratic form
		\eqref{eq:Adelta-quadratic-form}.
		Suppose on this event that an empirical minimizer satisfies
		\[
		\norm{A_\delta^{1/2}
			(\widehat f_\delta-f_\delta^*)}_{\Hk}
		>
		Mr_n.
		\]
		
		The line segment from \(f_\delta^*\) to \(\widehat f_\delta\) meets the
		boundary of this ball at \(f_\delta^*+h\), where
		\[
		h
		=
		\frac{Mr_n}{
			\norm{A_\delta^{1/2}(\widehat f_\delta-f_\delta^*)}_{\Hk}
		}
		(\widehat f_\delta-f_\delta^*).
		\]
		
		In particular, \(\norm{A_\delta^{1/2}h}_{\Hk}=Mr_n\).
		Both endpoints have RKHS norm at most \(B\), so
		\(\norm{f_\delta^*+h}_{\Hk}\le B\).  Convexity of the empirical objective
		and optimality of \(\widehat f_\delta\) give
		\[
		\frac1n\sum_{i=1}^n
		\left(
		|Y_i-f_\delta^*(X_i)-h(X_i)|
		+\delta\norm{f_\delta^*+h}_{\Hk}
		\right)^2
		\le
		\frac1n\sum_{i=1}^n
		\left(
		|Y_i-f_\delta^*(X_i)|
		+\delta\norm{f_\delta^*}_{\Hk}
		\right)^2.
		\]
		
		Lemma~\ref{lem:full-quadratic-growth} gives population increase of order
		\(M^2r_n^2\) at this boundary point, while
		Lemma~\ref{lem:full-objective-increments} bounds the empirical-to-population
		difference by order \((M+1)r_n^2\).  Consequently,
		\[
		\begin{aligned}
			&
			\frac1n\sum_{i=1}^n
			\left(
			|Y_i-f_\delta^*(X_i)-h(X_i)|
			+\delta\norm{f_\delta^*+h}_{\Hk}
			\right)^2
			-\frac1n\sum_{i=1}^n
			\left(
			|Y_i-f_\delta^*(X_i)|
			+\delta\norm{f_\delta^*}_{\Hk}
			\right)^2
			\\
			&\quad
			\ge
			\left(cM^2-CM-C\right)r_n^2.
		\end{aligned}
		\]
		
		Taking \(M\) large enough makes the right-hand side positive, a
		contradiction.  Therefore
		\[
		\norm{A_\delta^{1/2}
			(\widehat f_\delta-f_\delta^*)}_{\Hk}
		\le
		Mr_n.
		\]
		
		Equation~\eqref{eq:Adelta-quadratic-form} gives
		\(A_\delta\succeq T_k\).  Therefore,
		\[
		\norm{\widehat f_\delta-f_\delta^*}_{L^2(\mu)}
		\le
		Mr_n.
		\]
	\end{proof}
	\subsection{Proof of Corollary~\ref{cor:generalization-error}: \nameref*{cor:generalization-error}}
	\begin{proof}
		The triangle inequality and
		Theorems~\ref{thm:full-approximation}--\ref{thm:estimation-error} give
		\[
		\begin{aligned}
			\norm{\widehat f_\delta-f^*}_{L^2(\mu)}
			&\le
			\norm{\widehat f_\delta-f_\delta^*}_{L^2(\mu)}
			+
			\norm{f_\delta^*-f^*}_{L^2(\mu)}
			\\
			&\lesssim
			\delta^{\frac{\alpha+1/2}{\alpha+1}}
			+
			\sqrt{
				\frac{
					(\mathcal N(\delta)+1)\log^3(2n/\eta)
				}{n}
			}.
		\end{aligned}
		\]
		Both component bounds are uniform over \(\norm{g}_{\Hk}\le R\), which proves
		Corollary~\ref{cor:generalization-error}.
		
		For the polynomial-eigenvalue specialization stated after the corollary, the
		spectral decomposition of \(T_k\) gives
		\[
		\mathcal N(\delta)
		=
		\sum_{j\ge1}\frac{\sigma_j}{\sigma_j+\delta}.
		\]

		Since \(\sigma_j/(\sigma_j+\delta)\le
		\min\{1,\sigma_j/\delta\}\), splitting the sum at
		\(\lceil\delta^{-1/\beta}\rceil\) and applying
		Assumption~\ref{ass:eigendecay} gives
		\[
		\begin{aligned}
			\mathcal N(\delta)
			&\le
			\left\lceil\delta^{-1/\beta}\right\rceil
			+
			\delta^{-1}
			\sum_{j>\lceil\delta^{-1/\beta}\rceil}\sigma_j\\
			&\le
			\left\lceil\delta^{-1/\beta}\right\rceil
			+
			C_\beta\delta^{-1}
			\sum_{j>\lceil\delta^{-1/\beta}\rceil}j^{-\beta}\\
			&\le
			\left\lceil\delta^{-1/\beta}\right\rceil
			+
			\frac{C_\beta}{\beta-1}\delta^{-1}
			\left\lceil\delta^{-1/\beta}\right\rceil^{1-\beta}
			\lesssim
			\delta^{-1/\beta},
		\end{aligned}
		\]
		where the third line is due to \(\beta>1\).
		
		Since
		\(\delta^{-1/\beta}\ge1\) on this range,
		\[
		\mathcal N(\delta)+1\lesssim\delta^{-1/\beta}.
		\]
		Substituting this inequality into the operator-form bound gives the
		polynomial bound in the main text.
	\end{proof}
	
	\subsection{Proof of Theorem~\ref{thm:full-at-matching-lower}: \nameref*{thm:full-at-matching-lower}}
	
	\begin{proof}
		\proofstep{\textbf{Step 1: construct the model.}}
		Let \(\mathcal X=\mathbb N\), and suppose that \(X\) has probability mass
		function
		\[
		p_j:=\mathbb P(X=j)
		=
		\frac{j^{-\beta}}{\sum_{\ell\ge1}\ell^{-\beta}},
		\qquad j\ge1.
		\]
		
		On this space, consider the kernel
		\(k(j,\ell)=\mathbf 1\{j=\ell\}\).
		Then \(\Hk=\ell^2(\mathbb N)\), \(T_k\) is diagonal with eigenvalues
		\(p_j\asymp j^{-\beta}\), and
		\[
		\norm{f}_{L^2(\mu)}^2
		=
		\sum_{j\ge1}p_jf(j)^2.
		\]
		
		Let \(S\) take values \(1\) and \(-1\) with equal probability.  Independently,
		let \(U\sim\operatorname{Unif}[0,1/8]\), and set
		\(\varepsilon=S(2+U)\).  This distribution is symmetric about zero and
		bounded.  A direct calculation gives
		\[
		\frac{d}{dt}\E|\varepsilon-t|
		=
		\begin{cases}
			0, & |t|<2,\\
			8\operatorname{sgn}(t)(|t|-2), & 2\le |t|\le17/8,\\
			\operatorname{sgn}(t), & |t|>17/8.
		\end{cases}
		\]
		
		The derivative has the same sign as \(t\), and the absolute value of the
		derivative is at most \(|t|/2\).  Integrating from zero to \(t\) yields
		\[
		0
		\le
		\E|\varepsilon-t|-\E|\varepsilon|
		\le
		\frac{t^2}{4}.
		\]
		
		Thus Assumption~\ref{ass:noise} holds with \(\kappa=1/4\), while boundedness
		gives Assumption~\ref{ass:noise-tail}.
		
		For sufficiently small \(\delta\), set
		\[
		J_\delta
		=
		\{j:\delta\le p_j\le2\delta\}.
		\]
		
		The definition of \(p_j\) shows that
		\[
		J_\delta
		=
		\left\{
		j:
		\left(
		2\delta\sum_{\ell\ge1}\ell^{-\beta}
		\right)^{-1/\beta}
		\le j\le
		\left(
		\delta\sum_{\ell\ge1}\ell^{-\beta}
		\right)^{-1/\beta}
		\right\}.
		\]
		
		Accounting for the integer endpoints gives
		\[
		|J_\delta|
		=
		\left(
		\sum_{\ell\ge1}\ell^{-\beta}
		\right)^{-1/\beta}
		\left(1-2^{-1/\beta}\right)\delta^{-1/\beta}
		+O(1).
		\]
		
		The definition of \(J_\delta\) also gives
		\[
		\delta|J_\delta|
		\le
		\sum_{j\in J_\delta}p_j
		\le
		2\delta|J_\delta|.
		\]
		Finally,
		\[
		\mathcal N(\delta)
		\ge
		\sum_{j\in J_\delta}\frac{p_j}{p_j+\delta}
		\ge
		\frac12|J_\delta|.
		\]
		
		For the reverse inequality, split the sum defining
		\(\mathcal N(\delta)\) according to whether \(p_j\ge\delta\).  This gives
		\[
		\begin{aligned}
			\mathcal N(\delta)
			\le
			\big|\{j:p_j\ge\delta\}\big|
			+
			\frac1\delta\sum_{j:p_j<\delta}p_j\lesssim
			\delta^{-1/\beta},
		\end{aligned}
		\]
		where the last inequality follows from
		\(\sum_{j>m}j^{-\beta}\lesssim m^{1-\beta}\) and \(\beta>1\).

		Consequently,
		\begin{equation}
			\label{eq:lower-critical-band}
			|J_\delta|\asymp\delta^{-1/\beta},
			\qquad
			\delta|J_\delta|
			\le
			\sum_{j\in J_\delta}p_j
			\le
			2\delta|J_\delta|,
			\qquad
			\mathcal N(\delta)\asymp|J_\delta|.
		\end{equation}
		
		\proofstep{\textbf{Step 2: population lower bound.}}
		Choose \(j_\delta\) so that
		\[
		\frac12
		\left(
		\frac{\delta\E|\varepsilon|}{2R}
		\right)^{1/(\alpha+1)}
		\le
		p_{j_\delta}
		\le
		\left(
		\frac{\delta\E|\varepsilon|}{2R}
		\right)^{1/(\alpha+1)}.
		\]
		
		Such an index exists for all sufficiently small \(\delta\) because
		\(p_{j+1}/p_j\to1\).  Define
		\(g_\delta(j_\delta)=R\) and \(g_\delta(j)=0\) otherwise.  Then
		\[
		\norm{T_k(T_k^\alpha g_\delta)}_{\Hk}
		=
		Rp_{j_\delta}^{\alpha+1}
		\le
		\frac{\delta}{2}\E|\varepsilon|.
		\]
		
		The noise calculation in Step 1 gives
		\(\E|Y|\ge\E|\varepsilon|\).
		For every nonzero \(h\in\Hk\), the right directional derivative at zero of
		one half of the population objective is bounded below by
		\[
		\begin{aligned}
			-\inner{T_k(T_k^\alpha g_\delta)}{h}_{\Hk}
			+\delta\E|Y|\norm{h}_{\Hk}
			&\ge
			\left(
			\delta\E|\varepsilon|
			-\norm{T_k(T_k^\alpha g_\delta)}_{\Hk}
			\right)\norm{h}_{\Hk}\\
			&\ge
			\frac{\delta}{2}\E|\varepsilon|\norm{h}_{\Hk}
			>0.
		\end{aligned}
		\]
		
		Convexity therefore makes zero the unique population minimizer.  Hence
		\[
		\sup_{\norm{g}_{\Hk}\le R}
		\norm{f_\delta^*-T_k^\alpha g}_{L^2(\mu)}^2
		\ge
		R^2p_{j_\delta}^{2\alpha+1}
		\gtrsim
		\delta^{\frac{2\alpha+1}{\alpha+1}}.
		\]
		
		For the empirical bound, assume
		\begin{equation}
			\label{eq:lower-sample-condition}
			n\delta\ge C\log(1/\delta).
		\end{equation}
		The balancing sequence in the theorem satisfies this condition for all
		sufficiently large \(n\).
		
		\proofstep{\textbf{Step 3: fix the signal and establish bounds holding with high probability.}}
		Set
		\[
		q=\frac14\min\left\{Rp_1^\alpha,1\right\}.
		\]
		Consider the fixed regression function
		\[
		f^{(0)}
		=
		q\mathbf 1_{\{1\}}
		=
		T_k^\alpha
		\left(
		qp_1^{-\alpha}\mathbf 1_{\{1\}}
		\right).
		\]
		
		The source element on the right has RKHS norm at most \(R/4\), and
		\(f^{(0)}\) vanishes on \(J_\delta\).  Write
		\(\varepsilon_i=S_i(2+U_i)\) for the independent noise variables in the
		sample.  For \(j\ge1\), define
		\[
		\widehat p_j
		=
		\frac1n\sum_{i=1}^n\mathbf 1\{X_i=j\},
		\qquad
		A_j
		=
		\frac1n\sum_{i=1}^n\varepsilon_i\mathbf 1\{X_i=j\},
		\qquad
		B_j
		=
		\frac1n\sum_{i=1}^nS_i\mathbf 1\{X_i=j\}.
		\]
		
		The empirical objective at \(f^{(0)}\) is at most
		\((17/8+\delta q)^2\le(19/8)^2\).  Optimality therefore gives, for every
		empirical minimizer,
		\[
		\delta\norm{\widehat f_\delta}_{\Hk}\le\frac{19}{8},
		\qquad
		\frac1n\sum_{i=1}^n
		(Y_i-\widehat f_\delta(X_i))^2
		\le
		\left(\frac{19}{8}\right)^2.
		\]
		To bound the first coordinate, let \(f_{q/2}\) agree with
		\(\widehat f_\delta\) except that \(f_{q/2}(1)=q/2\).  The observations with
		\(X_i\ne1\) contribute at most \((19/8)^2\) to the empirical mean squared
		residual of \(f_{q/2}\), while
		\[
		|Y_i-f_{q/2}(X_i)|
		=
		|\varepsilon_i+q/2|
		\le\frac94.
		\]
		
		Hence, the empirical mean squared residual of \(f_{q/2}\) is less than \(16\),
		and the corresponding empirical mean absolute residual is at most \(4\).
		For observations with \(X_i=1\), the subgradient of the absolute loss at \(q/2\)
		is \(-S_i\), since \(|\varepsilon_i|\ge2\) and \(q\le1/4\).
		The derivative of the corresponding objective along this coordinate at \(q/2\),
		divided by two, equals
		\[
		-\frac q2\widehat p_1-A_1+\frac{\delta^2q}{2}
		+
		\delta\frac{q/2}{\norm{f_{q/2}}_{\Hk}}
		\frac1n\sum_{i=1}^n|Y_i-f_{q/2}(X_i)|
		-
		\delta\norm{f_{q/2}}_{\Hk}B_1.
		\]
		
		Since only the first coordinate was changed,
		\[
		\delta^2\norm{f_{q/2}}_{\Hk}^2
		\le
		\delta^2\norm{\widehat f_\delta}_{\Hk}^2
		+
		\frac{\delta^2q^2}{4}
		\le
		\left(\frac{19}{8}\right)^2+\frac1{64}<9.
		\]
		
		Thus, \(\delta\norm{f_{q/2}}_{\Hk}<3\).

		The count \(n\widehat p_1\) is binomial with parameters \(n\) and \(p_1\),
		so Chernoff's inequality gives
		\[
		\mathbb P\left(\widehat p_1<\frac34p_1\right)\le e^{-np_1/32}.
		\]
		Conditional on \(X_1,\ldots,X_n\), the summands defining \(A_1\) and
		\(B_1\) are independent, centered, and bounded. Hoeffding's inequality
		therefore gives
		\[
		\mathbb P\left(
		|A_1|+3|B_1|>\frac{p_1q}{16}
		\,\middle|\,X_1,\ldots,X_n
		\right)
		\le Ce^{-cn},
		\]
		where \(c>0\) depends only on the fixed constants \(p_1\) and \(q\).
		Consequently, except on an event of probability \(Ce^{-cn}\),
		\[
		\widehat p_1\ge\frac34p_1,
		\qquad
		|A_1|+3|B_1|\le\frac{p_1q}{16}.
		\]
		
		If \(\delta\) is small enough that
		\[
		4\delta+\frac{\delta^2q}{2}
		\le
		\frac{p_1q}{16},
		\]
		the displayed derivative is at most \(-p_1q/4\).  Convexity of the
		coordinate objective therefore gives
		\[
		\norm{\widehat f_\delta}_{\Hk}
		\ge
		\widehat f_\delta(1)
		>
		\frac q2.
		\]
		
		For each \(j\in J_\delta\), Chernoff's inequality gives
		\[
		\mathbb P\left(
		\frac12np_j\le n\widehat p_j\le2np_j
		\right)
		\ge1-2e^{-cnp_j}.
		\]
		Conditional on the covariates and on this count event, Hoeffding's
		inequality applied to the bounded centered summands in \(A_j\) and \(B_j\)
		gives
		\[
		\mathbb P\left(
		|A_j|+4|B_j|>\frac12\widehat p_j
		\,\middle|\,X_1,\ldots,X_n
		\right)
		\le Ce^{-cn\widehat p_j}
		\le Ce^{-cn\delta}.
		\]
		A union bound over \(J_\delta\) now yields
		\[
		\begin{aligned}
			\mathbb P\Bigg(
			\frac12p_j\le\widehat p_j\le2p_j,|A_j|+4|B_j|\le\frac12\widehat p_j
			\quad\text{for every }j\in J_\delta
			\Bigg)
			\ge
			1-C|J_\delta|e^{-cn\delta}.
		\end{aligned}
		\]
		
		By \eqref{eq:lower-critical-band} and
		\eqref{eq:lower-sample-condition}, the constants can be chosen so that,
		with probability at least \(99/100\), every empirical minimizer satisfies
		\begin{equation}
			\label{eq:lower-simultaneous-bounds}
			\norm{\widehat f_\delta}_{\Hk}>\frac q2,
			\qquad
			\frac12p_j\le\widehat p_j\le2p_j,
			\qquad
			|A_j|+4|B_j|\le\frac12\widehat p_j
			\quad
			\text{for every }j\in J_\delta.
		\end{equation}
		
		\proofstep{\textbf{Step 4: derive the coordinate equations for
				\(j\in J_\delta\).}}
		Assume that \eqref{eq:lower-simultaneous-bounds} holds.  Replacing the
		\(j\)th coordinate of \(\widehat f_\delta\) by either \(1\) or \(-1\)
		changes no other coordinate.  The squared scaled norm of either modified
		function is therefore at most
		\[
		\delta^2\norm{\widehat f_\delta}_{\Hk}^2+\delta^2
		\le
		\left(\frac{19}{8}\right)^2+1
		<9.
		\]
		Thus both modified functions have RKHS norm smaller than \(3/\delta\).  Since
		\(|\varepsilon_i|\ge2\), the subgradient of the absolute loss on observations
		with \(X_i=j\) is \(-S_i\).  The remaining norm terms have the sign of the
		replacement coordinate, and the term involving \(B_j\) has absolute value
		at most \(3|B_j|\).  After division by two, the derivatives at \(1\) and
		\(-1\) therefore satisfy
		\[
		\widehat p_j-|A_j|-3|B_j|>0,
		\qquad
		-\widehat p_j+|A_j|+3|B_j|<0.
		\]
		
		Convexity gives \(|\widehat f_\delta(j)|<1\) for every \(j\in J_\delta\).
		The first order condition for this coordinate is therefore
		\begin{equation}
			\label{eq:lower-band-equation}
			\left(
			\widehat p_j+\delta^2
			+
			\delta
			\frac{
				n^{-1}\sum_{i=1}^n|Y_i-\widehat f_\delta(X_i)|
			}{
				\norm{\widehat f_\delta}_{\Hk}
			}
			\right)
			\widehat f_\delta(j)
			=
			A_j+\delta\norm{\widehat f_\delta}_{\Hk}B_j.
		\end{equation}
		
		The Cauchy Schwarz inequality and the bound in Step 3 show that the empirical
		mean absolute residual is at most \(19/8\).  Together with
		\eqref{eq:lower-simultaneous-bounds}, \(p_j\le2\delta\), and
		\(\delta\le1\), this gives
		\begin{equation}
			\label{eq:lower-band-coefficient}
			\widehat p_j+\delta^2
			+
			\delta
			\frac{
				n^{-1}\sum_{i=1}^n|Y_i-\widehat f_\delta(X_i)|
			}{
				\norm{\widehat f_\delta}_{\Hk}
			}
			\le
			\left(
			5+\frac{19}{4q}
			\right)\delta.
		\end{equation}
		
		\proofstep{\textbf{Step 5: derive a lower bound for the random score over \(J_\delta\).}}
		Since
		\[
		\varepsilon_i
		=
		\frac{33}{16}S_i
		+S_i\left(U_i-\frac1{16}\right),
		\]
		we have
		\[
		A_j
		=
		\frac{33}{16}B_j
		+
		\frac1n\sum_{i=1}^n
		S_i\left(U_i-\frac1{16}\right)\mathbf 1\{X_i=j\}.
		\]
		
		Conditional on \(X_1,\ldots,X_n\), let
		\[
		N_{J_\delta}
		=
		\sum_{i=1}^n\mathbf 1\{X_i\in J_\delta\}.
		\]
		
		Independence and symmetry of the signs give
		\[
		\E\left[
		\sum_{j\in J_\delta}B_j^2
		\,\middle|\,X_1,\ldots,X_n
		\right]
		=
		\frac{N_{J_\delta}}{n^2},
		\]
		and
		\[
		\E\left[
		\left(\sum_{j\in J_\delta}B_j^2\right)^2
		\,\middle|\,X_1,\ldots,X_n
		\right]
		\le
		3\frac{N_{J_\delta}^2}{n^4}.
		\]
		If \(N_{J_\delta}=0\), the score bound below is immediate.  Suppose
		\(N_{J_\delta}>0\).  The Paley--Zygmund inequality then gives
		\[
		\mathbb P\left(
		\sum_{j\in J_\delta}B_j^2
		\ge
		\frac{N_{J_\delta}}{2n^2}
		\,\middle|\,X_1,\ldots,X_n
		\right)
		\ge\frac1{12}.
		\]
		
		Since \(\operatorname{Var}(U)=1/768\), Markov's inequality also gives
		\[
		\E\left[
		\sum_{j\in J_\delta}
		\left[
		\frac1n\sum_{i=1}^n
		S_i\left(U_i-\frac1{16}\right)\mathbf 1\{X_i=j\}
		\right]^2
		\,\middle|\,X_1,\ldots,X_n
		\right]
		=
		\frac{N_{J_\delta}}{768n^2}.
		\]
		
		Therefore,
		\[
		\mathbb P\left(
		\sum_{j\in J_\delta}
		\left[
		\frac1n\sum_{i=1}^n
		S_i\left(U_i-\frac1{16}\right)\mathbf 1\{X_i=j\}
		\right]^2
		\le
		\frac{N_{J_\delta}}{8n^2}
		\,\middle|\,X_1,\ldots,X_n
		\right)
		\ge\frac{95}{96}.
		\]
		
		The two events intersect with conditional probability at least \(7/96\).
		On the intersection, the reverse triangle inequality gives, for every
		\(t\ge0\),
		\[
		\begin{aligned}
			\left(
			\sum_{j\in J_\delta}(A_j+tB_j)^2
			\right)^{1/2}
			&\ge
			\left(\frac{33}{16}+t\right)
			\left(
			\sum_{j\in J_\delta}B_j^2
			\right)^{1/2}\\
			&\quad-
			\left\{
			\sum_{j\in J_\delta}
			\left[
			\frac1n\sum_{i=1}^n
			S_i\left(U_i-\frac1{16}\right)\mathbf 1\{X_i=j\}
			\right]^2
			\right\}^{1/2}\\
			&\ge
			\frac{25}{16\sqrt2}\frac{\sqrt{N_{J_\delta}}}{n}.
		\end{aligned}
		\]
		Consequently, with conditional probability at least \(7/96\),
		\begin{equation}
			\label{eq:lower-score}
			\sum_{j\in J_\delta}(A_j+tB_j)^2
			\ge
			\frac{N_{J_\delta}}{n^2}
			\qquad
			\text{for every }t\ge0.
		\end{equation}
		
		\proofstep{\textbf{Step 6: conclude and specialize the radius.}}
		On the event in \eqref{eq:lower-simultaneous-bounds},
		\[
		N_{J_\delta}
		=
		n\sum_{j\in J_\delta}\widehat p_j
		\ge
		\frac{n\delta|J_\delta|}{2}.
		\]
		The bound in \eqref{eq:lower-score} holds for every \(t\ge0\), so it applies
		with \(t=\delta\norm{\widehat f_\delta}_{\Hk}\).  The simultaneous bounds
		in \eqref{eq:lower-simultaneous-bounds} and the score bound in
		\eqref{eq:lower-score} therefore hold together with probability at least
		\(7/96-1/100>1/16\).  Whenever both bounds hold,
		\eqref{eq:lower-band-equation}, \eqref{eq:lower-band-coefficient}, and
		\(p_j\ge\delta\) give
		\[
		\begin{aligned}
			\norm{\widehat f_\delta-f^{(0)}}_{L^2(\mu)}^2
			&\ge
			\sum_{j\in J_\delta}p_j\widehat f_\delta(j)^2\\
			&\ge
			\frac{1}{
				\left(5+19/(4q)\right)^2\delta
			}
			\sum_{j\in J_\delta}
			\left(
			A_j+\delta\norm{\widehat f_\delta}_{\Hk}B_j
			\right)^2\\
			&\ge
			\frac{|J_\delta|}{
				2\left(5+19/(4q)\right)^2n
			}
			\asymp
			\frac{\mathcal N(\delta)}{n}.
		\end{aligned}
		\]
		Since
		\(\norm{qp_1^{-\alpha}\mathbf 1_{\{1\}}}_{\Hk}\le R\), the fixed regression
		function \(f^{(0)}\) belongs to the source class.  Taking expectations in
		the last bound and using the probability lower bound gives
		\[
		\sup_{\norm{g}_{\Hk}\le R}
		\E\norm{\widehat f_\delta-T_k^\alpha g}_{L^2(\mu)}^2
		\gtrsim
		\frac{\mathcal N(\delta)}{n}.
		\]
		
		Finally, for
		\[
		\delta_n
		\asymp
		n^{-\frac{\beta(\alpha+1)}
			{\beta(2\alpha+1)+\alpha+1}},
		\]
		we have
		\(n\delta_n/\log(1/\delta_n)\to\infty\) and
		\(\mathcal N(\delta_n)\asymp\delta_n^{-1/\beta}\).  Hence
		\[
		\frac{\mathcal N(\delta_n)}{n}
		\asymp
		n^{-\frac{\beta(2\alpha+1)}
			{\beta(2\alpha+1)+\alpha+1}},
		\]
		which proves the empirical statement.
	\end{proof}

	\subsection{Proof of Theorem~\ref{thm:ndat-approx}: \nameref*{thm:ndat-approx}}
	
	We first record some auxiliary lemmas.
	
	\begin{lemma}[Spectral bias bound]\label{lem:spectral-bias}
		Let \(f^*=T_k^\alpha g\) with \(g\in\Hk\) and \(0<\alpha\le1\).  For
		\(0<\lambda\le1\),
		\[
		\norm{\lambda(T_k+\lambda I)^{-1}f^*}_{L^2(\mu)}
		\le
		\lambda^{\min\{\alpha+1/2,1\}}\norm{g}_{\Hk}.
		\]
	\end{lemma}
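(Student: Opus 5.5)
We argue directly by spectral calculus in the eigenbasis \((\varphi_j)\) of \(T_k\). Write \(g=\sum_j g_j\varphi_j\), so that \(\norm{g}_{\Hk}^2=\sum_j g_j^2/\sigma_j\) and \(f^*=T_k^\alpha g=\sum_j\sigma_j^\alpha g_j\varphi_j\). The operator \(\lambda(T_k+\lambda I)^{-1}\) acts diagonally with multiplier \(\lambda/(\sigma_j+\lambda)\). Therefore
\[
\norm{\lambda(T_k+\lambda I)^{-1}f^*}_{L^2(\mu)}^2
=
\sum_j\left(\frac{\lambda}{\sigma_j+\lambda}\right)^2\sigma_j^{2\alpha}g_j^2
=
\sum_j\left(\frac{\lambda\,\sigma_j^{\alpha+1/2}}{\sigma_j+\lambda}\right)^2\frac{g_j^2}{\sigma_j}.
\]
It is then enough to bound the scalar multiplier
\[
\psi_\lambda(\sigma):=\frac{\lambda\sigma^{\alpha+1/2}}{\sigma+\lambda},
\qquad 0<\sigma\le1,
\]
uniformly by \(\lambda^{\min\{\alpha+1/2,1\}}\). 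Summing against \(g_j^2/\sigma_j\) then gives the claim.

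We use \(\sigma_j\le\norm{T_k}_{\mathrm{op}}\le\operatorname{Tr}(T_k)\le1\), which holds by the bounded-kernel assumption, and split into two cases.

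\emph{Case \(\alpha+1/2\le1\).} Write \(\theta=\alpha+1/2\in(1/2,1]\). The elementary inequality \(\sigma^\theta\lambda^{1-\theta}\le\sigma+\lambda\) holds by weighted AM--GM (Young). Hence
\[
\psi_\lambda(\sigma)=\lambda^\theta\cdot\frac{\sigma^\theta\lambda^{1-\theta}}{\sigma+\lambda}\le\lambda^\theta.
\]

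\emph{Case \(\alpha+1/2>1\).} Here \(\sigma^{\alpha+1/2}\le\sigma\), because \(\sigma\le1\). Then \(\psi_\lambda(\sigma)\le\lambda\sigma/(\sigma+\lambda)\le\lambda\).

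In both cases \(\psi_\lambda(\sigma)\le\lambda^{\min\{\alpha+1/2,1\}}\), and the sum is at most \(\lambda^{2\min\{\alpha+1/2,1\}}\norm{g}_{\Hk}^2\).

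The only point requiring care is bookkeeping. All spectral expansions must be understood on the \(L^2(\mu)\)-closure of \(\Hk\), as stipulated earlier in the paper. The identity \(\norm{g}_{\Hk}^2=\sum_j g_j^2/\sigma_j\) then applies to \(g\in\Hk\); any component of \(g\) in the kernel of \(T_k\) is annihilated by \(T_k^\alpha\), so it does not affect \(f^*\). With this in place the argument is a one-line multiplier bound, and no further obstacle is expected. The hypothesis \(\lambda\le1\) is not actually needed beyond making the statement meaningful, since the bound is uniform in \(\lambda>0\) in the first case and trivially holds in the second.
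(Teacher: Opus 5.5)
Your proposal is correct and follows the paper's proof: the same rewriting of the squared norm as $\sum_j \bigl(\lambda\sigma_j^{\alpha+1/2}/(\sigma_j+\lambda)\bigr)^2 g_j^2/\sigma_j$, followed by the same two-case multiplier bound using $\sigma_j\le 1$. The only cosmetic difference is that you handle the case $\alpha+1/2\le 1$ with weighted AM--GM, whereas the paper splits at $\sigma=\lambda$.
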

	
	\begin{proof}[Proof of Lemma~\ref{lem:spectral-bias}]
		Write \(g=\sum_j g_j\varphi_j\).  Since \(0<\sigma_j\le1\) and
		\(f^*=T_k^\alpha g=\sum_j\sigma_j^\alpha g_j\varphi_j\),
		\[
		\begin{aligned}
			\norm{\lambda(T_k+\lambda I)^{-1}f^*}_{L^2(\mu)}^2
			&=
			\sum_j
			\left(\frac{\lambda\sigma_j^\alpha}{\sigma_j+\lambda}\right)^2
			g_j^2\\
			&=
			\sum_j
			\left(\frac{\lambda\sigma_j^{\alpha+1/2}}
			{\sigma_j+\lambda}\right)^2
			\frac{g_j^2}{\sigma_j}\\
			&\le
			\left(
			\sup_{0<t\le1}
			\frac{\lambda t^{\alpha+1/2}}{t+\lambda}
			\right)^2
			\norm{g}_{\Hk}^2.
		\end{aligned}
		\]
		For \(0<\alpha<1/2\), splitting at \(t=\lambda\) gives
		\(\lambda t^{\alpha+1/2}/(t+\lambda)\le\lambda^{\alpha+1/2}\).
		For \(\alpha\ge1/2\), \(t^{\alpha+1/2}\le t\), so the same multiplier is
		at most \(\lambda\).  Taking square roots proves the claim.
	\end{proof}
	
	\begin{lemma}[Centered absolute-loss bound]
		\label{lem:absolutelossincrease}
		Under Assumptions~\ref{ass:model} and \ref{ass:noise}, every \(f\in\Hk\)
		satisfies
		\[
		0\le \E|Y-f(X)|-\E|\varepsilon|
		\le
		\kappa\norm{f-f^*}_{L^2(\mu)}^2 .
		\]
	\end{lemma}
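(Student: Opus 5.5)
The plan is to reduce everything to the scalar function \(q(t)=\E|\varepsilon-t|\) and apply Assumption~\ref{ass:noise} pointwise in \(x\), then integrate. Fix \(f\in\Hk\) and write \(t(x)=f(x)-f^*(x)\). Under Assumption~\ref{ass:model}, \(Y-f(X)=\varepsilon-t(X)\). Since \(\varepsilon\) is independent of \(X\), conditioning on \(X=x\) gives
\[
\E\bigl[\,|Y-f(X)|\,\big|\,X=x\bigr]=q(t(x)).
\]
Taking expectations, \(\E|Y-f(X)|=\E[q(t(X))]\), and \(\E|\varepsilon|=q(0)\).

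The key step is to integrate the pointwise inequality \(0\le q(t)-q(0)\le\kappa t^2\) from Assumption~\ref{ass:noise} at \(t=t(X)\). This gives
\[
0\le \E|Y-f(X)|-\E|\varepsilon|\le\kappa\,\E[t(X)^2]=\kappa\norm{f-f^*}_{L^2(\mu)}^2 .
\]
The right side is finite because \(f,f^*\in\Hk\) are bounded under the kernel bound \(\sup_x k(x,x)\le1\), so \(t\in L^2(\mu)\).

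The only point needing care is justifying the conditioning step, which is a Fubini/independence argument. Since \(\varepsilon\) is integrable and \(t(X)\) is bounded, the joint integrand \(|\varepsilon-t(X)|\) is integrable. Independence then lets me integrate first over \(\varepsilon\) for fixed \(X\). I do not expect a genuine obstacle here; this is routine measure theory, and the same conditioning was already used in the proof of Lemma~\ref{normcontrol}.
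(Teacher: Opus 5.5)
Your proof is correct and matches the paper's argument: both write \(Y-f(X)=\varepsilon-t(X)\) with \(t=f-f^*\), apply Assumption~\ref{ass:noise} conditionally on \(X\) using independence of \(X\) and \(\varepsilon\), and then average over \(X\). Your remarks on the boundedness of \(t\) and the Fubini justification spell out details that the paper leaves implicit.
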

	
	\begin{proof}[Proof of Lemma~\ref{lem:absolutelossincrease}]
		By independence of \(X\) and \(\varepsilon\), apply
		Assumption~\ref{ass:noise} conditionally on \(X\), with
		\(t=f(X)-f^*(X)\), and then average over \(X\):
		\[
		0
		\le
		\E|\varepsilon-\left(f(X)-f^*(X)\right)|-\E|\varepsilon|
		\le
		\kappa\E(f(X)-f^*(X))^2 .
		\]
	\end{proof}
	
	\begin{lemma}[Population first-order condition for noise-debiased training]
		\label{lem:ndat-population-foc}
		Under Assumptions~\ref{ass:model} and \ref{ass:noise}, every minimizer
		\(\widetilde f_\delta^*\) of problem~\eqref{prob:populationndat} admits
		\(v_\delta\in\partial\norm{\widetilde f_\delta^*}_{\Hk}\) and a measurable
		function \(\widetilde s_\delta:\mathcal X\times\mathbb R\to[-1,1]\) such
		that
		\[
		\widetilde s_\delta(x,e)
		\in
		\left.\partial_t|f^*(x)+e-t|\right|_{t=\widetilde f_\delta^*(x)}
		\]
		and
		\begin{equation}
			\label{eq:ndat-population-foc}
			\begin{aligned}
				0=
				T_k(\widetilde f_\delta^*-f^*)
				+\delta^2\widetilde f_\delta^*
				+\delta\norm{\widetilde f_\delta^*}_{\Hk}
				T_k\left(
				\E_\varepsilon\left[
				\widetilde s_\delta(\cdot,\varepsilon)
				\right]
				\right) +
				\delta\left(
				\E|Y-\widetilde f_\delta^*(X)|-\E|\varepsilon|
				\right)v_\delta .
			\end{aligned}
		\end{equation}
		Moreover,
		\begin{equation}
			\label{eq:ndat-sign-selection}
			\left|
			\E_\varepsilon[\widetilde s_\delta(x,\varepsilon)]
			\right|
			\le
			4\kappa|\widetilde f_\delta^*(x)-f^*(x)|
			\quad\text{for almost every \(x\) with respect to the distribution of \(X\)}.
		\end{equation}
		In particular, \(\norm{v_\delta}_{\Hk}\le1\), including when
		\(\widetilde f_\delta^*=0\).
	\end{lemma}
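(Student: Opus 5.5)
The argument follows the template of Lemma~\ref{lem:population-foc}, with two changes. The absolute-risk factor in the mixed term is now the centered quantity $\E|Y-f(X)|-\E|\varepsilon|$, and the squared-loss part is written directly as $\norm{f-f^*}_{L^2(\mu)}^2$. Half of the objective in \eqref{prob:populationndat} is the sum of three pieces. The first is the smooth term $\tfrac12\norm{f-f^*}_{L^2(\mu)}^2$, whose RKHS gradient is $T_k(f-f^*)$. The second is $\tfrac{\delta^2}{2}\norm{f}_{\Hk}^2$, with gradient $\delta^2 f$. The third is the product $\delta\, \norm{f}_{\Hk}\, D(f)$, where $D(f):=\E|Y-f(X)|-\E|\varepsilon|$.

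By Lemma~\ref{lem:absolutelossincrease}, $D(f)\ge0$ for every $f$. Hence the product is a nonnegative norm times a nonnegative convex function. I treat it with the chain/product rule for convex integral functionals, as in Lemma~\ref{lem:population-foc}, citing \citet{rockafellar1998variational}.

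\textbf{Case $\widetilde f_\delta^*\ne0$.} The norm is differentiable there, with gradient $v_\delta=\widetilde f_\delta^*/\norm{\widetilde f_\delta^*}_{\Hk}$. The map $f\mapsto \E|Y-f(X)|$ is convex and finite, and its subdifferential consists of the elements $\E[\widetilde s_\delta(X,\varepsilon)k_X]$, where $\widetilde s_\delta$ is a measurable selection of $\partial_t|f^*(x)+e-t|$ at $t=\widetilde f_\delta^*(x)$. Since $X$ and $\varepsilon$ are independent, this element equals $T_k(\E_\varepsilon[\widetilde s_\delta(\cdot,\varepsilon)])$.

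A product of a differentiable positive factor and a convex function is locally a sum of subdifferentiable terms, so Fermat's rule applies. It yields \eqref{eq:ndat-population-foc}, with the coefficient of $v_\delta$ equal to $\delta D(\widetilde f_\delta^*)$. The constant $\E|\varepsilon|$ drops out of the derivative and survives only inside this coefficient.

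Conditioning on $X=x$ shows that $\E_\varepsilon[\widetilde s_\delta(x,\varepsilon)]\in\partial q(\widetilde f_\delta^*(x)-f^*(x))$ with $q(t)=\E|\varepsilon-t|$. Lemma~\ref{lem:quadratic-absolute-risk} then gives \eqref{eq:ndat-sign-selection}.

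\textbf{Case $\widetilde f_\delta^*=0$.} I compute the right directional derivative at zero in a direction $h$. The smooth part contributes $-\inner{T_kf^*}{h}_{\Hk}$. The quadratic penalty contributes nothing. The product term contributes $\delta D(0)\norm{h}_{\Hk}$: the norm factor vanishes at zero and grows linearly, while $D$ is continuous along the ray, since $|D(th)-D(0)|\le t\norm{h}_{L^2(\mu)}$.

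Optimality at zero makes this derivative nonnegative for all $h$. By duality, $\norm{T_kf^*}_{\Hk}\le\delta D(0)$.

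\begin{itemize}
\item If $D(0)>0$, set $v_\delta=T_kf^*/(\delta D(0))$. Then $\norm{v_\delta}_{\Hk}\le1$, so $v_\delta$ lies in the norm subdifferential at zero.
\item If $D(0)=0$, then $T_kf^*=0$, and any $v_\delta$ with $\norm{v_\delta}_{\Hk}\le1$, for instance $v_\delta=0$, works.
\end{itemize}

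Any measurable selection $\widetilde s_\delta$ at $t=0$ is admissible, because its term is multiplied by $\norm{\widetilde f_\delta^*}_{\Hk}=0$. Conditioning again gives \eqref{eq:ndat-sign-selection} via Lemma~\ref{lem:quadratic-absolute-risk}. In both cases $\norm{v_\delta}_{\Hk}\le1$.

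The main obstacle is justifying the subgradient calculus for the product term in the nonzero case, since a product of convex functions need not be convex in general. I would handle this by a directional-derivative argument rather than a global convexity argument. Both factors are directionally differentiable at $\widetilde f_\delta^*$, and the product rule gives the one-sided derivative
\[
\norm{\widetilde f_\delta^*}_{\Hk}\,D'(\widetilde f_\delta^*;h)+D(\widetilde f_\delta^*)\inner{v_\delta}{h}_{\Hk}.
\]
The first factor is positive and $D'(\widetilde f_\delta^*;\cdot)$ is the sublinear support function of $\partial D(\widetilde f_\delta^*)$. Hence the full one-sided derivative is a sublinear function of $h$, and it is nonnegative at a minimizer. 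A Hahn--Banach/separation argument, identical to the one implicit in Lemma~\ref{lem:population-foc}, then produces a single subgradient of $D$ making the total equal to zero. The measurable-selection representation of that subgradient as $T_k\E_\varepsilon[\widetilde s_\delta]$ is then quoted from the integral-functional rule.
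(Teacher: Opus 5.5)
Your proof is correct. For $\widetilde f_\delta^*=0$ it coincides with the paper's argument: the right directional derivative at zero plus RKHS duality, with your separate treatment of $D(0)=0$ a harmless refinement.

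For $\widetilde f_\delta^*\ne0$ you take a different route. The paper first proves convexity. It notes that the pair objective $\E[(|Y-f(X)|+\delta r)^2]-2\delta\E|\varepsilon|\,r$ is jointly convex on $r\ge\norm{f}_{\Hk}$. Its $r$-derivative $2\delta(\E|Y-f(X)|-\E|\varepsilon|+\delta r)$ is nonnegative by Lemma~\ref{lem:absolutelossincrease}, so the partial minimum sits at $r=\norm{f}_{\Hk}$ and the reduced objective in \eqref{prob:populationndat} is convex. The paper then appeals to Fermat's rule and the convex chain rule.

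You never use convexity. You combine three ingredients: the one-sided product rule; the identification of $D'(\widetilde f_\delta^*;\cdot)$ with the support function of the weakly compact set $\partial D(\widetilde f_\delta^*)$, valid because $D$ is convex and $1$-Lipschitz in $\norm{\cdot}_{\Hk}$; and a separation argument. Together these directly produce one $z\in\partial D(\widetilde f_\delta^*)$ with $T_k(\widetilde f_\delta^*-f^*)+\delta^2\widetilde f_\delta^*+\delta D(\widetilde f_\delta^*)v_\delta+\delta\norm{\widetilde f_\delta^*}_{\Hk}z=0$.

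Your route makes explicit how a single absolute-loss subgradient is extracted from the product term $\norm{f}_{\Hk}D(f)$, which need not be convex on its own; the paper leaves this step to the cited calculus rules. Your condition also holds at any local minimizer. The paper's route buys convexity and hence sufficiency of the condition. Sufficiency is what the ``if and only if'' and uniqueness claims of Corollary~\ref{cor:breakdown-radius} need, but this lemma uses only necessity, so your argument suffices.

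You can drop the sentence about a product being ``locally a sum of subdifferentiable terms''; the directional-derivative paragraph is what actually carries the proof.
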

	
	\begin{proof}[Proof of Lemma~\ref{lem:ndat-population-foc}]
		The population objective is jointly convex in \((f,r)\).  For fixed
		\(f\), its derivative in \(r\ge\norm{f}_{\Hk}\) is nonnegative by
		Lemma~\ref{lem:absolutelossincrease}; hence its partial minimum is attained
		at \(r=\norm{f}_{\Hk}\).  This also proves convexity of the reduced
		objective.
		
		If \(\widetilde f_\delta^*\ne0\), Fermat's rule and the convex chain rule
		give \eqref{eq:ndat-population-foc}.  Conditioning the selected
		absolute-loss subgradient on \(X=x\) and applying
		Lemma~\ref{lem:quadratic-absolute-risk} give
		\eqref{eq:ndat-sign-selection}.
		
		If \(\widetilde f_\delta^*=0\), nonnegativity of every right directional
		derivative gives
		\[
		\left|\inner{T_kf^*}{h}_{\Hk}\right|
		\le
		\delta\left(\E|Y|-\E|\varepsilon|\right)\norm{h}_{\Hk},
		\qquad h\in\Hk.
		\]
		The quantity in parentheses is nonnegative by
		Lemma~\ref{lem:absolutelossincrease}.  Duality therefore gives
		\(v_\delta\in\partial\norm{0}_{\Hk}\) such that
		\[
		T_kf^*
		=
		\delta\left(\E|Y|-\E|\varepsilon|\right)v_\delta,
		\]
		which is \eqref{eq:ndat-population-foc} at zero.  The measurable selection
		can be chosen as in Lemma~\ref{lem:population-foc}, and
		Lemma~\ref{lem:quadratic-absolute-risk} again yields
		\eqref{eq:ndat-sign-selection}.
	\end{proof}
	
	\begin{proof}[Proof of Theorem~\ref{thm:ndat-approx}]
		Comparison with \(f^*\) and Lemma~\ref{lem:absolutelossincrease} give
		\[
		\begin{aligned}
			\norm{\widetilde f_\delta^*-f^*}_{L^2(\mu)}^2
			+\delta^2\norm{\widetilde f_\delta^*}_{\Hk}^2+
			2\delta\norm{\widetilde f_\delta^*}_{\Hk}
			\left(
			\E|Y-\widetilde f_\delta^*(X)|-\E|\varepsilon|
			\right)
			\le
			\delta^2\norm{f^*}_{\Hk}^2.
		\end{aligned}
		\]
		Since \(\norm{f^*}_{\Hk}\le R\),
		\begin{equation}
			\label{eq:ndat-basic-bounds}
			\norm{\widetilde f_\delta^*}_{\Hk}\le R,
			\qquad
			\norm{\widetilde f_\delta^*-f^*}_{L^2(\mu)}\le\delta R.
		\end{equation}
		
		Rearranging \eqref{eq:ndat-population-foc} gives
		\begin{equation}
			\label{eq:ndat-resolvent}
			\begin{aligned}
				\widetilde f_\delta^*-f^*
				&=
				-\delta^2(T_k+\delta^2I)^{-1}f^*\\
				&\quad-
				\delta\norm{\widetilde f_\delta^*}_{\Hk}
				(T_k+\delta^2I)^{-1}T_k
				\left(
				\E_\varepsilon[
				\widetilde s_\delta(\cdot,\varepsilon)]
				\right)\\
				&\quad-
				\delta\left(
				\E|Y-\widetilde f_\delta^*(X)|-\E|\varepsilon|
				\right)
				(T_k+\delta^2I)^{-1}v_\delta .
			\end{aligned}
		\end{equation}
		The three terms on the right satisfy
		\[
		\norm{\delta^2(T_k+\delta^2I)^{-1}f^*}_{L^2(\mu)}
		\le
		R\delta^{\min\{2\alpha+1,2\}},
		\]
		\[
		\begin{aligned}
			&\delta\norm{\widetilde f_\delta^*}_{\Hk}
			\left\|
			(T_k+\delta^2I)^{-1}T_k
			\E_\varepsilon[
			\widetilde s_\delta(\cdot,\varepsilon)]
			\right\|_{L^2(\mu)}\\
			&\qquad\le
			4\kappa\delta R
			\norm{\widetilde f_\delta^*-f^*}_{L^2(\mu)},
		\end{aligned}
		\]
		and
		\[
		\norm{(T_k+\delta^2I)^{-1}v_\delta}_{L^2(\mu)}
		\le
		\frac1{2\delta}.
		\]
		Here the first two inequalities use Lemma~\ref{lem:spectral-bias},
		\eqref{eq:ndat-sign-selection}, \eqref{eq:ndat-basic-bounds}, and
		\(\norm{(T_k+\delta^2I)^{-1}T_k}_{\mathrm{op}}\le1\).  The last follows from
		\[
		\sup_{t>0}\frac{t}{(t+\delta^2)^2}
		=
		\frac1{4\delta^2}
		\quad\text{and}\quad
		\norm{v_\delta}_{\Hk}\le1.
		\]
		Lemma~\ref{lem:absolutelossincrease} therefore bounds the final term in
		\eqref{eq:ndat-resolvent} by
		\[
		\frac{\kappa}{2}
		\norm{\widetilde f_\delta^*-f^*}_{L^2(\mu)}^2.
		\]
		
		Taking \(L^2(\mu)\)-norms in \eqref{eq:ndat-resolvent} and using
		\eqref{eq:ndat-basic-bounds} yields
		\[
		\norm{\widetilde f_\delta^*-f^*}_{L^2(\mu)}
		\le
		R\delta^{\min\{2\alpha+1,2\}}
		+
		\frac92\kappa R\delta
		\norm{\widetilde f_\delta^*-f^*}_{L^2(\mu)}.
		\]
		If \(8\kappa R\delta\le1\), the final coefficient is at most \(9/16\).
		Rearranging proves
		\[
		\norm{\widetilde f_\delta^*-f^*}_{L^2(\mu)}
		\le
		\frac{16}{7}R\delta^{\min\{2\alpha+1,2\}},
		\]
		uniformly over \(\norm{g}_{\Hk}\le R\).
	\end{proof}
	
	\subsection{Auxiliary concentration result for Theorem~\ref{thm:ndat-generalization}}
	
	\begin{lemma}[Simultaneous concentration inequalities]
		\label{lem:concentration-event}
		Under Assumptions~\ref{ass:model} and \ref{ass:noise-tail}, let
		\(0<\delta\le1\) satisfy
		\(\delta^2\le\norm{T_k}_{\mathrm{op}}\), let \(0<\eta<1\), fix
		\(f,h\in\Hk\), and fix a measurable function
		\(s_f:\mathcal X\times\mathbb R\to[-1,1]\) satisfying
		\[
		s_f(x,e)
		\in
		\left.\partial_t|f^*(x)+e-t|\right|_{t=f(x)}.
		\]
		Set
		\[
		A_{\delta,n,\eta}
		=
		\sqrt{
			\frac{(\mathcal N(\delta^2)+1)\log(64/(\eta\delta^2))}{n}
		}
		+
		\frac{\log(64/(\eta\delta^2))}{n\delta}.
		\]
		
		Then, with probability at least \(1-\eta\), the following inequalities hold simultaneously for every \(u\in\Hk\):
		\begin{enumerate}[label=(\roman*)]
			\item The empirical quadratic form satisfies
			\begin{equation}
				\begin{aligned}\label{eq:empirical-quadratic-form-bound}
					\left|
					\frac1n\sum_{i=1}^n u(X_i)^2-\norm{u}_{L^2(\mu)}^2
					\right|
					&\le
					2\left[
					\sqrt{
						\frac{\log(64/(\eta\delta^2))}{n\delta^2}
					}
					+
					\frac{\log(64/(\eta\delta^2))}{n\delta^2}
					\right]
					\\
					&\quad\times
					\left(
					\norm{u}_{L^2(\mu)}^2
					+
					\delta^2\norm{u}_{\Hk}^2
					\right).
				\end{aligned}
			\end{equation}
			
			\item The empirical noise term satisfies
			\begin{equation}\label{eq:empirical-noise-bound}
				\left|
				\frac1n\sum_{i=1}^n\varepsilon_i u(X_i)
				\right|
				\le
				2\sigma A_{\delta,n,\eta}
				\left(
				\norm{u}_{L^2(\mu)}^2+\delta^2\norm{u}_{\Hk}^2
				\right)^{1/2}.
			\end{equation}
			
			\item The empirical product term satisfies
			\begin{equation}\label{eq:empirical-product-bound}
				\left|
				\frac1n\sum_{i=1}^n h(X_i)u(X_i)-\E[h(X)u(X)]
				\right|
				\le
				4\norm{h}_{\Hk}A_{\delta,n,\eta}
				\left(
				\norm{u}_{L^2(\mu)}^2+\delta^2\norm{u}_{\Hk}^2
				\right)^{1/2}.
			\end{equation}
			
			\item The empirical sign term satisfies
			\begin{equation}\label{eq:empirical-sign-bound}
				\begin{aligned}
					\left|
					\frac1n\sum_{i=1}^n
					s_f(X_i,\varepsilon_i)u(X_i)
					-
					\E\left[
					s_f(X,\varepsilon)u(X)
					\right]
					\right|
					\le
					4A_{\delta,n,\eta}
					\left(
					\norm{u}_{L^2(\mu)}^2+\delta^2\norm{u}_{\Hk}^2
					\right)^{1/2}.
				\end{aligned}
			\end{equation}
			
			\item The empirical absolute loss satisfies
			\begin{equation}
				\left|
				\frac1n\sum_{i=1}^n|Y_i-f(X_i)|
				-
				\E|Y-f(X)|
				\right|
				\le
				16\left(
				\sigma+\norm{f^*}_{\Hk}+\norm{f}_{\Hk}
				\right)A_{\delta,n,\eta}.
			\end{equation}
		\end{enumerate}
	\end{lemma}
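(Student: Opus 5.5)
The plan is to reduce every uniform-in-\(u\) statement to a bound on a single random element of \(\Hk\) (or a single operator) in the geometry of \(T_k+\delta^2 I\). The starting identity is
\[
\norm{u}_{L^2(\mu)}^2+\delta^2\norm{u}_{\Hk}^2=\norm{(T_k+\delta^2I)^{1/2}u}_{\Hk}^2 .
\]
Any empirical-minus-population linear functional of \(u\) of the form \(\inner{\zeta}{u}_{\Hk}\) can then be written as \(\inner{(T_k+\delta^2I)^{-1/2}\zeta}{(T_k+\delta^2I)^{1/2}u}_{\Hk}\). Cauchy--Schwarz gives the claimed form with the factor \(\norm{(T_k+\delta^2I)^{-1/2}\zeta}_{\Hk}\), and this factor does not depend on \(u\). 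Thus simultaneity over \(u\in\Hk\) is automatic, and only finitely many random quantities must be controlled. I will allot failure probability \(\eta/8\) to each of at most eight events and take a union bound. This allotment is where the constant \(64=8\cdot8\) in \(\log(64/(\eta\delta^2))\) comes from.

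For (i), I apply Theorem~\ref{thm:regularized-covariance-embedding} with \(\lambda=\delta^2\); this is admissible since \(\delta^2\le\norm{T_k}_{\mathrm{op}}\). I use confidence \(\eta/8\). The quadratic form \(\frac1n\sum u(X_i)^2-\norm{u}_{L^2(\mu)}^2=\inner{(T_n-T_k)u}{u}_{\Hk}\) is bounded by the normalized operator norm times \(\norm{(T_k+\delta^2I)^{1/2}u}_{\Hk}^2\), which is exactly \eqref{eq:empirical-quadratic-form-bound}.

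For (ii), I take \(\zeta=n^{-1}\sum\varepsilon_ik_{X_i}\) and apply Lemma~\ref{lem:regularized-noise-concentration} with \(\lambda=\delta^2\). Its right-hand side is \(\sigma\sqrt{2\mathcal N(\delta^2)\log(2/\eta')/n}+2\sigma\log(2/\eta')/(n\delta)\), and this is at most \(2\sigma A_{\delta,n,\eta}\).

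For (iii) and (iv), I take \(\xi_i=w(X_i,\varepsilon_i)(T_k+\delta^2I)^{-1/2}k_{X_i}\) minus its mean, with \(w=h(x)\) in (iii) and \(w=s_f(x,e)\) in (iv). In both cases \(|w|\le\norm{h}_{\Hk}\) (respectively \(\le1\)) by the bounded kernel. Using \eqref{eq:regularized-feature-bound} and \eqref{eq:regularized-feature-second-moment} with \(\lambda=\delta^2\), together with \(\E\norm{\xi-\E\xi}^p\le2^p\E\norm{\xi}^p\), the Bernstein moment condition of Theorem~\ref{thm:hilbert-vector-bernstein} holds with \(v\lesssim n\norm{h}_{\Hk}^2\mathcal N(\delta^2)\) and \(L\lesssim\norm{h}_{\Hk}/\delta\). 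The resulting bound is \(\lesssim\norm{h}_{\Hk}A_{\delta,n,\eta}\), and the empirical term equals \(\inner{n^{-1}\sum\xi_i}{(T_k+\delta^2I)^{1/2}u}_{\Hk}\). The only care needed is to track the numerical constants so that they land at \(4\); the extra \(+1\) in \(\mathcal N(\delta^2)+1\) gives slack for the factor \(2^p\).

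For (v), I apply the scalar Bernstein inequality to \(|Y_i-f(X_i)|\). Its moments are controlled by \(|\varepsilon|+\norm{f^*}_{\Hk}+\norm{f}_{\Hk}\), using \(\sup_x|f(x)|\le\norm{f}_{\Hk}\) and Assumption~\ref{ass:noise-tail} (after centering, \(\E|Z-\E Z|^p\le 2^p\E|Z|^p\) and Minkowski). This yields a deviation \(\lesssim(\sigma+\norm{f^*}_{\Hk}+\norm{f}_{\Hk})\bigl(\sqrt{\log(2/\eta')/n}+\log(2/\eta')/n\bigr)\). Since \(\mathcal N(\delta^2)+1\ge1\), \(\delta\le1\), and \(\log(2/\eta')\le\log(64/(\eta\delta^2))\), this is dominated by \(A_{\delta,n,\eta}\) times the stated constant.

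I expect the main obstacle to be the bookkeeping of constants rather than any conceptual step. The main risks are the centered moment bounds in (iii)--(v) and verifying that every logarithm \(\log(2/\eta')\) or \(\log(8/(\eta'\delta^2))\) with \(\eta'=\eta/8\) is at most \(\log(64/(\eta\delta^2))\). The structural point that keeps everything uniform in \(u\) is that \(f\), \(h\), and \(s_f\) are fixed in advance, so the random vectors do not depend on \(u\).
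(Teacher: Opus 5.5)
Your proposal is correct and follows the paper's proof essentially step for step. It derives the same five bounds: Theorem~\ref{thm:regularized-covariance-embedding} and Lemma~\ref{lem:regularized-noise-concentration} at $\lambda=\delta^2$, Hilbert-valued Bernstein for the product and sign terms, scalar Bernstein for the absolute loss, and Cauchy--Schwarz against $(T_k+\delta^2I)^{1/2}u$ for uniformity in $u$. The paper allots $\eta/5$ per event and uses $\log(40/(\eta\delta^2))\le\log(64/(\eta\delta^2))$; your $\eta/8$ allotment is equally valid.

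One minor bookkeeping correction: with your $2^p$ centering bound, (iii) gives $v=4n\norm{h}_{\Hk}^2\mathcal N(\delta^2)$. What absorbs this is $\sqrt8\le4$, not the $+1$ in $\mathcal N(\delta^2)+1$. The paper instead bounds the centered $p$-th moment by $(2\norm{h}_{\Hk}/\delta)^{p-2}$ times the second moment.
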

	
	\begin{proof}[Proof of Lemma~\ref{lem:concentration-event}]
		Assign failure probability \(\eta/5\) to each of the five bounds.
		
		\proofstep{\textbf{Step 1: empirical quadratic form.}}
		Theorem~\ref{thm:regularized-covariance-embedding}, applied with
		\(\lambda=\delta^2\), gives
		\[
		\left\|
		(T_k+\delta^2I)^{-1/2}(T_n-T_k)(T_k+\delta^2I)^{-1/2}
		\right\|_{\mathrm{op}}
		\le
		2\left[
		\sqrt{\frac{\log(64/(\eta\delta^2))}{n\delta^2}}
		+
		\frac{\log(64/(\eta\delta^2))}{n\delta^2}
		\right].
		\]
		For every \(u\in\Hk\),
		\[
		\begin{aligned}
			\left|
			\frac1n\sum_{i=1}^n u(X_i)^2-\norm{u}_{L^2(\mu)}^2
			\right|
			&=
			\left|\inner{(T_n-T_k)u}{u}_{\Hk}\right|\\
			&\le
			\left\|
			(T_k+\delta^2I)^{-1/2}(T_n-T_k)(T_k+\delta^2I)^{-1/2}
			\right\|_{\mathrm{op}}\\
			&\quad\times
			\norm{(T_k+\delta^2I)^{1/2}u}_{\Hk}^2.
		\end{aligned}
		\]
		Since
		\[
		\norm{(T_k+\delta^2I)^{1/2}u}_{\Hk}^2
		=
		\norm{u}_{L^2(\mu)}^2+\delta^2\norm{u}_{\Hk}^2,
		\]
		this proves \eqref{eq:empirical-quadratic-form-bound}.
		
		\proofstep{\textbf{Step 2: noise term.}}
		Lemma~\ref{lem:regularized-noise-concentration}, with
		\(\lambda=\delta^2\), yields
		\[
		\left\|
		(T_k+\delta^2I)^{-1/2}
		\frac1n\sum_{i=1}^n\varepsilon_i k_{X_i}
		\right\|_{\Hk}
		\le
		2\sigma A_{\delta,n,\eta}.
		\]
		Pairing this vector with \((T_k+\delta^2I)^{1/2}u\) and applying
		Cauchy--Schwarz proves \eqref{eq:empirical-noise-bound}.
		
		\proofstep{\textbf{Step 3: empirical product term.}}
		The independent centered random elements
		\[
		\begin{aligned}
			h(X_i)(T_k+\delta^2I)^{-1/2}k_{X_i}-
			\E\!\left[h(X)(T_k+\delta^2I)^{-1/2}k_X\right]
		\end{aligned}
		\]
		have total second moment at most
		\(n\norm{h}_{\Hk}^2\mathcal N(\delta^2)\).  Moreover,
		\[
		\norm{(T_k+\delta^2I)^{-1/2}k_x}_{\Hk}\le\delta^{-1},
		\qquad x\in\mathcal X,
		\]
		so each centered element has norm at most
		\(2\norm{h}_{\Hk}/\delta\).  Thus, for every integer \(p\ge2\), their
		\(p\)-th moments satisfy the condition of
		Theorem~\ref{thm:hilbert-vector-bernstein} with
		\[
		v=n\norm{h}_{\Hk}^2\mathcal N(\delta^2),
		\qquad
		L=\frac{2\norm{h}_{\Hk}}{\delta}.
		\]
		
		Theorem~\ref{thm:hilbert-vector-bernstein} gives
		\[
		\begin{aligned}
			\Bigg\|
			\frac1n\sum_{i=1}^n
			h(X_i)(T_k+\delta^2I)^{-1/2}k_{X_i}-
			\E\!\left[h(X)(T_k+\delta^2I)^{-1/2}k_X\right]
			\Bigg\|_{\Hk}
			\le
			4\norm{h}_{\Hk}A_{\delta,n,\eta}.
		\end{aligned}
		\]
		Pairing with \((T_k+\delta^2I)^{1/2}u\) proves
		\eqref{eq:empirical-product-bound}.
		
		\proofstep{\textbf{Step 4: empirical sign term.}}
		The same argument applies to
		\[
		\begin{aligned}s_f(X_i,\varepsilon_i)(T_k+\delta^2I)^{-1/2}k_{X_i}-
			\E\!\left[
			s_f(X,\varepsilon)(T_k+\delta^2I)^{-1/2}k_X
			\right].
		\end{aligned}
		\]
		Because \(|s_f|\le1\), the total second moment is at most
		\(n\mathcal N(\delta^2)\), and each centered element has norm at most
		\(2/\delta\).  Hilbert-valued Bernstein therefore bounds the norm of their
		average by \(4A_{\delta,n,\eta}\).  Pairing with
		\((T_k+\delta^2I)^{1/2}u\) proves \eqref{eq:empirical-sign-bound}.
		
		\proofstep{\textbf{Step 5: empirical absolute loss.}}
		For every integer \(p\ge2\), the centering inequality gives
		\[
		\E\left|
		|Y-f(X)|-\E|Y-f(X)|
		\right|^p
		\le2^p\E|Y-f(X)|^p.
		\]
		Assumption~\ref{ass:noise-tail} and
		\[
		|Y-f(X)|
		\le
		|\varepsilon|+\norm{f^*}_{\Hk}+\norm{f}_{\Hk}
		\]
		give
		\[
		\E\left|
		|Y-f(X)|-\E|Y-f(X)|
		\right|^p
		\le
		\frac{p!}{2}
		\left[
		8\left(\sigma+\norm{f^*}_{\Hk}+\norm{f}_{\Hk}\right)
		\right]^p.
		\]
		Theorem~\ref{thm:hilbert-vector-bernstein}, applied in \(\mathbb R\), gives
		the fifth bound.  A union bound completes the proof.
	\end{proof}
	
	\subsection{Proof of Theorem~\ref{thm:ndat-generalization}: \nameref*{thm:ndat-generalization}}
\textbf{Proof strategy.}  We compare the empirical predictor \(\widetilde f_\delta\) with its population
counterpart \(\widetilde f_\delta^*\) in three steps.
\begin{enumerate}[label=(\roman*)]
	\item The constraint \(r\ge\norm{f}_{\Hk}\) may be strict.  We therefore
	represent the unused part of \(r\) by an additional scalar coordinate.  This
	turns the constrained problem into a minimization problem on
	\(\Hk\oplus\mathbb R\) without changing the fitted function.

	\item We compare the empirical and population optimality conditions.  Their
	difference produces a quadratic error that measures both the prediction
	difference and the distance between the two minimizers.  Concentration controls
	the random fluctuations, while convexity and subgradient monotonicity control
	the nonlinear terms introduced by the norm and the absolute loss.

	\item The sample-size conditions make the resulting remainder small enough to
	be absorbed into the quadratic error.  This gives the estimation error between
	\(\widetilde f_\delta\) and \(\widetilde f_\delta^*\), with separate
	contributions from sampling variation and the error in \(\widetilde m\).
	Combining this estimate with the population approximation bound gives the
	prediction error relative to \(f^*\).
\end{enumerate}
	
	\begin{proof}
		Work on \(\overline{\mathcal H}=\Hk\oplus\R\), with
		\[
		\norm{(f,t)}_{\overline{\mathcal H}}^2
		=
		\norm{f}_{\Hk}^2+t^2,
		\qquad
		(f,t)(x)=f(x).
		\]
		
		We continue to write \(T_k\), \(T_n\), and \(k_x\) for their extensions
		\(T_k\oplus0\), \(T_n\oplus0\), and \((k_x,0)\).  Their effective dimension
		is still \(\mathcal N(\delta^2)\).
		
		Consider
		\begin{equation}
			\label{eq:ndat-auxiliary-objective}
			\frac1n\sum_{i=1}^n
			\left(
			|Y_i-f(X_i)|+\delta\norm{(f,t)}_{\overline{\mathcal H}}
			\right)^2
			-
			2\delta\widetilde m\norm{(f,t)}_{\overline{\mathcal H}}.
		\end{equation}

		For every feasible pair \((f,r)\), the choice
		\(t=(r^2-\norm{f}_{\Hk}^2)^{1/2}\) satisfies
		\(\norm{(f,t)}_{\overline{\mathcal H}}=r\).  Thus, let
		\((\widetilde f_\delta,\widetilde r_\delta)\) be any minimizing pair
		in problem~\eqref{eq:ndat-estimator}, and set
		\[
		\widehat w_\delta
		=
		\left(
		\widetilde f_\delta,
		\sqrt{\widetilde r_\delta^2-\norm{\widetilde f_\delta}_{\Hk}^2}
		\right).
		\]
		The constraint in problem~\eqref{eq:ndat-estimator} makes this element
		well defined, and the objective value in
		\eqref{eq:ndat-auxiliary-objective} equals that of the minimizing pair.

		To verify global optimality on \(\overline{\mathcal H}\), let \(P_n\)
		denote the orthogonal projection in \(\Hk\) onto \(S_n\).  For every
		\((h,s)\in\overline{\mathcal H}\),
		\[
		\left(P_nh,\norm{(h,s)}_{\overline{\mathcal H}}\right)
		\]
		is feasible for problem~\eqref{eq:ndat-estimator}.  By the reproducing
		property, \(P_nh(X_i)=h(X_i)\) for every \(i\), so this pair has the same objective
		value as \((h,s)\) in \eqref{eq:ndat-auxiliary-objective}.  Hence
		\(\widehat w_\delta\) is a global minimizer of
		\eqref{eq:ndat-auxiliary-objective}.

		For the population comparison, replace \(\widetilde m\) by
		\(\E|\varepsilon|\).  After removing the constant
		\(\E[\varepsilon^2]\), it equals
		\[
		\norm{f-f^*}_{L^2(\mu)}^2
		+\delta^2\norm{(f,t)}_{\overline{\mathcal H}}^2
		+2\delta\norm{(f,t)}_{\overline{\mathcal H}}
		\left(\E|Y-f(X)|-\E|\varepsilon|\right).
		\]
		Lemma~\ref{lem:absolutelossincrease} shows that, for fixed \(f\), this
		expression is nondecreasing in
		\(\norm{(f,t)}_{\overline{\mathcal H}}\).  It is therefore minimized at
		\(t=0\), with population minimizer
		\[
		w_\delta^*=(\widetilde f_\delta^*,0).
		\]
		
		We next specify the event on which the empirical and population quantities
		are compared.  The bounded-kernel assumption gives
		\(\mathcal N(\delta^2)\le\delta^{-2}\).  Choosing the constant \(C\) in the
		theorem sufficiently large, the sample-size condition implies
		\[
		2\left[
		\sqrt{\frac{\log(64/(\eta\delta^2))}{n\delta^2}}
		+
		\frac{\log(64/(\eta\delta^2))}{n\delta^2}
		\right]
		\le\frac18.
		\]
		
		The proof of Lemma~\ref{lem:concentration-event} remains valid on
		\(\overline{\mathcal H}\): the feature vector is \((k_x,0)\), the
		covariance operator is \(T_k\oplus0\), and the effective dimension is
		unchanged.  Apply that lemma with
		\(f=\widetilde f_\delta^*\),
		\(h=f^*-\widetilde f_\delta^*\), and the selection from
		Lemma~\ref{lem:ndat-population-foc}.  The resulting event depends only on
		the fitting sample and has probability at least \(1-\eta\).  The same choice
		of \(C\) makes \(A_{\delta,n,\eta}\) small
		enough for the absorption in \eqref{eq:ndat-absorption-bound}.
		
		We now derive the identity that compares the empirical and population
		minimizers.  If \(\widehat w_\delta\ne0\), the convex part of
		\eqref{eq:ndat-auxiliary-objective} has a convex subdifferential, while
		\(-2\delta\widetilde m\norm{\cdot}_{\overline{\mathcal H}}\) is differentiable
		at \(\widehat w_\delta\).  Fermat's rule therefore gives
		\(\widehat v_\delta\in
		\partial\norm{\widehat w_\delta}_{\overline{\mathcal H}}\) and
		\[
		\widehat s_i
		\in
		\left.\partial_t|Y_i-t|\right|_{t=\widetilde f_\delta(X_i)}
		\]
		such that
		\begin{equation}
			\label{eq:ndat-empirical-foc}
			\begin{aligned}
				(T_n+\delta^2I)\widehat w_\delta
				=
				\frac1n\sum_{i=1}^nY_i k_{X_i}-
				\delta\left(
				\frac1n\sum_{i=1}^n|Y_i-\widetilde f_\delta(X_i)|-\widetilde m
				\right)\widehat v_\delta-
				\delta\norm{\widehat w_\delta}_{\overline{\mathcal H}}
				\frac1n\sum_{i=1}^n\widehat s_i k_{X_i}.
			\end{aligned}
		\end{equation}
		If \(\widehat w_\delta=0\), choose any
		\(\widehat s_i\in\left.\partial_t|Y_i-t|\right|_{t=0}\).
		Optimality in the direction \((0,1)\) first shows that
		\(n^{-1}\sum_i|Y_i|-\widetilde m\ge0\).  Optimality in directions
		\((h,0)\) and \((-h,0)\), followed by RKHS duality, then gives
		\(\widehat v_\delta\in\partial\norm{0}_{\overline{\mathcal H}}\) for
		which \eqref{eq:ndat-empirical-foc} also holds.
		
		Embed \(v_\delta\) from Lemma~\ref{lem:ndat-population-foc} into
		\(\overline{\mathcal H}\).  The corresponding population identity is
		\begin{equation}
			\label{eq:ndat-population-foc-rearranged}
			\begin{aligned}
				(T_k+\delta^2I)w_\delta^*
				=
				T_kf^*
				-
				\delta\left(
				\E|Y-\widetilde f_\delta^*(X)|-\E|\varepsilon|
				\right)v_\delta -
				\delta\norm{w_\delta^*}_{\overline{\mathcal H}}
				T_k\left(
				\E_\varepsilon[
				\widetilde s_\delta(\cdot,\varepsilon)]
				\right).
			\end{aligned}
		\end{equation}
		
		Subtracting \eqref{eq:ndat-population-foc-rearranged} from
		\eqref{eq:ndat-empirical-foc} and pairing with
		\(\widehat w_\delta-w_\delta^*\) gives
		\begin{equation}
			\label{eq:ndat-error-identity}
			\begin{aligned}
				&\frac1n\sum_{i=1}^n
				(\widetilde f_\delta(X_i)-\widetilde f_\delta^*(X_i))^2
				+
				\delta^2\norm{\widehat w_\delta-w_\delta^*}_{\overline{\mathcal H}}^2\\
				&=
				\inner{\frac1n\sum_{i=1}^n\varepsilon_i k_{X_i}}
				{\widehat w_\delta-w_\delta^*}_{\overline{\mathcal H}}
				+
				\inner{(T_n-T_k)(f^*-\widetilde f_\delta^*)}
				{\widehat w_\delta-w_\delta^*}_{\overline{\mathcal H}}\\
				&\quad-
				\delta\left(
				\frac1n\sum_{i=1}^n|Y_i-\widetilde f_\delta(X_i)|-\widetilde m
				\right)
				\inner{\widehat v_\delta}
				{\widehat w_\delta-w_\delta^*}_{\overline{\mathcal H}}\\
				&\quad+
				\delta\left(
				\E|Y-\widetilde f_\delta^*(X)|-\E|\varepsilon|
				\right)
				\inner{v_\delta}
				{\widehat w_\delta-w_\delta^*}_{\overline{\mathcal H}}\\
				&\quad-
				\delta\Bigg\langle
				\norm{\widehat w_\delta}_{\overline{\mathcal H}}
				\frac1n\sum_{i=1}^n\widehat s_i k_{X_i}
				-
				\norm{w_\delta^*}_{\overline{\mathcal H}}
				T_k\E_\varepsilon[
				\widetilde s_\delta(\cdot,\varepsilon)],
				\widehat w_\delta-w_\delta^*
				\Bigg\rangle_{\overline{\mathcal H}}.
			\end{aligned}
		\end{equation}
		
		The left-hand side of \eqref{eq:ndat-error-identity} is the empirical squared
		prediction difference plus the squared product-space difference weighted by
		\(\delta^2\).  The right-hand side consists of two centered linear terms, two
		norm-subgradient terms, and one absolute-loss subgradient term.
		
		We now bound the terms on the right-hand side of
		\eqref{eq:ndat-error-identity}.  The quadratic-form bound in
		Lemma~\ref{lem:concentration-event} gives
		\begin{equation}
			\label{eq:ndat-norm-comparison}
			\begin{aligned}
				\norm{\widetilde f_\delta-\widetilde f_\delta^*}_{L^2(\mu)}^2
				+
				\delta^2\norm{\widehat w_\delta-w_\delta^*}_{\overline{\mathcal H}}^2 \le
				\frac87\left[
				\frac1n\sum_{i=1}^n
				(\widetilde f_\delta(X_i)-\widetilde f_\delta^*(X_i))^2
				+
				\delta^2\norm{\widehat w_\delta-w_\delta^*}_{\overline{\mathcal H}}^2
				\right].
			\end{aligned}
		\end{equation}
		
		The noise and centered-product bounds in Lemma~\ref{lem:concentration-event} then imply
		\[
		\begin{aligned}
			&\left|
			\inner{\frac1n\sum_{i=1}^n\varepsilon_i k_{X_i}}
			{\widehat w_\delta-w_\delta^*}_{\overline{\mathcal H}}
			\right|
			+
			\left|
			\inner{(T_n-T_k)(f^*-\widetilde f_\delta^*)}
			{\widehat w_\delta-w_\delta^*}_{\overline{\mathcal H}}
			\right|\\
			&\qquad\le
			C A_{\delta,n,\eta}
			\left[
			\frac1n\sum_{i=1}^n
			(\widetilde f_\delta(X_i)-\widetilde f_\delta^*(X_i))^2
			+
			\delta^2\norm{\widehat w_\delta-w_\delta^*}_{\overline{\mathcal H}}^2
			\right]^{1/2}.
		\end{aligned}
		\]
		
		The two norm-subgradient terms in
		\eqref{eq:ndat-error-identity} have the exact decomposition
		\[
		\begin{aligned}
		&-\delta\left(
		\frac1n\sum_{i=1}^n|Y_i-\widetilde f_\delta(X_i)|-\widetilde m
		\right)
		\inner{\widehat v_\delta}
		{\widehat w_\delta-w_\delta^*}_{\overline{\mathcal H}}\\
		&\quad+
		\delta\left(
		\E|Y-\widetilde f_\delta^*(X)|-\E|\varepsilon|
		\right)
		\inner{v_\delta}
		{\widehat w_\delta-w_\delta^*}_{\overline{\mathcal H}}\\
		&=
		-\delta\left(
		\frac1n\sum_{i=1}^n|Y_i-\widetilde f_\delta(X_i)|
		-
		\frac1n\sum_{i=1}^n|Y_i-\widetilde f_\delta^*(X_i)|
		\right)
		\inner{\widehat v_\delta}
		{\widehat w_\delta-w_\delta^*}_{\overline{\mathcal H}}\\
		&\quad-
		\delta\left[
		\frac1n\sum_{i=1}^n|Y_i-\widetilde f_\delta^*(X_i)|
		-\widetilde m
		-\E|Y-\widetilde f_\delta^*(X)|
		+\E|\varepsilon|
		\right]
		\inner{\widehat v_\delta}
		{\widehat w_\delta-w_\delta^*}_{\overline{\mathcal H}}\\
		&\quad-
		\delta\left(
		\E|Y-\widetilde f_\delta^*(X)|-\E|\varepsilon|
		\right)
		\inner{\widehat v_\delta-v_\delta}
		{\widehat w_\delta-w_\delta^*}_{\overline{\mathcal H}}
		\end{aligned}
		\]
		The last line is nonpositive because
		Lemma~\ref{lem:absolutelossincrease} makes its scalar coefficient
		nonnegative and monotonicity of the norm subdifferential gives
		\[
		\inner{\widehat v_\delta-v_\delta}
		{\widehat w_\delta-w_\delta^*}_{\overline{\mathcal H}}\ge0.
		\]
		The Lipschitz
		property of the empirical absolute loss implies
		\[
		\begin{aligned}
			&\delta\left|
			\frac1n\sum_{i=1}^n|Y_i-\widetilde f_\delta(X_i)|
			-
			\frac1n\sum_{i=1}^n|Y_i-\widetilde f_\delta^*(X_i)|
			\right|
			\norm{\widehat w_\delta-w_\delta^*}_{\overline{\mathcal H}}\\
			&\quad\le
			\frac12\left[
			\frac1n\sum_{i=1}^n
			(\widetilde f_\delta(X_i)-\widetilde f_\delta^*(X_i))^2
			+
			\delta^2\norm{\widehat w_\delta-w_\delta^*}_{\overline{\mathcal H}}^2
			\right].
		\end{aligned}
		\]
		
		The fixed-function absolute-loss bound and
		\eqref{eq:ndat-norm-comparison} control the remaining component by
		\[
		C\left(
		A_{\delta,n,\eta}
		+
		|\widetilde m-\E|\varepsilon||
		\right)
		\left[
		\frac1n\sum_{i=1}^n
		(\widetilde f_\delta(X_i)-\widetilde f_\delta^*(X_i))^2
		+
		\delta^2\norm{\widehat w_\delta-w_\delta^*}_{\overline{\mathcal H}}^2
		\right]^{1/2}.
		\]
		These two inequalities control the norm-subgradient terms in
		\eqref{eq:ndat-error-identity}.
		
		The sign term in \eqref{eq:ndat-error-identity} equals
		\[
		\begin{aligned}
		&-\delta\norm{\widehat w_\delta}_{\overline{\mathcal H}}
		\frac1n\sum_{i=1}^n
		\left(\widehat s_i-\widetilde s_\delta(X_i,\varepsilon_i)\right)
		\left(\widetilde f_\delta(X_i)-\widetilde f_\delta^*(X_i)\right)\\
		&\quad-
		\delta\left(
		\norm{\widehat w_\delta}_{\overline{\mathcal H}}
		-\norm{w_\delta^*}_{\overline{\mathcal H}}
		\right)
		\frac1n\sum_{i=1}^n
		\widetilde s_\delta(X_i,\varepsilon_i)
		\left(\widetilde f_\delta(X_i)-\widetilde f_\delta^*(X_i)\right)\\
		&\quad-
		\delta\norm{w_\delta^*}_{\overline{\mathcal H}}
		\Bigg[
		\frac1n\sum_{i=1}^n
		\widetilde s_\delta(X_i,\varepsilon_i)
		\left(\widetilde f_\delta(X_i)-\widetilde f_\delta^*(X_i)\right)\\
		&\hspace{48mm}-
		\E\left[
		\widetilde s_\delta(X,\varepsilon)
		\left(\widetilde f_\delta(X)-\widetilde f_\delta^*(X)\right)
		\right]
		\Bigg].
		\end{aligned}
		\]
		The first line is nonpositive by monotonicity of the absolute-loss
		subdifferential.  The remaining two lines are bounded using
		\eqref{eq:empirical-sign-bound},
		\eqref{eq:ndat-sign-selection}, and
		Theorem~\ref{thm:ndat-approx}.  Together with
		\eqref{eq:ndat-norm-comparison}, they give
		\begin{equation}
			\label{eq:ndat-sign-bound}
			\begin{aligned}
				&-\delta\Bigg\langle
				\norm{\widehat w_\delta}_{\overline{\mathcal H}}
				\frac1n\sum_{i=1}^n\widehat s_i k_{X_i}
				-
				\norm{w_\delta^*}_{\overline{\mathcal H}}
				T_k\E_\varepsilon[
				\widetilde s_\delta(\cdot,\varepsilon)],
				\widehat w_\delta-w_\delta^*
				\Bigg\rangle_{\overline{\mathcal H}}\\
				&\quad\le
				C\left(
				A_{\delta,n,\eta}
				+
				\kappa R\delta^{\min\{2\alpha+1,2\}}
				\right)
				\left[
				\frac1n\sum_{i=1}^n
				(\widetilde f_\delta(X_i)-\widetilde f_\delta^*(X_i))^2
				+
				\delta^2\norm{\widehat w_\delta-w_\delta^*}_{\overline{\mathcal H}}^2
				\right]\\
				&\qquad+
				CR A_{\delta,n,\eta}
				\left[
				\frac1n\sum_{i=1}^n
				(\widetilde f_\delta(X_i)-\widetilde f_\delta^*(X_i))^2
				+
				\delta^2\norm{\widehat w_\delta-w_\delta^*}_{\overline{\mathcal H}}^2
				\right]^{1/2}.
			\end{aligned}
		\end{equation}
		The first term on the right side of \eqref{eq:ndat-sign-bound} follows from
		\[
		\begin{aligned}
			\left|
			\frac1n\sum_{i=1}^n
			\widetilde s_\delta(X_i,\varepsilon_i)
			(\widetilde f_\delta(X_i)-\widetilde f_\delta^*(X_i))
			\right|
			&\le
			\left(
			4A_{\delta,n,\eta}
			+
			4\kappa\norm{\widetilde f_\delta^*-f^*}_{L^2(\mu)}
			\right)\\
			&\quad\times
			\left(
			\norm{\widetilde f_\delta-\widetilde f_\delta^*}_{L^2(\mu)}^2
			+
			\delta^2\norm{\widehat w_\delta-w_\delta^*}_{\overline{\mathcal H}}^2
			\right)^{1/2},
		\end{aligned}
		\]
		and from
		\[
		\delta\left|
		\norm{\widehat w_\delta}_{\overline{\mathcal H}}
		-
		\norm{w_\delta^*}_{\overline{\mathcal H}}
		\right|
		\le
		\left(
		\norm{\widetilde f_\delta-\widetilde f_\delta^*}_{L^2(\mu)}^2
		+
		\delta^2\norm{\widehat w_\delta-w_\delta^*}_{\overline{\mathcal H}}^2
		\right)^{1/2}.
		\]
		The second term in \eqref{eq:ndat-sign-bound} uses
		\(\norm{w_\delta^*}_{\overline{\mathcal H}}\le R\).
		
		Substituting these bounds into
		\eqref{eq:ndat-error-identity} gives
		\begin{equation}
			\begin{aligned}\label{eq:ndat-absorption-bound}
				&\frac1n\sum_{i=1}^n
				(\widetilde f_\delta(X_i)-\widetilde f_\delta^*(X_i))^2
				+
				\delta^2\norm{\widehat w_\delta-w_\delta^*}_{\overline{\mathcal H}}^2\\
				&\quad\le
				\left[
				\frac12+
				C A_{\delta,n,\eta}
				+
				C\kappa R\delta^{\min\{2\alpha+1,2\}}
				\right]
				\left[
				\frac1n\sum_{i=1}^n
				(\widetilde f_\delta(X_i)-\widetilde f_\delta^*(X_i))^2
				+
				\delta^2\norm{\widehat w_\delta-w_\delta^*}_{\overline{\mathcal H}}^2
				\right]\\
				&\qquad+
				C\left(
				A_{\delta,n,\eta}
				+
				|\widetilde m-\E|\varepsilon||
				\right)
				\left[
				\frac1n\sum_{i=1}^n
				(\widetilde f_\delta(X_i)-\widetilde f_\delta^*(X_i))^2
				+
				\delta^2\norm{\widehat w_\delta-w_\delta^*}_{\overline{\mathcal H}}^2
				\right]^{1/2}.
			\end{aligned}
		\end{equation}
		
		The sample-size and radius conditions make the coefficient in square
		brackets in \eqref{eq:ndat-absorption-bound} at most \(3/4\).  Moving this
		term to the left gives
		\[
		\begin{aligned}
		&\frac14\left[
		\frac1n\sum_{i=1}^n
		(\widetilde f_\delta(X_i)-\widetilde f_\delta^*(X_i))^2
		+
		\delta^2\norm{\widehat w_\delta-w_\delta^*}_{\overline{\mathcal H}}^2
		\right]
		\\
		&\qquad\le
		C\left(
		A_{\delta,n,\eta}
		+
		|\widetilde m-\E|\varepsilon||
		\right)
		\left[
		\frac1n\sum_{i=1}^n
		(\widetilde f_\delta(X_i)-\widetilde f_\delta^*(X_i))^2
		+
		\delta^2\norm{\widehat w_\delta-w_\delta^*}_{\overline{\mathcal H}}^2
		\right]^{1/2}.
		\end{aligned}
		\]
		If the square root is zero, the estimation bound is immediate.  Otherwise,
		divide by it and apply \eqref{eq:ndat-norm-comparison} to obtain
		\begin{equation}
			\label{eq:ndat-auxiliary-estimation}
			\norm{\widetilde f_\delta-\widetilde f_\delta^*}_{L^2(\mu)}
			\lesssim
			A_{\delta,n,\eta}
			+
			\left|\widetilde m-\E|\varepsilon|\right|.
		\end{equation}
		
		Since \(\delta^2\le\norm{T_k}_{\mathrm{op}}\),
		\(\mathcal N(\delta^2)\ge1/2\), and hence
		\(\mathcal N(\delta^2)+1\lesssim\mathcal N(\delta^2)\).  The sample-size
		condition also gives
		\[
		\frac{\log(64/(\eta\delta^2))}{n\delta}
		\lesssim
		\sqrt{
			\frac{\mathcal N(\delta^2)
				\log(64/(\eta\delta^2))}{n}
		}.
		\]
		Thus
		\[
		A_{\delta,n,\eta}
		\lesssim
		\sqrt{
			\frac{\mathcal N(\delta^2)
				\log(2/(\eta\delta^2))}{n}
		}.
		\]
		
		Combining this estimate with \eqref{eq:ndat-auxiliary-estimation} gives
		\[
		\norm{\widetilde f_\delta-\widetilde f_\delta^*}_{L^2(\mu)}
		\lesssim
		\sqrt{
			\frac{\mathcal N(\delta^2)\log(2/(\eta\delta^2))}{n}
		}
		+
		|\widetilde m-\E|\varepsilon||.
		\]
		The triangle inequality and Theorem~\ref{thm:ndat-approx} prove the
		prediction bound.
	\end{proof}
	\subsection{Proof of Corollary~\ref{cor:ndat-two-stage-rate}: \nameref*{cor:ndat-two-stage-rate}}
	\begin{proof}
		\proofstep{\textbf{Step 1: preliminary KRR estimator.}}
		Use the finite-sample inequality in
		\citet{caponnetto2007optimal}. In the notation of that result, the source
		and eigenvalue exponents are \(2\alpha+1\) and \(\beta\), respectively.
		Assumptions~\ref{ass:model} and \ref{ass:noise-tail}, together with the
		independence of \(X\) and \(\varepsilon\), imply its conditional Bernstein
		moment condition.  With confidence parameter
		\(\eta=n^{-q}/2\) and the value of \(\lambda_a\) in the corollary, the
		finite-sample requirement reduces to
		\[
		N_a^{\frac{2\beta\alpha}{\beta(2\alpha+1)+1}}
		\ge C\log^2\left(\frac6\eta\right).
		\]
		This holds for all sufficiently large \(n\) because \(N_a\asymp n\).
		The cited inequality then gives
		\begin{equation}
			\label{eq:preliminary-krr-prediction-rate}
			\norm{\widehat f^{\rm KRR}-f^*}_{L^2(\mu)}
			\lesssim
			N_a^{-\frac{\beta(\alpha+1/2)}
				{2\beta(\alpha+1/2)+1}}
			\log\left(\frac6\eta\right)
		\end{equation}
		with probability at least \(1-\eta\), uniformly over
		\(\norm{g}_{\Hk}\le R\).  Since \(\eta=n^{-q}/2\), the logarithm is of
		order \(\log n\).
		
		Write \(Y_j^a=f^*(X_j^a)+\varepsilon_j^a\).
		Equation~\eqref{eq:noise-exponential-tail} and a union bound give
		\[
		\max_{1\le j\le N_a}|\varepsilon_j^a|
		\lesssim\log n
		\]
		with probability at least \(1-n^{-q}/2\).  Comparing the KRR objective at
		\(\widehat f^{\rm KRR}\) and \(f^*\) gives
		\[
		\lambda_a\norm{\widehat f^{\rm KRR}}_{\Hk}^2
		\le
		\frac1{N_a}\sum_{j=1}^{N_a}(\varepsilon_j^a)^2
		+\lambda_a\norm{f^*}_{\Hk}^2,
		\]
		and therefore
		\begin{equation}
			\label{eq:preliminary-krr-norm-rate}
			\norm{\widehat f^{\rm KRR}}_{\Hk}
			\lesssim
			\lambda_a^{-1/2}\log n.
		\end{equation}
		Thus \eqref{eq:preliminary-krr-prediction-rate} and
		\eqref{eq:preliminary-krr-norm-rate} hold simultaneously with probability
		at least \(1-n^{-q}\).
		
		\proofstep{\textbf{Step 2: noise estimation.}}
		Condition on the data used to compute \(\widehat f^{\rm KRR}\).  For an
		observation \((X,Y)\) in the independent sample used to compute
		\(\widetilde m\),
		\[
		Y-\widehat f^{\rm KRR}(X)
		=
		\varepsilon+(f^*-\widehat f^{\rm KRR})(X).
		\]
		Lemma~\ref{lem:absolutelossincrease} gives
		\[
		\begin{aligned}
			0
			&\le
			\E|Y-\widehat f^{\rm KRR}(X)|-\E|\varepsilon|\\
			&\le
			\kappa\norm{\widehat f^{\rm KRR}-f^*}_{L^2(\mu)}^2.
		\end{aligned}
		\]
		Moreover,
		\[
		\left|
		|\varepsilon+(f^*-\widehat f^{\rm KRR})(X)|
		-
		|(f^*-\widehat f^{\rm KRR})(X)|
		\right|
		\le|\varepsilon|,
		\]
		while \(|(f^*-\widehat f^{\rm KRR})(X)|\) is bounded by
		\(\norm{f^*}_{\Hk}+\norm{\widehat f^{\rm KRR}}_{\Hk}\) and has variance at
		most \(\norm{\widehat f^{\rm KRR}-f^*}_{L^2(\mu)}^2\).
		Apply scalar Bernstein separately, after centering, to
		\[
		|\varepsilon+(f^*-\widehat f^{\rm KRR})(X)|
		-|(f^*-\widehat f^{\rm KRR})(X)|
		\]
		and to
		\(|(f^*-\widehat f^{\rm KRR})(X)|\).
		For each fixed preliminary KRR estimator, scalar Bernstein gives, with probability at
		least \(1-n^{-q}\) over this independent sample,
		\[
		\begin{aligned}
			&\left|
			\widetilde m-\E|Y-\widehat f^{\rm KRR}(X)|
			\right|\\
			&\quad\lesssim
			\left(
			1+\norm{\widehat f^{\rm KRR}-f^*}_{L^2(\mu)}
			\right)
			\sqrt{\frac{\log n}{N_e}}
			+
			\left(
			1+\norm{\widehat f^{\rm KRR}}_{\Hk}
			\right)
			\frac{\log n}{N_e}.
		\end{aligned}
		\]
		Substitution of
		\eqref{eq:preliminary-krr-prediction-rate}--\eqref{eq:preliminary-krr-norm-rate},
		together with \(N_a\asymp N_e\asymp n\), gives
		\[
		\begin{aligned}
			\left|\widetilde m-\E|\varepsilon|\right|
			\lesssim{}&
			n^{-\frac{2\beta(\alpha+1/2)}
				{2\beta(\alpha+1/2)+1}}(\log n)^2
			+
			n^{-1/2}\sqrt{\log n}\\
			&+
			n^{-1+\frac{\beta}
				{2\{2\beta(\alpha+1/2)+1\}}}(\log n)^2.
		\end{aligned}
		\]
		The first exponent is twice the desired exponent, the desired exponent is
		strictly smaller than \(1/2\), and the gap between the third exponent and
		the desired exponent is
		\[
		\frac{\beta\alpha+1}{2\beta(\alpha+1/2)+1}>0.
		\]
		Consequently, with probability at least \(1-2n^{-q}\),
		\begin{equation}
			\label{eq:absolute-noise-two-stage-rate}
			\left|\widetilde m-\E|\varepsilon|\right|
			\lesssim
			n^{-\frac{\beta(\alpha+1/2)}
				{2\beta(\alpha+1/2)+1}}
			\sqrt{\log n}.
		\end{equation}
		
		\proofstep{\textbf{Step 3: final estimator.}}
		Take \(\eta=n^{-q}\) and \(\delta=\delta_n\).
		Then
		\[
		\kappa R\delta_n\to0,
		\qquad
		\frac{n\delta_n^2}
		{\log(2/(\eta\delta_n^2))}
		\to\infty,
		\]
		so the conditions of Theorem~\ref{thm:ndat-generalization} hold for all
		sufficiently large \(n\).  Assumption~\ref{ass:eigendecay} gives
		\(\mathcal N(\delta_n^2)\lesssim\delta_n^{-2/\beta}\), and hence
		\[
		\delta_n^{2\alpha+1}
		=
		n^{-\frac{\beta(\alpha+1/2)}
			{2\beta(\alpha+1/2)+1}},
		\]
		and
		\[
		\sqrt{
			\frac{\mathcal N(\delta_n^2)
				\log(2/(\eta\delta_n^2))}{n}
		}
		\lesssim
		n^{-\frac{\beta(\alpha+1/2)}
			{2\beta(\alpha+1/2)+1}}
		\sqrt{\log n}.
		\]
		Combining these bounds with
		\eqref{eq:absolute-noise-two-stage-rate} and taking a union bound proves the
		stated rate with probability at least \(1-3n^{-q}\).
	\end{proof}
	
	\subsection{Proof of Proposition~\ref{prop:ndat-robustness-cost}: \nameref*{prop:ndat-robustness-cost}}
	
	\begin{proof}
		Optimality of \(\widehat f_\delta\) for problem~\eqref{problem} gives
		\[
		\begin{aligned}
			&\frac1n\sum_{i=1}^n
			\left(
			|Y_i-\widehat f_\delta(X_i)|
			+\delta\norm{\widehat f_\delta}_{\Hk}
			\right)^2
			\\
			&\qquad\le
			\frac1n\sum_{i=1}^n
			\left(
			|Y_i-\widetilde f_\delta(X_i)|
			+\delta\norm{\widetilde f_\delta}_{\Hk}
			\right)^2.
		\end{aligned}
		\]
		This proves the first inequality in
		Proposition~\ref{prop:ndat-robustness-cost}.
		
		The pair
		\((\widehat f_\delta,\norm{\widehat f_\delta}_{\Hk})\) is feasible in
		problem~\eqref{eq:ndat-estimator}.  Optimality of
		\((\widetilde f_\delta,\widetilde r_\delta)\) therefore gives
		\[
		\begin{aligned}
			&\frac1n\sum_{i=1}^n
			\left(
			|Y_i-\widetilde f_\delta(X_i)|
			+\delta\widetilde r_\delta
			\right)^2
			-2\delta\widetilde m\widetilde r_\delta
			\\
			&\qquad\le
			\frac1n\sum_{i=1}^n
			\left(
			|Y_i-\widehat f_\delta(X_i)|
			+\delta\norm{\widehat f_\delta}_{\Hk}
			\right)^2
			-2\delta\widetilde m\norm{\widehat f_\delta}_{\Hk}.
		\end{aligned}
		\]
		Because
		\(\norm{\widetilde f_\delta}_{\Hk}\le\widetilde r_\delta\), replacing
		\(\widetilde r_\delta\) by
		\(\norm{\widetilde f_\delta}_{\Hk}\) can only decrease the squared term.
		Rearranging proves the upper bound in the proposition.  Together with the
		first inequality, it also shows that the upper bound is nonnegative.
	\end{proof}
	
	\subsection{Proof of Proposition~\ref{prop:ndat-convex-reformulation}: \nameref*{prop:ndat-convex-reformulation}}
	
	\begin{proof}
		The feasible set
		\(\{(a,r):r\ge0,\ a^\top Ka\le r^2\}\) is a second-order-cone set and is
		therefore convex.  For each \(i\), the map
		\((a,r)\mapsto |Y_i-(Ka)_i|+\delta r\) is nonnegative and convex on this
		set, so its square is convex.  The remaining term
		\(-2\delta\widetilde m r\) is linear.  Thus
		problem~\eqref{eq:ndat-tractable} is a convex program.
		
		The convex problem has a minimizer.  Components of \(a\) in
		\(\ker(K)\) change neither \(Ka\) nor \(a^\top Ka\), so \(a\) may be
		restricted to \(\operatorname{range}(K)\).  On this finite-dimensional
		subspace, the feasible set is closed, and the objective is coercive because
		it is bounded below by
		\(\delta^2r^2-2\delta\widetilde m r\), while bounded \(r\) also bounds
		\(a\) on \(\operatorname{range}(K)\).
		
		Let \((f,r)\) be feasible in problem~\eqref{eq:ndat-estimator}.  Since
		\(f\in S_n\), there is an \(a\in\R^n\) such that
		\(f=\sum_{j=1}^n a_jk_{X_j}\).  The reproducing property gives
		\[
		f(X_i)=(Ka)_i,
		\qquad
		\norm{f}_{\Hk}^2=a^\top Ka.
		\]
		Thus, \((a,r)\) is feasible in problem~\eqref{eq:ndat-tractable}, and the two
		objective values are identical.
		
		Conversely, let \((a,r)\) be feasible in
		problem~\eqref{eq:ndat-tractable} and set
		\(f=\sum_{j=1}^n a_jk_{X_j}\).  Then \(f\in S_n\),
		\(f(X_i)=(Ka)_i\), and
		\(\norm{f}_{\Hk}^2=a^\top Ka\le r^2\).  Thus \((f,r)\) is feasible in
		problem~\eqref{eq:ndat-estimator} with the same objective value.  The two
		maps preserve feasibility and objective values in both directions, so the
		problems have the same infimum and carry minimizers to minimizers.
	\end{proof}
	
	\subsection{Proof of Corollary~\ref{cor:breakdown-radius}: \nameref*{cor:breakdown-radius}}
	
	\begin{proof}
		Independence and \(\E\varepsilon=0\) give, for every \(h\in\Hk\),
		\[
		\E[Yh(X)]
		=
		\E[f^*(X)h(X)]
		=
		\inner{T_kf^*}{h}_{\Hk}.
		\]
		Since
		\(\inner{T_kf^*}{f^*}_{\Hk}=\norm{f^*}_{L^2(\mu)}^2>0\), we have
		\(\norm{T_kf^*}_{\Hk}>0\) and \(\E|Y|>0\).
		
		For \(t\downarrow0\),
		\[
		\left|\E|Y-th(X)|-\E|Y|\right|
		\le t\norm{h}_{\Hk}.
		\]
		Expanding each objective along \(f=th\) therefore shows that their right
		directional derivatives at zero are, respectively,
		\begin{equation}
			\label{eq:breakdown-derivative-at}
			-2\inner{T_kf^*}{h}_{\Hk}
			+2\delta\E|Y|\norm{h}_{\Hk}
		\end{equation}
		and
		\begin{equation}
			\label{eq:breakdown-derivative-ndat}
			-2\inner{T_kf^*}{h}_{\Hk}
			+2\delta\left(\E|Y|-\E|\varepsilon|\right)\norm{h}_{\Hk}.
		\end{equation}
		
		\proofstep{\textbf{Threshold characterization.}}
		The adversarial training objective is convex.  The noise-debiased adversarial training
		objective is also convex: its
		pair formulation is jointly convex, and its derivative with respect to
		\(r\ge\norm{f}_{\Hk}\) is
		\[
		2\delta\left(
		\E|Y-f(X)|-\E|\varepsilon|+\delta r
		\right)\ge0
		\]
		by Lemma~\ref{lem:absolutelossincrease}.  Its partial minimum is therefore
		attained at \(r=\norm{f}_{\Hk}\).
		
		Convexity and RKHS duality applied to
		\eqref{eq:breakdown-derivative-at} give
		\[
		0\text{ is a minimizer}
		\quad\Longleftrightarrow\quad
		\norm{T_kf^*}_{\Hk}\le\delta\E|Y|.
		\]
		The same argument applied to \eqref{eq:breakdown-derivative-ndat} gives
		\[
		0\text{ is a minimizer}
		\quad\Longleftrightarrow\quad
		\norm{T_kf^*}_{\Hk}
		\le\delta\left(\E|Y|-\E|\varepsilon|\right).
		\]
		If either inequality is strict, the corresponding directional derivative is
		positive in every nonzero direction, so zero is the unique minimizer.  These
		two equivalences prove the threshold statements, including the convention
		\(\delta^{\rm ND}=\infty\).
		
		\proofstep{\textbf{Comparison of the thresholds.}}
		The definitions give
		\[
		\frac{\delta^{\rm ND}}{\delta^{\rm AT}}
		=
		\frac{\E|Y|}{\E|Y|-\E|\varepsilon|}.
		\]
		When \(\E|Y|=\E|\varepsilon|\), both sides are interpreted as infinity.
		Applying Lemma~\ref{lem:absolutelossincrease} with \(f=0\) gives
		\[
		0
		\le
		\E|Y|-\E|\varepsilon|
		\le
		\kappa\norm{f^*}_{L^2(\mu)}^2 ,
		\]
		which gives the stated lower bound and completes the proof.
	\end{proof}
	
	\section{Additional Numerical Experiments}
	This section reports two supporting synthetic experiments and the
	implementation details for the real-data study in the main text.

	\subsection{Sharpness of the approximation exponent}
	\label{sec:synth-candidate}
	Theorems~\ref{thm:full-approximation} and
	\ref{thm:full-at-matching-lower} give the sharp approximation order
	\(\delta^{(\alpha+1/2)/(\alpha+1)}\), uniformly over
	\(\norm{g}_{\Hk}\le1\).  We examine this order in the diagonal model used in
	the proof of Theorem~\ref{thm:full-at-matching-lower}, with
	\(\mathbb P(X=j)\propto j^{-\beta}\) and
	\(k(j,\ell)=\mathbf 1\{j=\ell\}\), for \(\beta=1.2\) and \(2\).  For each
	of \(14\) logarithmically spaced values of \(\delta\) from \(10^{-7}\) to
	\(10^{-1}\), we evaluate a logarithmic grid of single-coordinate source
	functions and \(400\) normalized random mixtures with \(\norm{g}_{\Hk}=1\), set
	\(f^*=T_k^{1/4}g\), use the bounded symmetric noise from that proof, and solve
	the population adversarial training problem.  We
	use \(200{,}000\) coordinates and report the largest approximation error over
	this finite collection because the theorem concerns the worst case over \(g\).

	\begin{figure}[!htbp]
		\centering
		\includegraphics[width=\textwidth]{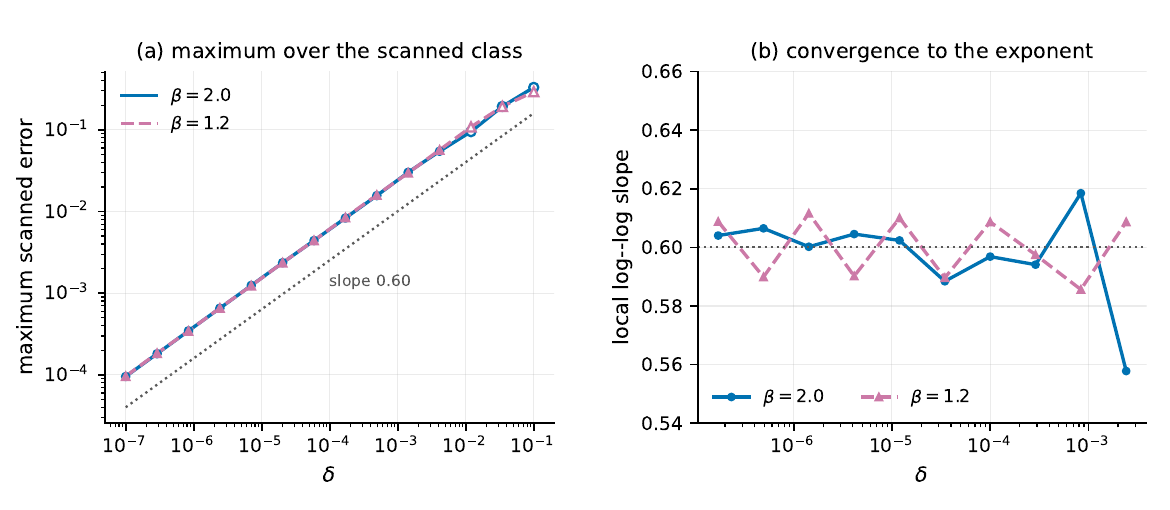}
		\caption{(a) Approximation errors under two eigenvalue decay rates; the
			dotted reference has slope
			\((\alpha+1/2)/(\alpha+1)=0.6\). (b) Corresponding local slopes.}
		\label{fig:candidate}
	\end{figure}

	As shown in Figure~\ref{fig:candidate}, the fitted slopes are \(0.592\) for
	\(\beta=2\) and \(0.591\) for \(\beta=1.2\).  Restricting the fit to radii whose
	maximizers are interior single-coordinate functions gives slopes \(0.599\) and
	\(0.600\), respectively, matching the theoretical exponent \(0.6\).

	\subsection{Sample size at a fixed robustness radius}
	\label{sec:synth-size}
	We fix \(\delta\) and increase the sample size from \(200\) to \(3200\).  The
	values \(\delta=0.15\) and \(0.40\) lie below and above the population
	collapse threshold \(\delta^{\rm AT}=0.2834\), respectively.  Each setting is
	repeated \(25\) times, and both procedures use all \(n\) observations.  As in
	Section~\ref{sec:synth-snr}, noise-debiased adversarial training uses the known
	Gaussian value of \(\E|\varepsilon|\), isolating the effect of increasing the
	sample size.  The population errors are obtained by log-scale interpolation
	of the population radius path computed from the fixed Monte Carlo sample of
	size \(50{,}000\) used in the main synthetic study.

	\begin{figure}[!htbp]
		\centering
		\includegraphics[width=0.92\textwidth]{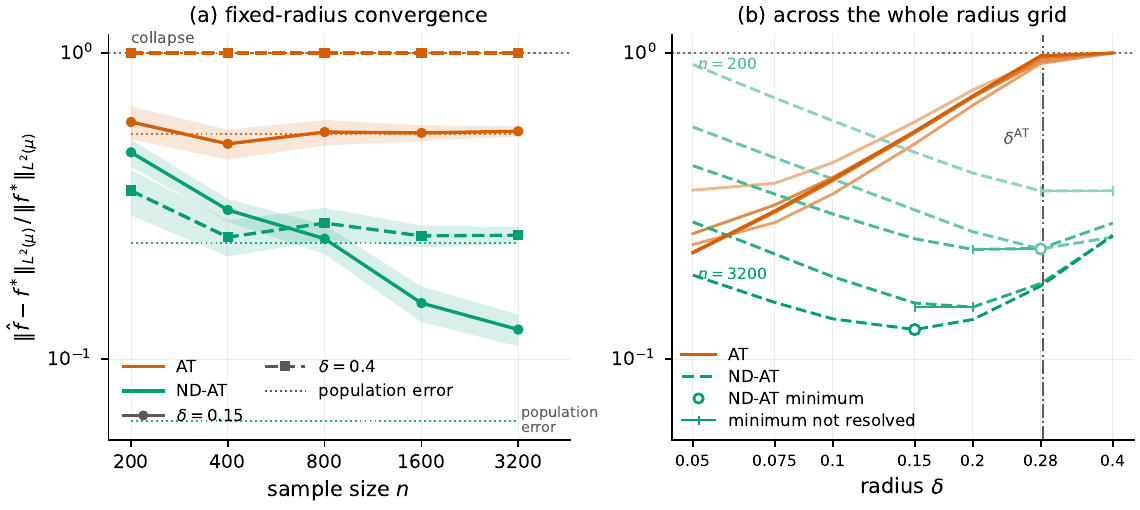}
		\caption{Prediction error as the sample size increases. (a) Relative
			prediction error below the collapse threshold (\(\delta=0.15\)) and above it
			(\(\delta=0.40\)), with bands of \(\pm2\) standard errors.  The
			procedure-specific dotted lines are the corresponding population errors,
			and the gray dotted
			line is the error of the zero predictor. (b) Relative prediction error as a
			function of \(\delta\) for each sample size; the vertical line marks the
			adversarial training collapse threshold.}
		\label{fig:samplesize}
	\end{figure}

	At \(\delta=0.40\), adversarial training returns the zero predictor for every
	sample size, so its relative error remains one.  The noise-debiased error
	decreases from \(0.355\) at \(n=200\) to \(0.254\) at \(n=3200\), close to
	its population error of \(0.240\).  At \(\delta=0.15\), the adversarial
	training error is already close to its population value at \(n=200\), whereas
	the noise-debiased error decreases from \(0.474\) to \(0.125\).  The results
	show that increasing the sample size reduces estimation error but does not
	remove the population effect of a fixed robustness radius.

	\subsection{Real-data implementation details}
	The same \(20\) random splits are used for all methods.  Each method receives
	\(800\) observations.  Noise-debiased adversarial training assigns \(240\)
	observations to preliminary KRR, \(120\) to estimating
	\(\E|\varepsilon|\), and \(440\) to the optimization in
	\eqref{eq:ndat-estimator}.  Adversarial training and both KRR procedures use
	all \(800\) observations.  The means and standard deviations computed from the
	preliminary observations are used to standardize the covariates and responses
	in every subsample, and the same observations determine the common radial basis
	function kernel bandwidth.  Five-fold cross-validation selects the preliminary
	KRR penalty.  The penalty for KRR(CV) is selected using the \(560\) observations
	assigned to noise estimation and optimization, after which KRR(CV) is computed
	from all \(800\) observations.
	
	For Diamonds, a random subset of \(4{,}000\) observations is drawn before
	each split.  Clean RMSE is computed on the full independent test set, which
	contains \(1{,}194\) observations for Communities and Crime and \(2{,}000\)
	for Abalone and Diamonds.  For all three data sets, the training radius ranges
	from \(0.05\) to \(0.50\) in increments of \(0.05\).  An estimator is
	classified as zero when its RKHS norm is below \(10^{-8}\).  The plug-in
	collapse boundary is computed on the \(120\) observations used to estimate
	\(\E|\varepsilon|\) by
	\[
	\frac{
	\norm{N_e^{-1}\sum_{j=1}^{N_e}
	\widehat f^{\rm KRR}(X_j^e)k_{X_j^e}}_{\Hk}}
	{N_e^{-1}\sum_{j=1}^{N_e}|Y_j^e|}.
	\]
	
	For the attack experiment, adversarial training, noise-debiased adversarial
	training, and KRR with \(\lambda=\delta^2\) are computed at \(\delta=0.1\).
	This feature-space training radius is below the estimated adversarial-training
	collapse boundary in every split.  KRR(CV) uses its cross-validated penalty.
	The input-space attack radius \(\rho\) is a separate evaluation parameter and
	is not identified with \(\delta\).  PGD uses input perturbation radii
	\(0,0.05,\ldots,0.50\), with \(30\) projected gradient steps
	of size \(\rho/8\).  One run starts at zero, and two start from random points at
	distance \(\rho/2\).  For each test observation, PGD separately approximates
	\[
	\sup_{\norm{\Delta x_{\rm cont}}_2\le\rho}
	\bigl(Y_i-f(X_i+\Delta x)\bigr)^2,
	\]
	and we record the largest squared error encountered across all iterations and
	starting points.  The reported PGD RMSE is the square root of the average of
	these observation-specific maxima.  The same subset of \(400\) test
	observations and the same random initial perturbations are used for every
	method.  The attack varies all \(99\) Communities and Crime
	covariates, the seven numerical Abalone covariates, and the six physical
	Diamonds covariates.  The categorical covariates and responses remain fixed,
	and no additional coordinatewise box constraints are imposed.

	\FloatBarrier

	\bibliographystyle{apalike}
	\bibliography{references}
	
\end{document}